
\documentclass[11pt,letterpaper]{article}
\usepackage[margin=1in]{geometry}





\usepackage[utf8]{inputenc} 
\usepackage[T1]{fontenc}    
\usepackage{hyperref}       
\usepackage{url}            
\usepackage{booktabs}       
\usepackage{amsfonts}       
\usepackage{nicefrac}       
\usepackage{microtype}      
\usepackage{color}
\usepackage{amsmath, amsthm, amssymb}
\usepackage{algorithm, algorithmic}
\usepackage{multirow}
\usepackage{pifont}
\usepackage{enumitem}
\usepackage{graphicx}


\renewcommand{\>}{\rightarrow}

\newcommand{\E}{\mathbf{E}}
\newcommand{\X}{\mathcal{X}}

\newcommand{\cA}{\mathcal{A}}

\newcommand{\cR}{{\R_+^K}}

\newcommand{\Y}{\mathcal{Y}}

\renewcommand{\P}{\mathbf{P}}
\newcommand{\R}{\mathbb{R}}

\newcommand{\ba}{{\mathbf{a}}}
\newcommand{\bb}{{\mathbf{b}}}

\newcommand{\cC}{{\mathcal{C}}}
\newcommand{\cL}{{\mathcal{L}}}
\newcommand{\cU}{{\mathcal{U}}}
\newcommand{\pL}{{\tilde{\cL}}}
\newcommand{\cH}{{\mathcal{H}}}
\newcommand{\cO}{\mathcal{O}}
\newcommand{\tO}{\tilde{\cO}}
\newcommand{\tL}{\tilde{\cL}}

\newcommand{\0}{{\mathbf{0}}}

\newcommand{\TPR}{\textrm{\textup{TPR}}}
\newcommand{\FPR}{\textrm{\textup{FPR}}}
\newcommand{\FNR}{\textrm{\textup{FNR}}}
\newcommand{\TNR}{\textrm{\textup{TNR}}}
\newcommand{\TP}{\textrm{\textup{TP}}}
\newcommand{\TN}{\textrm{\textup{TN}}}
\newcommand{\FP}{\textrm{\textup{FP}}}
\newcommand{\FN}{\textrm{\textup{FN}}}
\newcommand{\dom}{\textrm{\textup{dom}}}

\newcommand{\sign}{\textrm{\textup{sign}}}

\newcommand{\balpha}{\boldsymbol{\alpha}}
\newcommand{\bbeta}{\boldsymbol{\beta}}

\newcommand{\blambda}{{\lambda}}
\newcommand{\bxi}{{\xi}}
\newcommand{\bR}{\mathbf{R}}
\renewcommand{\r}{\mathbf{r}}
\newcommand{\bv}{\mathbf{v}}
\newcommand{\bu}{\mathbf{u}}
\newcommand{\tR}{\tilde{R}}
\newcommand{\tbR}{\tilde{\bR}}
\newcommand{\argmin}[1]{\underset{#1}{\operatorname{argmin}}}
\newcommand{\amin}[1]{\operatorname{argmin}_{#1}}
\newcommand{\argmax}[1]{\underset{#1}{\operatorname{argmax}}}
\newcommand{\amax}[1]{\operatorname{argmax}_{#1}}
\newcommand{\cmark}{\ding{51}}%
\newcommand{\xmark}{\ding{55}}%
\definecolor{light-blue}{rgb}{0.0, 0.6, 1.0}
\definecolor{dark-blue}{rgb}{0.0, 0.0, 0.7}
\newcommand{\best}[1]{{\color{dark-blue} #1}}
\newcommand{\sbest}[1]{{\color{light-blue} #1}}
\newtheorem{thm}{Theorem}
\newtheorem{lem}{Lemma}

\newtheorem{defn}{Definition}
\newtheorem{rem}{Remark}

\allowdisplaybreaks

\title{Optimizing Generalized Rate Metrics through Game Equilibrium} 

%

\author{%
  Harikrishna Narasimhan, Andrew Cotter, Maya Gupta \\
    Google Research\\
  {\{hnarasimhan, acotter, mayagupta\}@google.com 
  } \\
}

\date{}

\begin{document}

\maketitle

\begin{abstract}
We present a general framework for solving a large class of learning problems with non-linear functions of classification rates. This includes problems where one wishes to optimize a non-decomposable performance metric such as the F-measure or G-mean, and constrained training problems where the classifier needs to satisfy non-linear rate constraints such as predictive parity fairness, distribution divergences or churn ratios. We extend previous two-player game approaches for constrained optimization to a game between three players to decouple the classifier rates from the non-linear objective, and seek to find an equilibrium of the game. Our approach generalizes many existing algorithms, and makes possible new algorithms with more flexibility and tighter handling of non-linear rate constraints. We provide convergence guarantees for convex functions of rates, and show how our methodology can be extended to handle sums of ratios of rates. Experiments on different fairness tasks confirm the efficacy of our approach.
\end{abstract}

\section{Introduction}
\label{sec:intro}
In many real-world machine learning problems, the performance measures used to evaluate a classification model are non-linear functions of the classifier's prediction rates. Examples include the F-measure, G-mean and H-mean used in class-imbalanced classification tasks \cite{Lewis91,Kim+13,Sun+06,WangYao12,Lawrence+98}, metrics such as predictive parity used to impose specific \textit{fairness} goals \cite{Chouldechova17}, the win-loss-ratio used to measure classifier \textit{churn} \cite{Fard+16}, and KL-divergence based metrics used in \textit{quantification} tasks \cite{Fab1,Fab2, Kar+16}. Because these goals are non-linear and are non-continuous in the model parameters, it becomes very challenging to optimize with them, especially when they are used in constraints \cite{Narasimhan18}.

Prior work on optimizing generalized rate metrics has largely focused on unconstrained learning problems. These approaches fall under two broad categories: surrogate-based methods that replace the classifier rates with convex relaxations
\cite{Kar+16,Joachims05,Goh+16,Kar+14,Narasimhan+15b}, and oracle-based methods that formulate multiple cost-sensitive learning tasks, and solve them using an oracle \cite{Parambath+14,Koyejo+14,Narasimhan+15,Narasimhan18,Yan+18,Alabi+18}. Both these approaches have notable deficiencies. The first category of methods rely crucially on the surrogates being close approximations to the rate metric, and perform poorly when this is not the case (see e.g. experiments in \cite{Narasimhan+15}). The use of surrogates becomes particularly problematic with constrained training problems, as relaxing the constraints with convex upper bounds can result in solutions that are over-constrained or infeasible  \cite{Cotter+19}. 
The second category of methods assume access to a near-optimal cost-sensitive oracle, which is usually unrealistic in practice. 

In this paper, we present a three-player game approach for learning problems  where both the objective and constraints can be defined by general functions of rates. The three players optimize over model parameters, Lagrange multipliers and slack variables to produce a game equilibrium. Our approach generalizes many existing algorithms (see Table \ref{tab:choices}), and makes possible new algorithms with more flexibility and tighter handling of non-linear rate constraints. Specifically, we give a new method (Algorithm \ref{algo:lagrangian-surrogate}) that can handle a wider range of performance metrics than previous surrogate methods (such as e.g.\ KL-divergence based metrics that only take inputs from a restricted range), and can be applied to constrained training problems without the risk of over-constraining the model because it needs to use surrogates less. To our knowledge, this is the first practical, surrogate-based approach that can handle constraints on generalized rate metrics. 

We show convergence of our algorithms for objectives and constraints that are \textit{convex} functions of rates. This result builds on previous work by Cotter et al.\ \cite{Cotter+19,Cotter+19b} for handling linear rate constraints, and additionally resolves an unanswered question in their work on the convergence of Lagrangian optimizers for non-zero-sum games. We also extend our framework to develop heuristics for optimizing performance measures that are a \textit{sum-of-ratios} of rates (e.g.\ constraints on predictive parity and F-measure), and demonstrate their utility on real-world tasks.


\if 0
\begin{itemize}
\item We present a general recipe for  the problem described, where we introduce auxiliary slack variables to decouple the ``non-linear'' portion of the problem from the rate functions, formulate a Lagrangian max-min problem for the resulting constrained optimization problem, and solve for an equilibrium of this game. 
\item We use this framework to develop an algorithm assuming access to a cost-sensitive optimization oracle, and a more practical algorithm based on surrogate functions. 
\item We show optimally and feasibility of the oracle-based algorithm when $\psi$ is convex. For the  surrogate-based algorithm, we convergence which is optimal w.r.t.\ a surrogate comparator, but feasible w.r.t.\ the original constraints.
\item As a corollary of our convergence result, we resolve an open question in Cotter et al. \cite{xx} ...
\item Our framework unifies several previous algorithms for non-decomposable performance metrics 
into one framework
\cite{xx} as special cases, and extends ...
\end{itemize}
\fi

\subsection{Related work}  
The problem of optimizing general functions of prediction rates was first addressed by Joachims (2005) \cite{Joachims05}, who proposed a structural SVM based surrogate for common metrics. This method does not however scale to large multiclass problems or to fairness problems with a large number of protected groups, as its running time increases exponentially with the number of classes/groups. Following this seminal work, there has been a series of alternate approaches, including  plug-in methods
that tune a thresholds on estimates from a class probability oracle \cite{Ye+12,Koyejo+14,Narasimhan+14,Yan+18}, approaches that break-down the learning problem into multiple cost-sensitive learning tasks and solve them using an oracle \cite{Parambath+14,Narasimhan+15,Alabi+18}, and stochastic optimization methods that optimize surrogate approximations to the metric \cite{Kar+14,Narasimhan+15b,Kar+16}. Our framework encompasses many of these methods as special cases.

One class of evaluation measure that has received much attention are fractional-linear functions of rates \cite{Parambath+14, Koyejo+14, Narasimhan+15, Busa+15, Liu+18}, such as for example, the F-measure. Many of these works exploit the pseudo-convex structure of the F-measure, but this property is absent for the problems that we consider where we need to handle sums or differences of ratios. Pan et al. (2016) \cite{Weiwei+16} provide a heuristic approach for unconstrained sums of ratios, and recently Donini et al.\ (2018) \cite{Donini+18} handle constraints that are sums of ratios, but do so by solving a large number of linearly constrained sub-problems, that grow exponentially with the number of rates. In contrast, 
we provide practical algorithms that can handle both objectives and constraints that are sums of ratios of rates.

All the above methods solve unconstrained learning problems. When it comes to constrained optimization, much of the focus has been on \textit{linear} constraints on rates, e.g.\ to impose specific fairness goals 
\cite{Goh+16, Zafar+17}. Recent approaches handle linear rate constraints by computing an equilibrium of a game between players who optimize model parameters $\theta$ and Lagrange multipliers $\lambda$ \cite{Agarwal+18, Kearns+18, Donini+18, Cotter+19,Cotter+19b}. Of these, the closest  to us is Cotter et al. (2019) \cite{Cotter+19}, who propose having only the $\theta$-player optimize a surrogate objective, and having the $\lambda$-player use the original rates. We adapt and build on this idea to handle general functions of rates and provide theoretical guarantees for a simpler Lagrangian algorithm than the one analyzed in Cotter et al.

\section{Problem Setup and Contributions}
\label{sec:equivalence}
\textbf{Notations.} For a continuous convex function $h: \R^n \> \R$, we denote the domain of $h$ by $\dom\, h$, a \textit{sub-gradient} of $h$ at point $z \in \dom\, h$ by $\nabla h(z)$ and the set of all sub-gradients of $h$ at $z$ by $\partial h(z)$. For a strictly convex function $h$, we will use $\nabla h(z)$ to denote the \textit{gradient} of $h$ at $z$.

Let $\X \subseteq \R^d$ be an instance space and $\Y = \{\pm 1\}$ be the label space. 
Let $f_\theta: \X \> \R$ be a prediction model, 
parameterized by $\theta \in \Theta$. Given a model $f_\theta$ and some distribution $D$ over $\X \times \Y$, we define the classifier's rate on $D$ as:
$$
R(\theta; D) \,=\, \E_{(X,Y) \sim D}\left[\mathbb{I}\big\{Y \,=\, \sign(f_\theta(X))\big\}\right].
$$
For example, if $D_A$ denotes the distribution over a sub-population $A \subseteq X$, then $R(\theta; D_A)$  gives us the accuracy  of $f_\theta$ on this sub-population. If
$D_+$ denotes a conditional distribution over the positively-labeled examples, then
 $R(\theta; D_+)$ is the true-positive rate $\TPR(\theta)$ of $f_\theta$ and $1 - R(\theta; D_+)$ is the false-negative rate $\FNR(\theta)$ of $f_\theta$. Similarly, if $D_-$ denotes a conditional distribution over the negatively-labeled examples, then $\TNR(\theta) \,=\, R(\theta; D_-)$ is the true-negative rate of $f_\theta$, and $\FPR(\theta) \,=\, 1 - R(\theta; D_-)$ is the false-positive rate of $f_\theta$. Further, the true-positive and false-positive proportions are given by $\TP(\theta) \,=\, p\,\TPR(\theta)$ and $\FP(\theta) \,=\, (1-p)\,\FPR(\theta)$, where $p = \P(Y=1)$; we can similarly define the true negative proportion $\TN$ and false negative proportion $\FN$. 
 
 We will consider several distributions $D_1, \ldots, D_K$ over $\X \times \Y$, and use the short-hand $R_k(\theta)$ to denote the rate function $R(\theta; D_k)$. We will denote a vector of $K$ rates as: $\bR(\theta) \,=\, [R_1(\theta), \ldots, R_K(\theta)]^\top$. We assume we have access to unbiased estimates of the rates
$\hat{R}_k(\theta) \,=\, \frac{1}{n}\sum_{i=1}^n \mathbb{I}\big\{y^k_i \,=\, \sign(f_\theta(x^k_i))\big\}$,  computed, for example, from samples $S = \{(x^k_1, y^k_1),\ldots, (x^k_{n_k},y^k_{n_k})\}\,\sim\,D_k$.

We will also consider stochastic models which are defined by distributions over the model parameters $\Theta$. Let $\Delta_\Theta$ denote the set of all such distributions over $\Theta$, where for any $\mu \in \Delta_\Theta$, $\mu(\theta)$ is the probability mass on model $\theta$. We define the rate $R_k$ for a stochastic model $\mu$ to be the expected value for random draw of a model from $\mu$, so $R_k(\mu) \,=\, \E_{\theta \sim \mu}\left[R_k(\theta)\right]$.

\begin{table}[t]
\caption{Examples of generalized rate metrics. These may arise either as the objective or as constraints. 
Each $\psi$ is either convex (C), pseudo-convex (PC) or a sum or difference of ratios (SR). \textit{KLD} is the KL-divergence between the true proportion of positives $p = \P(Y=1)$ and the predicted proportion of positives $\hat{p}$. $wins$ denotes the fraction of examples where the new model makes correct predictions, among all examples where it disagrees with the old model. $losses$ refers to the fraction of examples where the new model makes incorrect predictions, among all examples where it disagrees with the old model. $A$ and $B$ refer to different protected groups or slices of the population.}
\vspace{5pt}
\label{tab:measures}
\centering
\begin{tabular}{lccccc}
	\toprule
	\textbf{Measure} & \textbf{Definition} & $\psi$ & \textbf{Type}\\ 
	\toprule
	G-mean \cite{KubatMa97, Daskalaki+06} & $1 - \sqrt{\TPR \times \TNR}$ & $1 - \sqrt{z_1\,z_2}$ & C \\
	H-mean \cite{Kennedy+09} & $1 - \frac{2}{1/\TPR + 1/\TNR}$ & $1 - \frac{2}{1/z_1 + 1/z_2}$ & C \\
	Q-mean  \cite{Lawrence+98} & $1 - \sqrt{\FPR^2 + \FNR^2}$ & $1 - \sqrt{z_1^2 + z_2^2}$ & C\\
	KL-divergence \cite{Fab1,Fab2} & $p\log(\frac{p}{\hat{p}}) \,+\, p\log(\frac{1-p}{1-\hat{p}})$ & $p\log(\frac{p}{z_1}) \,+\, p\log(\frac{1-p}{z_2})$ & C \\
	F-measure \cite{Manning+08} & $1 - \frac{2\TP}{2\TP \,+\, \FP \,+\, \FN}$ & $1 - \frac{2z_1}{2z_1 + z_2 + z_3}$
	& PC\\
	\midrule
	Predictive parity \cite{Chouldechova17} & $\frac{\TP_A}{\TP_A + \FN_A} \,-\, \frac{\TP_B}{\TP_B + \FN_B}$ & $\frac{z_1}{z_1 + z_2} \,-\,  \frac{z_3}{z_3 + z_4}$ & SR\\
	F-measure parity & $\frac{2\TP_A}{2\TP_A + \FP_A + \FN_A} \,-\, \frac{2\TP_B}{2\TP_B + \FP_B + \FN_B}$ & $\frac{2z_1}{2z_1 + z_2 + z_3} \,-\,  \frac{2z_4}{2z_4 + z_5 + z_6}$ & SR \\
	Churn difference \cite{Fard+16} & $\frac{wins_A}{losses_A} \,-\, \frac{wins_B}{losses_B}$ & $\frac{z_1}{z_2} \,-\,  \frac{z_3}{z_4}$ & SR \\
	\bottomrule
\end{tabular}
\end{table}

\subsection{Generalized Rate Metrics as Objective}
In many real-world applications the performance of a model is evaluated by a function of multiple rates: $\psi\left(R_1(\theta), \ldots, R_K(\theta)\right)$ for some $\psi: \cR \> \R$. 
For example, a common evaluation metric used in class-imbalanced classification tasks is the G-mean:
$\text{GM}(\theta) \,=\, 1 \,-\, \sqrt{\TPR(\theta)\,\TNR(\theta)},$ which is a \textit{convex} function of rates \cite{KubatMa97, Daskalaki+06}. 
Similarly, a popular evaluation metric used in text retrieval is the F-measure: $F_1(\theta) \,=\, \frac{2\TP(\theta)}{2\TP(\theta) \,+\, \FP(\theta) \,+\, \FN(\theta)}$, 
which is a {fractional-linear} or \textit{pseudo-convex} function of rates \cite{Lewis:1994}.
See Table \ref{tab:measures} for more examples of evaluation metrics. 
In the last several years, there has been much interest in directly optimizing performance measures of this form during training \cite{Kar+14,Narasimhan+15b,Parambath+14,Koyejo+14,Narasimhan+15,Kar+16,Narasimhan+14}:\footnote{The performance measures we consider are specified by functions $\psi$ that are defined over the non-negative orthant. This covers all the metrics listed in Table \ref{tab:measures}, including the KL-divergence metric, which can be seen as a function 
$\psi(z_1, z_2) = p\log\big(\frac{p}{z_1}\big) \,+\, p\log\big(\frac{1-p}{z_2}\big)$
of two rates $R_1(\theta) = \hat{p}(\theta)$ and $R_2(\theta) = 1 - \hat{p}(\theta)$.}
\begin{equation}
    \min_{\theta \in \Theta}\, \psi\left(R_1(\theta), \ldots, R_K(\theta)\right).
\tag{P1}
\label{eq:custom-opt}
\end{equation}

\subsection{Generalized Rate Metrics as Constraints}
There are also many applications, where one wishes to impose constraints defined by non-linear function of rates, for example to ensure group-specific fairness metrics. Examples include:
\begin{itemize}
\item
\textit{Predictive parity fairness}: Fair classification tasks where one wishes to match the \textit{precision} of a model across different protected groups:
$\left|\frac{\TP_A(\theta)}{\TP_A(\theta) \,+\, \FP_A(\theta)} \,-\, \frac{\TP_B(\theta)}{\TP_B(\theta) \,+\, \FP_B(\theta)}\right|  \,\leq\, \epsilon$
\cite{Chouldechova17}. One may also want to match e.g.\ the \textit{F-measure} of a model across different groups. 
\item
\textit{Distribution matching}: Fairness or quantification tasks where one wishes to match the distribution of a model's outputs across different protected groups \cite{Fab2,Kar+16}. One way to achieve this is to constrain the \textit{KL-divergence} between the overall class proportion $p$ and proportion of predicted positives for each group $\hat{p}_A$, i.e.\ $\text{KLD}(p, \hat{p}_A) \leq \epsilon$, where KLD is convex in $\hat{p}_A$ \cite{Narasimhan18}. 
\item
\textit{Churn}: Problems where one wishes to replace a legacy model with a more accurate model while limiting the changed predictions between the new and old models (possibly across different slices of the user base). This is ideally framed as constraints on the win-loss ratio's \cite{Cotter+19b}, which can be expressed as ratios of rates \cite{Fard+16}. 
\end{itemize}

These and related problems can be framed generally as:
\begin{equation}
    \min_{\theta \in \Theta}\, g(\theta)
    ~~~~\text{s.t.}~~~~
\phi^j\left(R_1(\theta), \ldots, R_K(\theta)\right)
    \,\leq\, 0,\,~~ \forall j \in [J],
\tag{P2}
\label{eq:non-linear-opt}
\end{equation}
for some objective function $g:\Theta \> \R$, and $J$ constraint functions $ \phi^j: \cR \> \R.$


\subsection{Limitations with Existing Surrogate-based Methods} 
\label{sec:limitations-existing-surrogates}
Existing methods for solving \eqref{eq:custom-opt} either assume access to an idealized oracle that can optimize a linear combination of the rates $R_1, \ldots, R_K$ \cite{Parambath+14,Koyejo+14,Narasimhan+15,Yan+18,Alabi+18}, or resort to optimizing a convex relaxation to the performance metric \cite{Narasimhan+15b, Kar+16}. The latter class of methods enjoy convergence guarantees under more realistic assumptions, and proceed by replacing the rates $R_k$ with continuous and differentiable surrogate functions $\tR_k: \Theta \> \R_+$, so that the resulting objective $\psi\big(\tR_1(\theta), \ldots, \tR_K(\theta)\big)$ is a convex (or pseudo-convex) upper-bound on $\psi\left(R_1(\theta), \ldots, R_K(\theta)\right)$. However, finding such global convex relaxations to \eqref{eq:custom-opt} may not be possible for many common performance metrics. 

For example, if $\psi$ is convex but monotonically decreasing in $R_k$, then one would need to replace $R_k$ with a surrogate $\tR_k$ that is concave in $\theta$ and lower bounds $R_k$: $R_k(\theta) \geq \tR_k(\theta),~\forall \theta\in \Theta$. This can be problematic for metrics such as the G-mean $\psi(z, z') = 1-\sqrt{z_1z_2}$, where any non-trivial concave surrogates that lower bounds the TPR and TNR will necessarily evaluate to negative values for some $\theta$'s, rendering the square root undefined. A similar issue arises with the KLD and F-measure metrics (see Appendix \ref{app:kld} for details). While one can still apply existing surrogate methods to these metrics by restricting the model space $\Theta$ to only those $\theta$'s for which the surrogates are non-negative, this severely limits the class of models for which these methods are guaranteed to converge. 

Surrogate relaxations pose further complications when used to approximate constraints  \eqref{eq:non-linear-opt}, where we run the risk of over-constraining the model space, and making the problem infeasible.

\subsection{Our Contributions}
In light of the above issues, we seek to develop practical algorithms with provable convergence guarantees without having to make unrealistic assumptions (such as access to an oracle) or placing undue restrictions on the model space, while performing at least as well as existing methods in practice.  
We make the following contributions.
\begin{itemize}
\item We present a general template for solving \eqref{eq:custom-opt} and \eqref{eq:non-linear-opt} by introducing slack variables to decouple the rates from the ``non-linear'' portion of the problem, and formulating a max-min game between players who optimize over the model parameters, the slack variables and auxiliary Lagrange multipliers  (see Section \ref{sec:3-player-game}). By plugging in different no-regret  strategies for the three players, one can recover many existing approaches and design new oracle-based algorithms (Algorithm \ref{algo:lagrangian-ideal}) and improved surrogate-based algorithms (Algorithm \ref{algo:lagrangian-surrogate}).
\item One of the main advantages of our framework is that it allows the different players to work with different relaxations to the max-min objective, as long as the player strategies lead to an equilibrium of the game. In particular, the proposed surrogate-based algorithms only use surrogates for optimizing over the model parameters, and use the original rate values for all other updates. As a result, they enjoy provable convergence guarantees for metrics such as the G-mean and KLD even when the surrogates evaluate to non-positive values, and allow for a tighter handling of the non-linear rate constraints. 
\item  
We provide convergence guarantees for convex functions of rates. Our main results are for the proposed surrogate-based algorithms: we show a $\tilde{O}\left(1/\sqrt{T}\right)$ convergence bound for \eqref{eq:custom-opt} and a  $\tilde{O}\left(1/{T^{1/4}}\right)$ bound for \eqref{eq:non-linear-opt}. For the first result, we measure the performance of the learned classifier using \textit{original} rates and that of the comparator using the \textit{surrogate} rates
(see Section \ref{sec:surrogate}). For the second result, despite the mismatch in player objectives, we show that the learned classifier is near-feasible on constraints defined by the \textit{original} rates (see Section \ref{sec:convex-constraint}). 
\item As a corollary, we resolve an unanswered question in Cotter et al.\ (2019) \cite{Cotter+19}. Cotter et al.\ propose a version of our surrogate-based algorithm for linear rate constraints, but leave its convergence properties unanswered. Our result provides a convergence bound for a generalization of their algorithm.
\item We extend our framework to handle constraints that are a sum of ratios of rates and provide two heuristic algorithms (see Section \ref{sec:sum-ratios-constraint}). The first algorithm directly optimizes ratios of slack variables (Algorithm \ref{algo:slack-ratios}), and the second is based on a biconvex formulation \cite{Weiwei+16} (Algorithm \ref{algo:biconvex}). 
\item Finally, we conduct experiments on two constrained learning tasks with several benchmark and real-world fairness datasets. In the first task, we present cases where our new surrogate-based algorithm often yields significant improvements over an existing surrogate-based approach. 
In the second task, we show that the proposed algorithms are better at optimizing objectives/constraints with sum-of-ratios of rates compared to existing baselines.
\end{itemize}


We note that all our game formulations can be equivalently regarded as \textit{two-player} games where one player is in-charge of the parameters that need to be minimized, and the other player in-charge the parameters that need to be maximized. We will however use the three-player
viewpoint in the text as we find it to be a useful way to think about the problem algorithmically in that the three sets of optimization parameters can use
different  algorithms (see Table \ref{tab:choices}).

\if 0
\paragraph{Stochastic models}
A key challenge in solving \eqref{eq:custom-opt} and \eqref{eq:non-linear-opt} is that the rate functions $R_k$ are non-continuous functions of the model parameters. 
A common approach to handling this issue is to re-formulate the learning problem over a larger space of stochastic models $\mu$ obtained from distributions over $\cH$.  
Let $\Delta_\Theta$ denote the set of all such distributions over $\Theta$, where for any $\mu \in \Delta_\Theta$, $\mu(\theta)$ is the probability mass on model $\theta$. We then define the rate $R_k$ for a stochastic model $\mu$ as the  expected value it yields for a random draw of a model from $\mu$, i.e.\ by $R_k(\mu) \,=\, \E_{\theta \sim \mu}\left[R_k(\theta)\right]$. Note that the rate functions $R_k(\mu)$ are now linear in the model probabilities $\mu(\theta)$.
\fi

\section{Generalized Rate Metric Objective}
\begin{table}
    \centering
    \caption{Algorithms for \eqref{eq:custom-opt}--\eqref{eq:sum-of-ratios-opt} as three-player games. Frank-Wolfe \cite{Narasimhan+15}, SPADE \cite{Narasimhan+15b}, NEMSIS \cite{Kar+16} are previous algorithms. Alg.\ 1--3 are the proposed methods. Each player can do Best Response (BR), Online Gradient Descent (OGD) or Follow-The-Leader (FTL), and the game is zero-sum (ZS) or not. The first five algorithms find an approximate Nash or Coarse-Correlated (C.C.) equilibrium. Alg.\ 3 seeks to handle constraints that are sum-of-ratios of rates \eqref{eq:sum-of-ratios-opt} (see Section \ref{sec:sum-ratios-constraint});  because the constraints are non-convex in the rates, it is not guaranteed to find an equilibrium.}
    \vspace{5pt}
    \label{tab:choices}
    \begin{tabular}{lccccccccc}  
        \toprule
        &  & \multicolumn{3}{c}{\textbf{Player Objective}} & 
        \multicolumn{3}{c}{\textbf{Player Strategy}} \\
         \cmidrule{3-5} \cmidrule{6-8}
         \textbf{Alg.}  & \textbf{Prob.} & $\xi$ &  $\lambda$ & $\theta$ & $\xi$ &  $\lambda$ & $\theta$ &\textbf{ZS} &\textbf{Equil.} 
         \\
         \midrule
               F-W & \ref{eq:custom-opt}  & - & $\Big(\displaystyle\min_\xi \cL_1(\xi; \cdot)\Big) + \cL_2(\theta; \cdot)$  &
         $\cL_2$ &
         - &  FTL &
         BR &
         \cmark &
         Nash 
         \\
        SPADE & \ref{eq:custom-opt} & $\cL_1$ &  $\cL_1 + \tL_2$ &
         $\tL_2$ &
         BR &  OGD & 
         OGD &
         \cmark &
         Nash 
         \\
         NEMSIS & \ref{eq:custom-opt} & - & $\Big(\displaystyle\min_\xi \cL_1(\xi; \cdot)\Big) + \tL_2(\theta; \cdot)$  &
         $\tL_2$ &
         - & FTL &
         OGD &
         \cmark &
         Nash 
         \\
         Alg. \ref{algo:lagrangian-ideal} & \ref{eq:custom-opt}, \ref{eq:non-linear-opt} & $\cL_1$ & $\cL_1 + \cL_2$ &
         $\cL_2$ &
         BR & OGD &
         BR &
         \cmark &
         Nash 
         \\
         Alg.
      \ref{algo:lagrangian-surrogate} & \ref{eq:custom-opt}, \ref{eq:non-linear-opt} & $\cL_1$ & $\cL_1 + \cL_2$ &
         $\tL_2$ &
         BR &  OGD &
         OGD &
         \xmark &
         C. C.
         \\
         Alg.
      \ref{algo:slack-ratios} & \ref{eq:sum-of-ratios-opt} & $\cL_1$ & $\cL_1 + \cL_2$ &
         $\tL_2$ &
         OGD &  OGD &
         OGD &
         \xmark &
         -
         \\
         \bottomrule
    \end{tabular}
\end{table}

In this section, we present algorithms for solving the first problem we presented above \eqref{eq:custom-opt}, which is to optimize a generalized rate metric without constraints. We assume  $\psi$  is strictly jointly convex, monotonic, and  $L$-Lispchitz w.r.t.\ the $\ell_\infty$-norm in $[0, 1]^K$. For ease of exposition, we assume that $\psi$ is monotonically increasing in all arguments, although our approach easily extends to metrics that are \textit{monotonically increasing in some arguments} and \textit{monotonically decreasing in others}.

\subsection{Three-player Game Formulation} 
\label{sec:3-player-game}
We equivalently re-write \eqref{eq:custom-opt}
to de-couple the rates $R_k$  from the non-linear function $\psi$ by introducing auxiliary variables $\xi_1, \ldots, \xi_K$:
\begin{equation}
    \min_{\theta \in \Theta,\, \xi \in \cR}\, \psi\left(\xi_1, \ldots, \xi_K\right)~~\text{s.t.}~~R_k(\theta) \,\leq\, \xi_k,~\forall k \in [K].
\label{eq:custom-opt-as-constrained}
\end{equation}
A standard approach for solving \eqref{eq:custom-opt-as-constrained} is to write the Lagrangian for the problem with Lagrange multipliers $\lambda_1, \ldots, \lambda_K \in \R_+$ for the $K$ constraints: 
\begin{eqnarray*}
\lefteqn{\cL(\theta, \xi; \lambda)}\\[-5pt]
&=&
\psi\left(\xi_1, \ldots, \xi_K\right) \,+\, 
\sum_{k=1}^K\lambda_k\, (R_k(\theta)  \,-\, \xi_k)
~~=~
\underbrace{
\psi(\xi_1, \ldots, \xi_K) \,-\, \sum_{k=1}^K \lambda_k \xi_k}_{\cL_1(\xi; \lambda)}
\,+\, 
\underbrace{
\sum_{k=1}^K \lambda_k\, R_k(\theta)
}_{\cL_2(\theta; \lambda)}.
\label{eq:L-min-expand}
\end{eqnarray*}
Then one maximizes the Lagrangian over $\lambda \in \R_+^K$, and minimizes it over $\theta \in \Theta$ and $\xi \in \cR$:
\begin{equation}
    \max_{\lambda \in \R_+^K}\,\min_{\substack{\theta \in \Theta\\ \xi\in \cR}}\,\cL_1(\xi; \lambda) \,+\, \cL_2(\theta; \lambda)
\label{eq:custom-opt-as-max-min},
\end{equation}
Notice that $\cL_1$ is convex in $\xi$ (by convexity of $\psi$), while $\cL_1$ and $\cL_2$ are linear in $\lambda$. We pose this max-min problem as a zero-sum game between three players: a player who minimizes $\cL_1 + \cL_2$ over $\theta$, a player who minimizes $\cL_1 + \cL_2$ over $\xi$, and a player who  maximizes $\cL_1 + \cL_2$ over $\lambda$. Each of the three players can now use different optimization algorithms customized for their problem. If additionally the Lagrangian was convex in $\theta$, one could solve for an equilibrium of this game and obtain a solution for the primal problem \eqref{eq:custom-opt-as-constrained}. However,  since $\cL_2$ is a weighted sum of rates $R_k(\theta)$, it need not be convex (or even continuous) in $\theta$.

To overcome this difficulty, we expand the solution space from deterministic models $\Theta$ to stochastic models  $\Delta_\Theta$ \cite{Agarwal+18,Cotter+19,Cotter+19b}, and re-formulate \eqref{eq:custom-opt-as-max-min} as a problem that is linear in $\mu$, by replacing each $R_k(\theta)$ with $E_{\theta \sim \mu}[R_k(\theta)]$ in $\cL_2$:
\begin{equation}
    \max_{\lambda \in \Lambda}\,\min_{\substack{\mu \in \Delta_\Theta \\ \xi\in \cR}}\,\cL_1(\xi; \lambda) \,+\, \cL_2(\mu; \lambda).
\label{eq:custom-opt-as-max-min-stochastic}
\end{equation}
Here for technical reasons, we restrict the Lagrange multipliers to a bounded set $\Lambda = \{\lambda \in \R_+\,|\,\|\lambda\|_1 \leq \kappa\}$, we will choose the radius  $\kappa > 0$ later in our theoretical analysis.  
By solving for an equilibrium of this expanded max-min problem, we can find a stochastic model $\mu \in \Delta_\theta$ that minimizes $\psi(R_1(\mu), \ldots, R_K(\mu))$.

There are two approaches that we can take to find an equilibrium of the expanded game. The first approach is to assume access to an oracle that can perform the the minimization of $\cL_2$ over $\mu$ for a fixed $\lambda$ and $\xi$.
Since this is a linear optimization over the simplex, this amounts to performing a minimization over deterministic models in $\Theta$. The second and more realistic approach is to work with surrogates for the rates that are continuous and differentiable in $\theta$. Let $\tR_1, \ldots, \tR_K: \Theta \> \R_+$ be differentiable convex surrogate functions that are upper bounds on the rates:  $R_k(\theta) \leq \tR_k(\theta), ~\forall \theta \in \Theta$. We assume access to (unbiased) stochastic sub-gradients for the surrogates, i.e. for any $\theta \in \Theta$, we have access to a noisy sub-gradient $\nabla_\theta \tR_k(\theta)$ with $\E[\nabla_\theta \tR_k(\theta)] \in \partial_\theta \tR_k(\theta)$. We then define a surrogate-based approximation for $\cL_2$:
\begin{equation}
\pL_2(\theta; \lambda) \,=\,
\textstyle \sum_{k=1}^K\lambda_k\, \tR_k(\theta).
\label{eq:proxy-lagrangian}
\end{equation}

All we need to do now is to choose the objective that each player seeks to optimize (true or surrogate \cite{Cotter+19}) and the strategy that they use to optimize their objective, so that the players converge to an equilibrium of the game. Each of these choices lead to a different algorithm for (approximately) solving \eqref{eq:custom-opt}. Table \ref{tab:choices}  summarizes different choices of strategies and objectives for the players, and the type of equilibrium and the algorithm that results from these choices. As we shall see shortly (and also elaborate in Appendix \ref{app:connections}), many existing algorithms can be seen as instances of this template. 

\if 0
\begin{table}
    \centering
    \caption{Choice of player objectives and strategies for \eqref{eq:custom-opt}. We consider 3 player strategies: Best Response (BR), Online Gradient Descent (OGD) and Follow-The-Leader (FTL). The entry (OPT, FTL) in  the  row 1 denotes playing FTL on the objective $\left(\min_\xi \cL_1(\xi; \cdot)\right) + \cL_2(\theta; \cdot)$ over $\lambda$ and the same entry in row 2 denotes playing FTL on $\left(\min_\xi \cL_1(\xi; \cdot)\right) + \tL_2(\theta; \cdot)$ for $\lambda$.}
    \label{tab:choices}
    \begin{tabular}{ccccccc}  
        \toprule
        \multicolumn{2}{c}{\textbf{Player Objective}} & 
        \multicolumn{2}{c}{\textbf{Player Strategy}} \\
        \cmidrule{1-4}
         $(\xi, \lambda)$ & $\theta$ &
         $(\xi, \lambda)$ & $\theta$ &
        \textbf{Zero-sum} &
        \textbf{Equilibrium} &
        \textbf{Algorithm}
         \\
         \midrule
          $\cL_1 + \cL_2$ &
         $\cL_2$ &
         (OPT, FTL) &
         BR &
         \cmark &
         Nash &
     Frank-Wolfe \cite{Narasimhan+15}
         \\[2pt]
         $\cL_1 + \tL_2$ &
         $\tL_2$ &
         (BR, OGD) & 
         OGD &
         \cmark &
         Nash &
         SPADE \cite{Narasimhan+15b}
         \\[2pt]
          $\cL_1 + \tL_2$ &
         $\tL_2$ &
         (OPT, FTL) &
         BR &
         \cmark &
         Nash &
         NEMSIS \cite{Kar+16}
         \\[2pt]
         $\cL_1 + \cL_2$ &
         $\cL_2$ &
         (BR, OGD) &
         BR &
         \cmark &
         Nash &
         Algorithm \ref{algo:lagrangian-ideal}
         \\[2pt]
         $\cL_1 + \cL_2$ &
         $\tL_2$ &
         (BR, OGD) &
         OGD &
         \xmark &
         Coarse-correlated &
         Algorithm
      \ref{algo:lagrangian-surrogate}
         \\[2pt]
         \bottomrule
    \end{tabular}
    \vspace{-10pt}
\end{table}
\fi

\begin{figure}
\begin{minipage}[H]{0.485\textwidth}
\begin{algorithm}[H]
\caption{Oracle-based Optimizer}
\label{algo:lagrangian-ideal}
\begin{algorithmic}
\STATE Initialize: $\blambda^0$
\FOR{$t = 0 $ to $T-1$}
\IF{\eqref{eq:custom-opt}}
\STATE $\xi^{t} \,=\,
\nabla\psi^*(\lambda^t)$
\ELSIF{\eqref{eq:non-linear-opt}}
\STATE Construct $\Phi: \xi \mapsto \sum_{j=1}^J \lambda^t_j\, \xi_k$
\STATE $\xi^{t} \,=\,
  \nabla \Phi^*(
\lambda^t_{J+1}, \ldots, \lambda^t_{J+K})$
\ENDIF
\STATE $\theta^{t} \in \amin{\theta \in \Theta}\, \cL_2(\theta; \lambda^t)$~~~~~[CSO]
\STATE $\blambda^{t+1} =\Pi_{\Lambda}\big( \blambda^t\,+\, \eta\,\nabla_{\blambda}\,\cL(\theta^{t}, \bxi^{t};\, \blambda^{t})\big)$
\ENDFOR
\RETURN $\theta^1, \ldots, \theta^T$
\end{algorithmic}
\end{algorithm}
\end{minipage}
\begin{minipage}[H]{0.49\textwidth}
\begin{algorithm}[H]
\caption{Surrogate-based Optimizer}
\label{algo:lagrangian-surrogate}
\begin{algorithmic}
\STATE Initialize: $\theta^0$, $\blambda^0$
\FOR{$t = 0 $ to $T-1$}
\IF{\eqref{eq:custom-opt}}
\STATE $\xi^{t} \,=\,
\nabla \psi^*(\lambda^t)$
\ELSIF{\eqref{eq:non-linear-opt}}
\STATE Construct $\Phi: \xi \mapsto \sum_{j=1}^J \lambda^t_j\, \xi_k$
\STATE $\xi^{t} \,=\,
  \nabla \Phi^*(
\lambda^t_{J+1}, \ldots, \lambda^t_{J+K})$
\ENDIF
\STATE $\theta^{t+1} \leftarrow \Pi_{\Theta}(\theta^t \,-\, \eta_\theta\,\nabla_\theta\,\tL_2(\theta^t;\, \blambda^t))$
\STATE $\blambda^{t+1} \leftarrow \Pi_{\Lambda}(\blambda^t + \eta_{\blambda}\,\nabla_{\blambda}\,\cL(\theta^t, \bxi^{t};\, \blambda^t))$
\ENDFOR
\RETURN $\theta^1, \ldots, \theta^T$
\end{algorithmic}
\end{algorithm}
\end{minipage}
\caption{Optimizers for the unconstrained  problem \eqref{eq:custom-opt} and constrained problem \eqref{eq:non-linear-opt}. Here $\Pi_\Lambda$ denotes the $\ell_1$-projection onto $\Lambda$ and 
$\Pi_\Theta$ denotes the $\ell_2$-projection onto $\Theta$. We denote a (stochastic) gradient of $\cL$ by $\nabla_\lambda \cL( \theta^t ,\xi^t; \blambda^t) \,=\, 
\sum_{k=1}^K (\hat{R}_k(\theta^t) \,-\, \xi^t_k)$,  where $\hat{R}_k(\theta^t)$ is an unbiased estimate of $R_k(\theta^t)$. We denote a (stochastic) sub-gradient of $\tL_2$ by $\nabla_\theta \tL_2( \theta^t; \blambda^t) \,=\,
\sum_{k=1}^K\lambda_k\,\nabla_\theta \tR_k(\theta^t))$, where $\nabla_\theta \tR_k(\theta^t)$ is an unbiased gradient estimate for $\tR_k(\theta^t)$, i.e.\ $\E[\nabla_\theta \tR_k(\theta^t)] \in \partial_\theta \tR_k(\theta^t), ~\forall k \in [K]$.
}
\end{figure}

\subsection{Oracle-based Lagrangian Optimizer}
\label{sec:oracle}
As a warm-up illustration of our approach, we first describe an idealized algorithm assuming access to an  oracle that can approximately optimize $\cL_2$ over $\Theta$ (this oracle essentially optimizes a weighted-sum of rates, i.e.\ a cost-sensitive error objective \cite{Elkan01}):
\begin{defn}
\label{defn:cso}
A $\rho$-approximate cost-sensitive optimization (CSO) oracle takes 
$\lambda$ as input and outputs  a model $\theta^* \in \Theta$ such that 
$\cL_2(\theta^*; \lambda) \,\leq\, \min_{\theta \in \Theta}\,\cL_2(\theta; \lambda) \,+\, \rho.$\footnote{An example of a cost-sensitive oracle is a procedure that minimizes the empirical cost-sensitive error on a training sample $S = \{(x_1, y_1),\ldots, (x_n,y_n)\}$ over the  space of models $\Theta$. Using standard results, one can show that for any $\lambda$, with probability $\geq 1 - \delta$ (over draw of $S$), the empirical error minimizer $\theta^*$ satisfies $\cL_2(\theta^*; \lambda) \,\leq\, \min_{\theta \in \Theta}\,\cL_2(\theta; \lambda) \,+\, \cO\big(\sqrt{\frac{\log(\delta)}{n}}\big).$ While in Definition \ref{defn:cso}, we assume a deterministic guarantee on the cost-sensitive oracle, all our results easily extend to the case where the guarantee on the oracle holds with high probability.}
\end{defn}

We have all three players optimize the true Lagrangians, with the $\theta$-player and $\xi$-player playing best responses to the opponents strategies, i.e.\ they perform full optimization over their parameter space, and the $\lambda$-player running  online gradient descent (OGD), an algorithm with no-regret guarantees
\cite{Zinkevich03}. The $\theta$-player performs best response by using the above oracle to approximately minimize  $\cL_2$ over $\theta$. For the $\xi$-player, the best response optimization can be computed in closed-form: 
\begin{lem}
Let  $\psi^*: \R_+^K  \> \R$ denote the Fenchel conjugate of $\psi$. Then for any $\lambda$ s.t.\ $\|\lambda\|_1 \leq L$:
$$
\textstyle
\nabla \psi^*(\lambda) \,\in\,
\argmin{\xi \in \cR}\,\cL_1(\xi; \lambda).
$$.
\vspace{-15pt}
\label{lem:xi-opt-closed-form}
\end{lem}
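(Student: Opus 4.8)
The plan is to unpack the definition of $\cL_1$ and recognize the minimization over $\xi$ as (minus) a Fenchel conjugate evaluation, then invoke the standard duality relationship between the gradient of a conjugate and the argmin. Recall $\cL_1(\xi;\lambda) = \psi(\xi_1,\ldots,\xi_K) - \sum_{k=1}^K \lambda_k \xi_k = \psi(\xi) - \langle \lambda, \xi\rangle$. Hence
\[
\min_{\xi \in \cR}\,\cL_1(\xi;\lambda) \;=\; \min_{\xi \in \cR}\, \big(\psi(\xi) - \langle \lambda,\xi\rangle\big) \;=\; -\max_{\xi\in\cR}\,\big(\langle\lambda,\xi\rangle - \psi(\xi)\big) \;=\; -\psi^*(\lambda),
\]
where $\psi^*$ is the Fenchel conjugate of $\psi$ (restricted to $\cR = \R_+^K$, which matches the domain of $\psi$ given in the footnote). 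So the $\xi$-player's best response is exactly a maximizer in the definition of $\psi^*(\lambda)$.

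The next step is to identify that maximizer with $\nabla\psi^*(\lambda)$. Since $\psi$ is assumed strictly convex, $\psi^*$ is differentiable on the interior of its domain, and the standard conjugate-subgradient inversion gives $\nabla\psi^*(\lambda) \in \argmax_{\xi}\big(\langle\lambda,\xi\rangle - \psi(\xi)\big)$, equivalently $\nabla\psi^*(\lambda) \in \partial\psi^{-1}$ in the sense that $\lambda \in \partial\psi(\nabla\psi^*(\lambda))$. By strict convexity of $\psi$ this maximizer is unique, so the argmax (hence the argmin of $\cL_1$) is the singleton $\{\nabla\psi^*(\lambda)\}$, giving the claimed membership. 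The role of the hypothesis $\|\lambda\|_1 \le L$ is to ensure $\lambda$ lies in the (interior of the) domain of $\psi^*$ so that $\psi^*$ is actually differentiable at $\lambda$ and the maximizer is attained in $\cR$: because $\psi$ is $L$-Lipschitz w.r.t.\ $\ell_\infty$ on $[0,1]^K$, its subgradients have $\ell_1$-norm at most $L$, so $\dom\,\psi^* \supseteq \{\lambda : \|\lambda\|_1 \le L\}$, and for such $\lambda$ the supremum defining $\psi^*(\lambda)$ is finite and attained.

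I expect the main obstacle to be the care needed around domains and attainment: one must check that the conjugate is taken over the correct set ($\R_+^K$, the stated domain of $\psi$, rather than all of $\R^K$), that $\psi^*$ is finite and differentiable precisely on the region $\|\lambda\|_1 \le L$ thanks to the Lipschitz assumption, and that the maximizer in the conjugate problem actually lies in $\cR$ (so that it is a legitimate best response for the $\xi$-player). Once these domain bookkeeping points are settled, the result is an immediate application of the Fenchel–Young equality and strict convexity; no nontrivial computation remains. I would state it concisely: rewrite $\cL_1$, apply the definition of $\psi^*$, cite the conjugate-gradient identity (e.g., Rockafellar), and conclude uniqueness from strict convexity of $\psi$.
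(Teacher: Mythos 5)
Your proposal is correct and follows essentially the same route as the paper: the paper proves this as statement 3 of its auxiliary Lemma in Appendix A, identifying $\min_{\xi}\cL_1(\xi;\lambda)$ with the (negative) Fenchel conjugate and invoking the Fenchel--Young equality, with strict convexity and monotonicity of $\psi$ guaranteeing differentiability of $\psi^*$ on $\Lambda$ and the Lipschitz bound ensuring $\|\lambda\|_1 \le L$ lies in $\dom\,\psi^*$. Your domain/attainment bookkeeping matches (and is, if anything, slightly more explicit about signs than) the paper's argument.
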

The resulting algorithm outlined in Algorithm \ref{algo:lagrangian-ideal} is 
guaranteed to find an approximate Nash equilibrium of the max-min game in \eqref{eq:custom-opt-as-max-min-stochastic}, and yields an approximate solution to \eqref{eq:custom-opt}:
\begin{thm}
    Let $\theta^1, \ldots, \theta^T$ be the iterates generated by Algorithm \ref{algo:lagrangian-ideal} for \eqref{eq:custom-opt}, and let $\bar{\mu}$ be a stochastic model with a probability mass of $\frac{1}{T}$ on $\theta^t$. 
    Define 
    $B_\lambda \,\geq\, \max_{t}\|\nabla_{\blambda}\cL(\bxi^t,\theta^t; \blambda^t)\|_2$. Then setting $\kappa \,=\, L$ and $\eta = \frac{L}{B_{\blambda}\sqrt{2T}}$, we have w.p.\ $\geq 1 - \delta$ over draws of stochastic gradients:
    $$
    \psi\left(\bR(\bar{\mu})\right)
    \,\leq\, \min_{\mu \in \Delta_\Theta}\psi\left(\bR(\mu)\right) 
    \,+\, \cO\left(\sqrt{\frac{\log(1/\delta)}{T}}\right) \,+\, \rho.
    $$
    \label{thm:convergence-ideal}
    \vspace{-5pt}
\end{thm}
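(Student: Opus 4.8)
The plan is to analyze Algorithm~\ref{algo:lagrangian-ideal} as a repeated game in which the three players $(\xi, \theta, \lambda)$ generate a sequence of plays whose time-averages form an approximate equilibrium of the max-min problem \eqref{eq:custom-opt-as-max-min-stochastic}, and then translate that equilibrium back into a near-optimality guarantee on $\psi(\bR(\bar\mu))$. The key structural facts I would invoke are: (i) $\cL_1$ is convex in $\xi$ and $\cL_1,\cL_2$ are linear in $\lambda$; (ii) by Lemma~\ref{lem:xi-opt-closed-form}, the $\xi$-player's update $\xi^t = \nabla\psi^*(\lambda^t)$ is an exact best response to $\lambda^t$ (valid since $\|\lambda^t\|_1 \le \kappa = L$); (iii) the $\theta$-player's oracle call gives a $\rho$-approximate best response to $\lambda^t$ on $\cL_2$, hence on $\cL_1 + \cL_2$ since $\cL_1$ doesn't depend on $\theta$; and (iv) the $\lambda$-player runs projected OGD on the linear (hence convex) functions $\lambda \mapsto \cL(\theta^t,\xi^t;\lambda)$, which by Zinkevich's theorem has regret $O(\kappa B_\lambda \sqrt{T})$ against any fixed $\lambda \in \Lambda$, and with high probability $O(\kappa B_\lambda \sqrt{T} + \kappa B_\lambda\sqrt{T\log(1/\delta)})$ once one accounts for the martingale noise in the stochastic gradients $\nabla_\lambda \cL$ (via Azuma--Hoeffding).

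\textbf{The core chain of inequalities} I would assemble is the standard "regret bounds imply equilibrium" argument. Let $\bar\lambda = \frac1T\sum_t \lambda^t$ and recall $\bar\mu$ places mass $\frac1T$ on each $\theta^t$. First, for any fixed $\mu \in \Delta_\Theta$ and any fixed $\xi$: since $\theta^t$ is a $\rho$-best response and $\xi^t$ is an exact best response,
\[
\frac1T\sum_{t=1}^T \cL(\theta^t,\xi^t;\lambda^t) \;\le\; \frac1T\sum_{t=1}^T \big(\cL_1(\xi;\lambda^t) + \cL_2(\mu;\lambda^t)\big) + \rho \;=\; \cL_1(\xi;\bar\lambda) + \cL_2(\mu;\bar\lambda) + \rho,
\]
using linearity in $\lambda$ in the last step; minimizing the RHS over $(\mu,\xi)$ and noting $\cL_1(\nabla\psi^*(\bar\lambda);\bar\lambda) = \min_\xi \cL_1(\xi;\bar\lambda)$ gives an upper bound on the average Lagrangian value by the "value" $\min_{\mu,\xi}\cL(\mu,\xi;\bar\lambda)$ plus $\rho$. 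Second, the $\lambda$-regret bound gives, for every $\lambda \in \Lambda$,
\[
\frac1T\sum_{t=1}^T \cL(\theta^t,\xi^t;\lambda) \;\le\; \frac1T\sum_{t=1}^T \cL(\theta^t,\xi^t;\lambda^t) + \epsilon_{\text{reg}}(T,\delta),
\]
with $\epsilon_{\text{reg}} = \cO\big(\sqrt{\log(1/\delta)/T}\big)$ after plugging in $\eta = L/(B_\lambda\sqrt{2T})$ and $\kappa = L$. Combining, $\min_{\mu,\xi}\cL(\mu,\xi;\bar\lambda) + \rho + \epsilon_{\text{reg}} \ge \max_\lambda \cL(\bar\mu, \bar\xi; \lambda)$ where $\bar\xi = \nabla\psi^*(\bar\lambda)$ — i.e.\ the average play is an $(\rho+\epsilon_{\text{reg}})$-approximate saddle point.

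\textbf{The final translation step} is to convert the saddle-point guarantee into a bound on $\psi(\bR(\bar\mu))$. Taking $\lambda = 0$ in the max over $\Lambda$ shows $\cL(\bar\mu,\bar\xi;0) = \psi(\bar\xi) \le \max_\lambda\cL(\bar\mu,\bar\xi;\lambda) \le \min_{\mu,\xi}\cL(\mu,\xi;\bar\lambda) + \rho + \epsilon_{\text{reg}} \le \min_\mu \psi(\bR(\mu)) + \rho + \epsilon_{\text{reg}}$, where the last inequality picks the optimal $\mu^\star$ and sets $\xi = \bR(\mu^\star)$ so that the constraint slack $R_k(\mu^\star) - \xi_k = 0$ kills the $\lambda$-dependent term. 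It then remains to show $\psi(\bR(\bar\mu)) \le \psi(\bar\xi) + (\text{small})$: from the saddle-point, for every $\lambda \in \Lambda$ with $\|\lambda\|_1 \le L = \kappa$ we have $\psi(\bar\xi) + \sum_k \lambda_k(R_k(\bar\mu) - \bar\xi_k) \le \psi(\bar\xi) + \rho + \epsilon_{\text{reg}}$, so $\max_k (R_k(\bar\mu)-\bar\xi_k)_+ \le (\rho+\epsilon_{\text{reg}})/L$ by choosing $\lambda = L e_k$ for the worst $k$; then by $L$-Lipschitzness of $\psi$ w.r.t.\ $\|\cdot\|_\infty$ and monotonicity, $\psi(\bR(\bar\mu)) \le \psi(\bar\xi) + L\cdot\max_k(R_k(\bar\mu)-\bar\xi_k)_+ \le \psi(\bar\xi) + \rho + \epsilon_{\text{reg}}$, and chaining with the previous display finishes the proof with a final constant of $2(\rho+\epsilon_{\text{reg}})$, absorbed into the stated $\cO(\cdot) + \rho$.

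\textbf{The main obstacle} I anticipate is the careful handling of the stochastic gradient noise in the $\lambda$-updates — specifically, showing that replacing $\nabla_\lambda\cL(\theta^t,\xi^t;\lambda^t)$ with its unbiased estimate $\sum_k(\hat R_k(\theta^t)-\xi^t_k)$ only costs an additional $\cO(\sqrt{\log(1/\delta)/T})$ in the regret, which requires a bounded-difference/Azuma argument on the martingale $\sum_t \langle \lambda - \lambda^t, \nabla_\lambda\cL - \widehat{\nabla}_\lambda\cL\rangle$ and a clean bound on $B_\lambda$ (finite since rates and $\xi$ live in $[0,1]^K$ and $\|\lambda\|_1 \le L$). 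A secondary subtlety is making sure the domain restrictions line up: $\kappa = L$ must be large enough that the constraint-violation-to-Lipschitz conversion goes through (it is, exactly), yet the choice $\|\lambda\|_1 \le L$ is also precisely what Lemma~\ref{lem:xi-opt-closed-form} needs for $\nabla\psi^*(\lambda)$ to be the valid $\xi$-best-response — so I would highlight that $\kappa = L$ is the unique value making both sides of the argument work.
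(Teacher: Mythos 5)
Your overall strategy is the same as the paper's: treat the iterates as plays of the three-player game, use the exact best response of the $\xi$-player (Lemma \ref{lem:xi-opt-closed-form}), the $\rho$-approximate best response of the $\theta$-player, and an OGD regret bound (with online-to-batch/martingale concentration) for the $\lambda$-player, then combine the two resulting inequalities with linearity of $\cL$ in $\lambda$ and $\mu$. Your optimality-side chain, where you substitute $\xi=\bR(\mu^*)$ to kill the multiplier term, is if anything slightly more direct than the paper's, which instead invokes the Fenchel-conjugate identity $\min_\mu\psi(\bR(\mu))=\max_{\lambda\in\Lambda}\min_{\xi,\mu}\cL(\xi,\mu;\lambda)$ (Lemma \ref{lem:app-fenchel-conjugate}). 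Where you genuinely diverge is the last step: the paper lower-bounds the average Lagrangian by $\psi(\bR(\bar{\mu}))-\epsilon_\lambda$ directly, using the identity $\psi(\bR(\bar{\mu}))=\max_{\lambda\in\Lambda}\min_{\xi}\cL(\xi,\bar{\mu};\lambda)$, which is exactly where $\kappa=L$ (Lipschitzness bounding $\|\nabla\psi(\bR(\bar\mu))\|_1$) is used; you instead extract a constraint-violation bound $\max_k(R_k(\bar{\mu})-\bar{\xi}_k)_+$ by testing $\lambda=L e_k$ and then apply $L$-Lipschitzness plus monotonicity of $\psi$. That is the style of argument the paper reserves for the constrained case (Lemma \ref{lem:helper-Lipchitz}, Theorems \ref{thm:convergence-ideal-constrained}--\ref{thm:convergence-surrogate-constrained}), and it does work here, yielding the same $\cO(\sqrt{\log(1/\delta)/T})+\rho$ bound.

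Two bookkeeping slips in that last step need fixing, though neither breaks the argument. First, the $\bar{\xi}$ for which the approximate saddle inequality $\max_{\lambda\in\Lambda}\cL(\bar{\xi},\bar{\mu};\lambda)\le\min_{\mu,\xi}\cL(\xi,\mu;\bar{\lambda})+\rho+\epsilon_{\mathrm{reg}}$ is actually available is the \emph{average of the played} $\xi^t$, not $\nabla\psi^*(\bar{\lambda})$ as you wrote: you need Jensen's inequality ($\frac1T\sum_t\psi(\xi^t)\ge\psi(\bar{\xi})$, the other terms being linear) to pass from $\frac1T\sum_t\cL(\xi^t,\theta^t;\lambda)$ to $\cL(\bar{\xi},\bar{\mu};\lambda)$; with $\nabla\psi^*(\bar\lambda)$ there is no such pointwise-in-$\lambda$ comparison. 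Second, testing $\lambda=Le_k$ gives $\psi(\bar{\xi})+L\,(R_k(\bar{\mu})-\bar{\xi}_k)\le\min_\mu\psi(\bR(\mu))+\rho+\epsilon_{\mathrm{reg}}$, not "$\le\psi(\bar{\xi})+\rho+\epsilon_{\mathrm{reg}}$" as stated (nothing guarantees $\min_{\mu,\xi}\cL(\xi,\mu;\bar\lambda)\le\psi(\bar{\xi})$). The conclusion still follows: either $R_k(\bar{\mu})\le\bar{\xi}_k$ for all $k$, in which case monotonicity and your $\lambda=0$ bound give $\psi(\bR(\bar{\mu}))\le\psi(\bar{\xi})\le\min_\mu\psi(\bR(\mu))+\rho+\epsilon_{\mathrm{reg}}$, or else Lipschitzness gives $\psi(\bR(\bar{\mu}))\le\psi(\bar{\xi})+L\max_k(R_k(\bar{\mu})-\bar{\xi}_k)_+\le\min_\mu\psi(\bR(\mu))+\rho+\epsilon_{\mathrm{reg}}$ directly from the displayed inequality. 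With these corrections your proof is sound and matches the theorem's rate and constants.
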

See Appendix \ref{app:thm-convergence-ideal} for the proof.

\begin{rem}[\textbf{Frank-Wolfe: a special case}]
A previously proposed oracle-based algorithm for optimizing convex functions of rates is the Frank-Wolfe based approach of Narasimhan et al. (2015) \cite{Narasimhan+15b}. Interestingly, this method can be recovered from our framework by 
reformulating $\lambda$-player's objective to include the minimization over $\xi$: $\cL_{\text{FW}}(\lambda, \theta) \,=\, \min_{\xi} \cL_{1}(\xi; \lambda) + \cL_2(\theta; \lambda)$
and having the $\lambda$-player play the Follow-The-Leader (FTL) algorithm \cite{AbernethJ17} to maximize this objective. 
As before, the $\theta$-player plays best response on $\cL_2$ 
using the CSO oracle. See Table \ref{tab:guarantees-unconstrained} and Appendix \ref{app:connections} for details.
\end{rem}

\begin{table}
    \centering
    \caption{Convergence guarantees for algorithms for \eqref{eq:custom-opt}. Frank-Wolfe \cite{Narasimhan+15}, SPADE \cite{Narasimhan+15b}, NEMSIS \cite{Kar+16} are previous algorithms.
    Algorithms require $\psi$ to be \textit{smooth} or not; apply either to a general convex model class $\Theta$ or to only a \textit{restricted} subset of models for which the surrogates evaluate to values within the domain of $\psi$; may need access to a cost-sensitive \textit{oracle} or not; output either a {stochastic} classifier $\bar{\mu}$ or a deterministic classifier $\bar{\theta}$. The bounds are on the optimality gap.
    }
    \vspace{5pt}
    \label{tab:guarantees-unconstrained}
    \begin{tabular}{lccccccccc}  
        \toprule
         \textbf{Alg.} & 
         \textbf{Smooth?} &
         \textbf{Restr.\ $\Theta$?} &
         \textbf{Oracle?} & 
        \textbf{Opt.\ Gap} &
         \textbf{Opt.\ Bound} &
         \\
         \midrule
               F-W & 
               \cmark &
               \xmark &
               \cmark &
               $
               \displaystyle
                \psi\left(\bR(\bar{\mu})\right)
                \,-\, \min_{\mu \in \Delta_\Theta}\psi\left(\bR(\mu)\right)
                $
                &
                $\tO\left(\frac{1}{T} \,+\, \rho\right)$
         \\
               SPADE & 
               \xmark &
               \cmark &
               \xmark &
               $
               \displaystyle
                \psi\big(\tbR(\bar{\theta})\big)
                \,-\, \min_{\theta \in \Theta}\psi\big(\tbR(\theta)\big)
                $
                &
                $
                \tO\Big(\sqrt{\frac{1}{T}}\Big)
                $
        \\
               NEMSIS & 
               \cmark &
               \cmark &
               \xmark &
               $
               \displaystyle
                \psi\big(\tbR(\bar{\theta})\big)
                \,-\, \min_{\theta \in \Theta}\psi\big(\tbR(\theta)\big)
                $
                &
                $
                \tO\Big(\sqrt{\frac{1}{T}}\Big)
                $
        \\
        \midrule
              Alg.\ \ref{algo:lagrangian-ideal} & 
               \xmark &
               \xmark &
               \cmark &
               $
               \displaystyle
                \psi\left(\bR(\bar{\mu})\right)
                \,-\, \min_{\mu \in \Delta_\Theta}\psi\left(\bR(\mu)\right)
                $
                &
                $\tO\Big(\sqrt{\frac{1}{T}}\Big) \,+\, \rho$
        \\
               Alg.\ \ref{algo:lagrangian-surrogate} & 
               \xmark &
               \xmark &
               \xmark &
               $
               \displaystyle
                \psi\big(\bR(\bar{\mu})\big)
                \,-\, \min_{\theta \in \tilde{\Theta}}\psi\big(\tbR(\theta)\big)
                $
                &
                $
                \tO\Big(\sqrt{\frac{1}{T}}\Big)
                $
        \\
         \bottomrule
    \end{tabular}
\end{table}

\subsection{Surrogate-based Lagrangian Optimizer}
\label{sec:surrogate}
While the CSO oracle may be available in some special cases (e.g.\ when $\Theta$ is finite or when the underlying conditional-class probabilities can be estimated accurately), in many practical scenarios, it is not realistic to assume access to an oracle that can optimize non-continuous rates. We now provide a more practical algorithm for solving \eqref{eq:custom-opt} where the $\theta$-player optimizes the surrogate Lagrangian function $\tL_2$ in \eqref{eq:proxy-lagrangian} instead of $\cL_2$ and uses stochastic gradients $\nabla_\theta\,\tL_2(\theta; \lambda)$. The $\xi$- and $\lambda$-players, however, continue to operate on the true Lagrangian functions $\cL_1$ and $\cL_2$, which are continuous in the parameters $\xi$ and $\lambda$ that these players optimize. 

In our proposed approach, outlined in Algorithm \ref{algo:lagrangian-surrogate}, both the $\theta$-player and $\lambda$-player now run online gradient descent algorithms, while the $\xi$-player plays its best response at each iteration. Since it is the $\theta$-player alone who optimizes a surrogate, the resulting game between the three players is no longer zero-sum. Yet, we are able to show that the player strategies converge to an approximate \textit{coarse-correlated} (C. C.) equilibrium of the game, and yields an approximate solution to \eqref{eq:custom-opt}.


%
\begin{thm}
    Let $\theta^1, \ldots, \theta^T$ be the iterates of Algorithm \ref{algo:lagrangian-surrogate} for \eqref{eq:custom-opt}, and let $\bar{\mu}$ be a stochastic model with probability $\frac{1}{T}$ on $\theta^t$. 
    Let $\tilde{\Theta} \,=\, \big\{\theta \in \Theta \,|\, 
    \tilde{\bR}(\theta) \,\in\, \dom\, \psi
    \big\}$. 
    Let $B_\Theta \geq \max_{\theta \in \Theta}\,\|\theta\|_2$, $B_\theta \geq \max_{t}\|\nabla_{\theta}\tL_2( \theta^t; \blambda^t)\|_2$ and
    $B_\lambda \,\geq\, \max_{t}\|\nabla_{\blambda}\cL(\bxi^t, \theta^t; \blambda^t)\|_2$. Then setting $\kappa \,=\, L$, $\eta_\theta = \frac{B_\Theta}{B_{\theta}\sqrt{T}}$ and $\eta_\lambda = \frac{L}{B_{\blambda}\sqrt{2T}}$, we have w.p. $\geq 1 - \delta$ over draws of stochastic gradients:\vspace{-5pt}
    $$
    \psi\big({\bR}(\bar{\mu})\big)
    \,\leq\, \min_{\theta \in \tilde{\Theta}}\psi\big(\tilde{\bR}(\theta)\big) 
    \,+\, \cO\left(\sqrt{\frac{\log(1/\delta)}{T}}\right).
    $$
    \label{thm:convergence-surrogate}
    \vspace{-5pt}
\end{thm}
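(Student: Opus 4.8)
The plan is to analyze Algorithm~\ref{algo:lagrangian-surrogate} as a non-zero-sum game played over $T$ rounds, where the $\theta$-player runs OGD on the surrogate Lagrangian $\tL_2(\theta;\lambda)$, the $\lambda$-player runs OGD on the true Lagrangian $\cL(\theta,\xi;\lambda)$ over the bounded set $\Lambda=\{\lambda\in\R_+^K:\|\lambda\|_1\le\kappa\}$, and the $\xi$-player best-responds via the Fenchel-conjugate closed form of Lemma~\ref{lem:xi-opt-closed-form}. The key structural fact to exploit is that $\psi$ is $L$-Lipschitz w.r.t.\ $\ell_\infty$ on $[0,1]^K$, so its Fenchel conjugate $\psi^*$ has domain contained in the $\ell_1$-ball of radius $L$; setting $\kappa=L$ ensures the $\xi$-best-response $\xi^t=\nabla\psi^*(\lambda^t)$ is well-defined for every $\lambda^t\in\Lambda$ and that the $\xi$-iterates stay in $[0,1]^K$ (or at least in $\dom\,\psi$), which is what makes the ``$\psi$ undefined on negative surrogates'' problem disappear --- only the true rates $R_k\in[0,1]$ ever enter $\psi$.

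**Regret aggregation.** First I would write the standard OGD regret bounds: the $\theta$-player incurs regret at most $O(B_\Theta B_\theta\sqrt{T})$ against any fixed $\theta\in\Theta$ on the losses $\lambda\mapsto\tL_2(\cdot;\lambda^t)$, and the $\lambda$-player incurs regret at most $O(\kappa B_\lambda\sqrt{T})$ against any fixed $\lambda\in\Lambda$ on the (negated) losses $\cL(\theta^t,\xi^t;\cdot)$. Because gradients are stochastic (we only see $\hat R_k$), I would invoke a martingale/Azuma argument (or the standard ``OGD with unbiased gradients'' lemma) to convert the in-expectation regret bounds into high-probability bounds, picking up the $\sqrt{\log(1/\delta)}$ factor. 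Then, averaging over $t$ and using that $\xi^t$ best-responds (zero regret for the $\xi$-player on $\cL_1(\cdot;\lambda^t)$), I would combine the three regret inequalities to produce a coarse-correlated-equilibrium-style bound: for the empirical distribution of play, the average Lagrangian value is sandwiched --- no fixed $\lambda\in\Lambda$ can gain more than $O(\sqrt{T})$ regret worth, and the $\theta,\xi$ players are each near-optimal responses.

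**From equilibrium to the optimality guarantee.** The crux is translating this into the stated inequality $\psi(\bR(\bar\mu))\le\min_{\theta\in\tilde\Theta}\psi(\tbR(\theta))+O(\sqrt{\log(1/\delta)/T})$. For the upper (comparator) side: fix any $\theta^*\in\tilde\Theta$, so $\tbR(\theta^*)\in\dom\psi$; plugging $\theta=\theta^*$ into the $\theta$-regret bound and $\xi$ to the corresponding best response $\nabla\psi^*$ shows the averaged $\tL_1+\tL_2$ value (with surrogate rates) is at most $\psi(\tbR(\theta^*))+O(\sqrt{T})$. For the lower (performance) side: here is where the mismatch between $\cL_2$ and $\tL_2$ is handled --- the $\lambda$-player's regret is on the \emph{true} $\cL$, so maximizing over $\lambda\in\Lambda$ forces $\frac1T\sum_t(R_k(\theta^t)-\xi^t_k)$ to be controlled; taking $\lambda$ on the boundary of the $\ell_1$-ball (mass $\kappa=L$ on the worst coordinate) shows $\max_k\big(R_k(\bar\mu)-\bar\xi_k\big)\le O(1/\sqrt{T})$, i.e.\ $\bR(\bar\mu)\preceq\bar\xi+O(1/\sqrt{T})\mathbf{1}$. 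Then by monotonicity and $L$-Lipschitzness of $\psi$, $\psi(\bR(\bar\mu))\le\psi(\bar\xi)+O(L/\sqrt{T})$, and by convexity of $\psi$ (Jensen over the $\xi^t$) together with the fact that each $\xi^t=\nabla\psi^*(\lambda^t)$ achieves $\psi(\xi^t)-\langle\lambda^t,\xi^t\rangle=-\psi^*(\lambda^t)$, one chains $\psi(\bar\xi)$ back to the averaged Lagrangian and closes the loop with the comparator bound. Note the asymmetry in the final statement --- the learned model is measured by true rates $\bR(\bar\mu)$ while the comparator uses surrogate rates $\tbR$ over $\tilde\Theta$ --- is exactly what this argument naturally yields, since only $\theta^*$'s surrogate rates were ever required to lie in $\dom\psi$.

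**Main obstacle.** The hard part will be the bookkeeping around $\dom\psi$ and the non-zero-sum structure: one must verify that all $\xi^t$ produced by $\nabla\psi^*(\lambda^t)$ genuinely lie in the region where $\psi$ is finite, $L$-Lipschitz, and convex (this is where $\kappa=L$ and the restriction to $[0,1]^K$-valued true rates are essential), and that the coarse-correlated equilibrium notion --- rather than Nash --- is strong enough to extract a pointwise guarantee on the averaged model $\bar\mu$. The subtlety absent from the zero-sum analyses of prior work (SPADE, NEMSIS) is that $\theta$-player and $\lambda$-player optimize \emph{different} objectives, so one cannot simply invoke minimax; instead the two-sided regret bound must be assembled by hand, and the $\xi$-player's exact best response is what glues the surrogate side to the true side. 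I expect everything else (OGD regret, Azuma, Jensen, Fenchel--Young equality) to be routine.
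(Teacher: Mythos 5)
Your overall skeleton is the same as the paper's: aggregate the OGD regret of the $\theta$-player (on $\tL_2$) and of the $\lambda$-player (on the true $\cL$, over $\Lambda$ with $\kappa=L$), use the $\xi$-player's exact best response $\xi^t=\nabla\psi^*(\lambda^t)$, get a coarse-correlated-equilibrium statement, bound the comparator side through the surrogate Lagrangian (which is why the comparator is $\min_{\theta\in\tilde\Theta}\psi(\tbR(\theta))$), and chain the two sides using $\tL\ge\cL$. The high-probability conversion via martingale arguments and the step sizes also match.

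The gap is in your lower ("performance") side. You claim that plugging $\lambda'=L e_k$ into the $\lambda$-player's regret bound yields $\max_k\bigl(R_k(\bar\mu)-\bar\xi_k\bigr)\le O(1/\sqrt{T})$, and then finish via monotonicity and Lipschitzness of $\psi$. With $\kappa=L$ this standalone claim does not follow from the regret conditions: writing $v^t=\bR(\theta^t)-\xi^t$, the regret bound only gives $L\,\bar v_k\le \frac1T\sum_t\langle\lambda^t,v^t\rangle+\epsilon_\lambda$, and bounding $\frac1T\sum_t\langle\lambda^t,v^t\rangle$ via the $\theta$-regret and Fenchel--Young leaves the term $\psi(\tbR(\theta^*))-\frac1T\sum_t\psi(\xi^t)$, which can be $\Omega(1)$; there is no room to add extra multiplier mass on the violated coordinate (that trick is what the paper uses in the \emph{constrained} Theorems \ref{thm:convergence-ideal-constrained}--\ref{thm:convergence-surrogate-constrained}, where $\kappa$ is taken strictly larger than the optimal multipliers or growing with $T$). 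Your argument can be repaired without new ingredients: do not bound the violation in isolation, but keep $\frac1T\sum_t\psi(\xi^t)$ in play so the unknown gap cancels, i.e.\ for the worst coordinate $k^*$,
\begin{equation*}
\psi(\bR(\bar\mu))\;\le\;\frac1T\sum_{t=1}^T\psi(\xi^t)\,+\,L\bigl(R_{k^*}(\bar\mu)-\bar\xi_{k^*}\bigr)_+\;\le\;\frac1T\sum_{t=1}^T\cL(\xi^t,\theta^t;\lambda')\,+\,\epsilon_\lambda\;\le\;\psi(\tbR(\theta^*))+\epsilon_\theta+\epsilon_\lambda,
\end{equation*}
with $\lambda'=Le_{k^*}$ (or $\lambda'=0$ if the violation is nonpositive), using Jensen, Lipschitz-monotonicity, the $\lambda$-regret, and your comparator bound. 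The paper avoids this bookkeeping entirely by invoking the Fenchel-duality identity $\psi(\bR(\bar\mu))=\max_{\lambda\in\Lambda}\min_{\xi\in\cR}\cL(\xi,\bar\mu;\lambda)$, valid with $\kappa=L$ because the maximizing multiplier is $\nabla\psi(\bR(\bar\mu))$, whose $\ell_1$-norm is at most $L$; that is the cleaner route to the same conclusion. (Two minor slips: the $\xi$-part of the Lagrangian is the true $\cL_1$, not a surrogate, and the iterates $\xi^t=\nabla\psi^*(\lambda^t)$ need only lie in $\cR=\dom\,\psi$, not in $[0,1]^K$.)
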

See Appendix \ref{app:thm-convergence-surrogate} for the proof.
%
Note the right-hand side contains the optimal value for the surrogate objective $\psi(\tilde{\bR}(\cdot))$ and not for the original performance metric. This is unsurprising given the $\theta$-player's inability to work with the true rates. Also,
while Algorithm \ref{algo:lagrangian-surrogate} can be applied to optimize over a general (bounded) convex model class $\Theta$, the comparator for our guarantee is a subset of models $\tilde{\Theta} \subseteq \Theta$ for which $\psi(\tilde{\bR}(\theta))$ is defined. This is needed as the surrogate $\tilde{\bR}(\theta)$ may output values outside the domain of $\psi$.  

\begin{rem}[\textbf{SPADE, NEMSIS: special cases of our approach}]
Our approach includes two previous surrogate-based algorithms 
as special cases: SPADE \cite{Narasimhan+15} and NEMSIS \cite{Kar+16}. SPADE can be recovered from our framework by  having the same player strategies as Algorithm \ref{algo:lagrangian-surrogate} but with both  the $\theta$- and $\lambda$-players optimizing surrogate objectives, i.e.\ with the $\theta$-player minimizing $\tL_2$ and the $\lambda$-player maximizing $\cL_1 + \tL_2$. NEMSIS also uses surrogates for both the $\theta$ and $\lambda$ updates. It can be recovered by having the $\theta$-player run OGD on $\tL_2$, and having the $\lambda$-player play FTL over $\lambda$ on the combined objective $\tL_{\text{NEM}}(\theta, \lambda) = \min_\xi \cL_1(\xi; \lambda) + \tL_2(\theta; \lambda)$. See Table \ref{tab:guarantees-unconstrained} and Appendix \ref{app:connections} for details.
\end{rem}
\begin{rem}[\textbf{Application to wider range of metrics}]
As noted in Section \ref{sec:limitations-existing-surrogates},  because of their strong reliance on surrogates, SPADE and NEMSIS cannot be applied directly to  functions $\psi$  that take inputs from a restricted range (e.g.\ KL-divergence), unless the surrogates are also bounded in the same range. 
In Appendix \ref{app:kld}, we point out scenarios where the NEMSIS method \cite{Kar+16} fails to optimize the KL-divergence metric, unless the model is sufficiently regularized to not output large negative values.
Algorithm \ref{algo:lagrangian-surrogate} has no such restriction and can be applied even if the outputs of the surrogates are not within the domain of  $\psi$. This is because it
it uses the original rates for updates on $\lambda$. As a result, the game play between $\xi$ and $\lambda$ never produces values  that are outside the domain of $\psi$.  
\end{rem}

\if 0
\begin{rem}[\textbf{Extending to black-box metrics}]
Both Algorithms \ref{algo:lagrangian-ideal}
and \ref{algo:lagrangian-surrogate} need to compute the Fenchel conjugate of $\psi$ to implement the best response strategy for the $\xi$-player. This is usually possible for common convex functions such as geometric or harmonic mean. For functions $\psi$ for which the Fenchel conjugate is difficult to compute analytically, one can have the $\xi$-player execute OGD on $\cL_1$ instead of playing best response (see Algorithm \ref{algo:lagrangian-all-ogd} in the appendix). Even in scenarios where the form of $\psi$ is not known, and it can be accessed only as a black-box, as long as $K$ is small, we can still apply our algorithms by partitioning the range $[0,1]^K$ into a $K$-dimensional grid and performing the minimization over $\xi$ through a brute-force search over this grid.
\end{rem}
\fi

\section{Generalized Rate Metric Constraints}
In this section, we describe how to apply our approach to the constrained optimization problem in \eqref{eq:non-linear-opt}.
We consider two cases: the constraint functions $\phi^j$'s are (1) convex in the rates, and (2) summation of ratios in the rates. 

\subsection{Case of Convex $\phi$}
\label{sec:convex-constraint}
We start with the case where the $\phi^j$'s are strictly jointly convex, monotonic in each argument and $L$-Lipschitz w.r.t.\ the $\ell_\infty$-norm in $[0,1]^K$, and $g$ is a bounded convex function. For convenience, we assume that the $\phi^j$'s are monotonically increasing in all arguments.  Constraints on the KL-divergence and G-mean metrics are examples of this setting. 

We introduce 
 a set auxiliary variables $\xi_{1}, \ldots, \xi_{K}$ for the $K$ rate functions and re-write \eqref{eq:non-linear-opt} as:
\begin{equation*}
    \min_{\theta \in \Theta,~ \xi \in \cR}\, g(\theta)
\end{equation*}
\vspace{-10pt}
\begin{eqnarray}
~&\text{s.t.}&
\phi^j\left(\xi_{1}, \ldots, \xi_{K}\right)
    \,\leq\, 0,~~ \forall j \in [J]
    \label{eq:cons-phi}
\\
~&&
R_k(\theta) \,\leq\, \xi_{k},~~ \forall k \in [K].
    \label{eq:cons-xi}
\end{eqnarray}
The Lagrangian for the re-written problem is given below, where $\lambda_{1}, \ldots, \lambda_{J} \in \R_+$ and
$\lambda_{J+1}, \ldots, \lambda_{J+K} \in \R_+$  are the Lagrange multipliers for the constraints in \eqref{eq:cons-phi} and \eqref{eq:cons-xi} respectively:
\begin{eqnarray}
\vspace{-5pt}
\cL(\theta, \xi; \lambda) 
&=&
\textstyle
\underbrace{
\sum_{j=1}^J
\lambda_{j}\,
\phi^j(\xi_{1}, \ldots, \xi_K) \,-\, \sum_{k=1}^K \lambda_{J+k}\, \xi_{k}
}_{\cL_1(\xi; \lambda)}
\,+\,
\underbrace{
g(\theta)
\,+\,
\sum_{k=1}^K \lambda_{J+k}\, R_k(\theta)
}_{\cL_2(\theta; \lambda)}.
\label{eq:L-min-expand-constrained}
\vspace{-5pt}
\end{eqnarray}

As before, we expand the search space to include stochastic models in $\Delta_\Theta$, restrict the Lagrange multipliers to a bounded set $\Lambda \,=\, \big\{\lambda \in \R_+^{J+K}\,\big|\, 
\|\lambda\|_1 \leq \kappa\big\}$, 
and formulate a max-min problem:
\begin{equation}
    \max_{\lambda \in \Lambda}\,\min_{\mu \in \Delta_\Theta,\, \xi\in \cR}\,\cL_1(\xi; \lambda) \,+\, \cL_2(\mu; \lambda).
\label{eq:custom-opt-as-max-min-1}
\end{equation}
%
%
One can now apply Algorithm \ref{algo:lagrangian-ideal} and \ref{algo:lagrangian-surrogate} to this problem. For Algorithm \ref{algo:lagrangian-ideal}, the $\theta$-player uses the CSO oracle to optimize $\cL_2$, and for Algorithm \ref{algo:lagrangian-surrogate}, the $\theta$-player uses OGD to optimize a surrogate Lagrangian $\tL_2(\mu; \lambda) \,=\, \sum_{k=1}^K \lambda_{J+k}\, \tR_k(\mu)$. The best response strategy for the $\xi$-player is given by:
\begin{lem}
Let  $\Phi(\xi) \,=\, \sum_{j=1}^J \lambda_j\, \phi^j(\xi)$ with $\lambda_j > 0$ for some $j \in [J]$. Let 
$\Phi^*: \R_+^K  \> \R_+$ denote the Fenchel conjugate of $\Phi$.
Then: 
$$
\nabla \Phi^*(
\lambda_{J+1}, 
\ldots,
\lambda_{J+K}
) \,\in\,
\argmin{\xi \in \cR}\,\cL_1(\xi; \lambda).
$$
\label{lem:xi-opt-closed-form-constrained}
\vspace{-10pt}
\end{lem}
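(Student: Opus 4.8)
The plan is to identify the $\xi$-player's subproblem with (a sign flip of) the Fenchel conjugate of $\Phi$ evaluated at the tail block of $\lambda$, and then read off the minimizer from the conjugate--subgradient correspondence. Write $v = (\lambda_{J+1},\dots,\lambda_{J+K}) \in \R_+^K$. From \eqref{eq:L-min-expand-constrained},
\[
\cL_1(\xi;\lambda) \;=\; \sum_{j=1}^J \lambda_j\,\phi^j(\xi) \;-\; \sum_{k=1}^K \lambda_{J+k}\,\xi_k \;=\; \Phi(\xi) - \langle v,\xi\rangle, \qquad \xi \in \cR .
\]
Since every $\phi^j$ is strictly jointly convex and $\lambda_j \ge 0$ with $\lambda_j > 0$ for at least one $j$, the function $\Phi$ is strictly convex on $\cR$; extending it by $+\infty$ off $\cR$ gives a closed proper (strictly) convex function on $\R^K$, so $\cL_1(\cdot;\lambda)$ is itself strictly convex and has at most one minimizer over $\cR$.

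Next I would unwind the definition of the conjugate, $\Phi^*(u) = \sup_{\xi}\bigl(\langle u,\xi\rangle - \Phi(\xi)\bigr)$, to obtain
\[
-\Phi^*(v) \;=\; \inf_{\xi}\bigl(\Phi(\xi) - \langle v,\xi\rangle\bigr) \;=\; \min_{\xi \in \cR}\cL_1(\xi;\lambda),
\]
so the optimal value of the $\xi$-player's subproblem is exactly $-\Phi^*(v)$, and it remains to show the minimizer is $\nabla\Phi^*(v)$. A point $\xi^\star$ attains the supremum defining $\Phi^*(v)$ iff $v \in \partial\Phi(\xi^\star)$ (first-order optimality for the convex map $\Phi-\langle v,\cdot\rangle$), and for a closed proper convex $\Phi$ this holds iff $\xi^\star \in \partial\Phi^*(v)$; thus the set of minimizers of $\cL_1(\cdot;\lambda)$ equals $\partial\Phi^*(v)$. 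Strict convexity of $\Phi$ forces this set to be a singleton, hence $\Phi^*$ is differentiable at $v$ with $\partial\Phi^*(v) = \{\nabla\Phi^*(v)\}$, and therefore $\nabla\Phi^*(v)$ is the unique element of $\argmin{\xi\in\cR}\cL_1(\xi;\lambda)$, which is the claim. Lemma~\ref{lem:xi-opt-closed-form} then drops out as the special case $J = 1$, $\phi^1 = \psi$, $\lambda_1 = 1$, where $\Phi = \psi$.

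The one point that needs genuine care — and what I expect to be the main obstacle — is guaranteeing that $v$ lies in $\mathrm{int}(\dom\Phi^*)$: this is what makes $\partial\Phi^*(v)$ nonempty (so the minimum is attained, not merely an infimum) and what yields differentiability rather than just subdifferentiability. Here I would invoke the Lipschitz hypothesis: since each $\phi^j$ is $L$-Lipschitz in $\|\cdot\|_\infty$ on the relevant region, $\Phi$ is $\bigl(L\sum_j\lambda_j\bigr)$-Lipschitz there, which forces $\dom\Phi^*$ to be contained in $\{u : \|u\|_1 \le L\sum_j\lambda_j\}$ and, more importantly, to contain a neighbourhood of any $v$ whose $\ell_1$-norm is strictly below that threshold; in the intended use $\lambda$ ranges over the bounded set $\Lambda$ whose radius $\kappa$ is chosen in the convergence analysis precisely so this margin holds (mirroring the ``$\|\lambda\|_1 \le L$'' condition in Lemma~\ref{lem:xi-opt-closed-form}). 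I would make this quantitative dependence explicit and dispose of degenerate boundary cases — e.g.\ coordinates of $v$ equal to $0$, where the minimizer of $\cL_1$ sits on $\partial\cR$ — by appealing to monotonicity of $\Phi$ (which keeps $\nabla\Phi^*(v)$ well defined and equal to that boundary minimizer) or by phrasing the differentiability in terms of one-sided derivatives along $\cR$.
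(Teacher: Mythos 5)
Your argument is correct and takes essentially the same route as the paper: the paper also identifies $\min_{\xi \in \cR}\cL_1(\xi;\lambda)$ with $-\Phi^*(\lambda_{J+1},\ldots,\lambda_{J+K})$ and reads off the minimizer from the Fenchel--Young equality (statement 3 of Lemma \ref{lem:app-Lagrange-opt-constrained}, following the steps of statement 3 of Lemma \ref{lem:app-fenchel-conjugate}), with strict convexity of the $\phi^j$'s supplying uniqueness and differentiability of $\Phi^*$. The attainment/domain issue you flag at the end is the one point the paper treats only implicitly (it asserts $\Phi^*$ is defined and differentiable from strict convexity and monotonicity, and relegates the degenerate case $\lambda_1=\cdots=\lambda_J=0$, where the best response is unbounded, to a footnote), so your extra care there refines rather than departs from their argument.
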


\begin{thm}
    \label{thm:convergence-ideal-constrained}
    Let $\theta^1, \ldots, \theta^T$ be the iterates generated by Algorithm \ref{algo:lagrangian-ideal} for \eqref{eq:non-linear-opt}, and let $\bar{\mu}$ be a stochastic model with a probability mass of $\frac{1}{T}$ on $\theta^t$. 
    Suppose there exists a $\mu' \in \Delta_\Theta$ such that $\phi^j(\bR(\mu')) \leq -\gamma, \, \forall j \in [J]$, for some $\gamma > 0$. Let $B_g \,=\, \max_{\theta \in \Theta}\,g(\theta)$.
    Let $\mu^* \in \Delta_\Theta$ be such that $\mu^*$ is feasible, i.e.\ $\phi^j({\bR}(\mu^*)) \leq 0,\,\forall j \in [J]$, and $\E_{\theta \sim \mu^*}\left[g(\theta)\right] \leq \E_{\theta \sim \mu}\left[g(\theta)\right]$ for every $\mu \in \Delta_{\Theta}$ that is feasible. 
     Let 
    $B_\lambda \,\geq\, \max_{t}\|\nabla_{\blambda}\cL(\bxi^t,\theta^t; \blambda^t)\|_2$. Then setting $\kappa \,=\, \frac{2(L+1)B_g}{\gamma}$ and $\eta = \frac{\kappa}{ B_{\blambda}\sqrt{2T}}$, we have w.p.\ $\geq 1 - \delta$ over draws of stochastic gradients:
    \begin{equation*}
        \E_{\theta \sim \bar{\mu}}\left[g(\theta)\right] \,\leq\,
        \E_{\theta \sim \mu^*}\left[g(\theta)\right]
         \,+\, \cO\bigg(\sqrt{\frac{\log(1/\delta)}{T}}\,+\, \rho\bigg)
        ~~~~~\text{and}~~~~~
        \phi^j(\bR(\bar{\mu}))
        \,\leq\,
        \cO\bigg(\sqrt{\frac{\log(1/\delta)}{T}}\,+\, \rho\bigg),~~~\forall j \in [J].
    \end{equation*}
\end{thm}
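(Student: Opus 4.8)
The plan is to treat Algorithm~\ref{algo:lagrangian-ideal} as no-regret dynamics for the zero-sum game \eqref{eq:custom-opt-as-max-min-1}, argue that the time-averaged iterates form an approximate saddle point of the Lagrangian $\cL$, and then read off the optimality and near-feasibility bounds from the saddle-point inequalities, using the strictly feasible $\mu'$ to justify the choice of the radius $\kappa$.

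\emph{Step 1 (no-regret $\Rightarrow$ approximate saddle point).} The $\lambda$-player runs projected online gradient ascent over the $\ell_1$-ball $\Lambda$ of radius $\kappa$ on a payoff linear in $\lambda$; with the stated step size its regret against any $\lambda\in\Lambda$ is $\cO(\kappa B_\lambda\sqrt{T})$. The only randomness in the $\lambda$-gradient is the unbiased, bounded estimate $\hat R_k(\theta^t)$ of $R_k(\theta^t)$, so the gap between the stochastic and true gradients is a bounded martingale difference sequence and an Azuma--Hoeffding bound inflates the regret by only $\cO(\kappa\sqrt{T\log(1/\delta)})$ with probability $\ge 1-\delta$. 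The $\xi$-player plays the exact best response to $\cL_1(\cdot;\lambda^t)$ (Lemma~\ref{lem:xi-opt-closed-form-constrained}) and the $\theta$-player a $\rho$-approximate best response to $\cL_2(\cdot;\lambda^t)$ (Definition~\ref{defn:cso}); since $\cL_2(\mu;\lambda)=\E_{\theta\sim\mu}[\cL_2(\theta;\lambda)]$, best response over $\Theta$ is also best response over $\Delta_\Theta$. Averaging the per-round inequalities and using convexity of $\cL_1$ in $\xi$ (Jensen for $\bar{\xi}=\tfrac1T\sum_t\xi^t$), linearity of $\cL_2$ in $\mu$ (so $\bar{\mu}$ with mass $\tfrac1T$ on each $\theta^t$ realizes the average), and linearity of $\cL$ in $\lambda$, gives: for all $\mu\in\Delta_\Theta$, $\xi\in\cR$, $\lambda\in\Lambda$,
\[
\cL(\bar{\mu},\bar{\xi};\lambda)\;\le\;\cL(\mu,\xi;\bar{\lambda})+\nu,\qquad \nu=\cO\!\Big(\kappa\sqrt{\tfrac{\log(1/\delta)}{T}}\Big)+\rho,
\]
where $\bar{\lambda}=\tfrac1T\sum_t\lambda^t\in\Lambda$. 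As $\kappa$ is fixed, $\nu=\cO(\sqrt{\log(1/\delta)/T})+\rho$.

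\emph{Step 2 (optimality and feasibility from the saddle point).} Evaluating the right side at $(\mu,\xi)=(\mu^*,\bR(\mu^*))$, the terms $\bar{\lambda}_{J+k}(R_k(\mu^*)-\xi_k)$ vanish and $\sum_j\bar{\lambda}_j\phi^j(\bR(\mu^*))\le0$ since $\mu^*$ is feasible and $\bar{\lambda}\ge0$, so $\cL(\bar{\mu},\bar{\xi};\lambda)\le\E_{\theta\sim\mu^*}[g(\theta)]+\nu$ for all $\lambda\in\Lambda$; taking $\lambda=0$ gives the optimality bound $\E_{\bar{\mu}}[g]\le\E_{\mu^*}[g]+\nu$. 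For feasibility, let $m=\max\{0,\max_j\phi^j(\bar{\xi}),\max_k(R_k(\bar{\mu})-\bar{\xi}_k)\}$; maximizing the $\lambda$-linear map over $\Lambda$ gives $\max_{\lambda\in\Lambda}\cL(\bar{\mu},\bar{\xi};\lambda)=\E_{\bar{\mu}}[g]+\kappa m$, hence $\E_{\bar{\mu}}[g]+\kappa m\le\E_{\mu^*}[g]+\nu$. To turn this into a bound on $m$ I mix $\bar{\mu}$ with $\mu'$: by monotonicity and $L$-Lipschitzness of each $\phi^j$, $\phi^j(\bR(\bar{\mu}))\le\phi^j(\bar{\xi})+L\max_k(R_k(\bar{\mu})-\bar{\xi}_k)_+\le(1+L)m$; since rates are linear in the mixing weight and $\phi^j$ is convex in the rates, the mixture $(1-t)\bar{\mu}+t\mu'$ becomes feasible at $t=\tfrac{(1+L)m}{(1+L)m+\gamma}$, which forces $\E_{\mu^*}[g]\le(1-t)\E_{\bar{\mu}}[g]+tB_g$, i.e.\ $\E_{\bar{\mu}}[g]\ge\E_{\mu^*}[g]-\tfrac{(1+L)B_g}{\gamma}m$. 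Substituting back, $(\kappa-\tfrac{(1+L)B_g}{\gamma})m\le\nu$; the choice $\kappa=\tfrac{2(1+L)B_g}{\gamma}$ makes the coefficient $\tfrac{(1+L)B_g}{\gamma}$, so $m\le\tfrac{\gamma\nu}{(1+L)B_g}$ and therefore $\phi^j(\bR(\bar{\mu}))\le(1+L)m=\cO(\nu)=\cO(\sqrt{\log(1/\delta)/T}+\rho)$ for all $j$.

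\emph{Main obstacle.} The delicate part is the feasibility argument in Step~2: the naive estimate from $\max_\lambda\cL(\bar{\mu},\bar{\xi};\lambda)\le\E_{\mu^*}[g]+\nu$ alone only gives $m=\cO(\nu/\kappa)$, which is a constant when $\kappa$ is fixed, so one needs the perturbation/mixing argument with the strictly feasible $\mu'$ to replace that $1/\kappa$ by a genuine $1/\sqrt{T}$ decay, and this is exactly what pins down $\kappa=2(1+L)B_g/\gamma$ (large enough that $\kappa$ exceeds the ``sensitivity'' $(1+L)B_g/\gamma$ by a constant factor). Some care is also needed to carry the extra $\xi$-player through: every comparator used on the right of the saddle inequality is taken of the form $\bR(\mu')$ or $\bR(\mu^*)\in\cR$ so the $\xi$-coupling terms drop out, and the passage from $\bar{\xi}$-feasibility to $\bR(\bar{\mu})$-feasibility uses monotonicity plus Lipschitzness of $\phi^j$. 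The remaining pieces — the OGD regret bound, its high-probability version under stochastic gradients, and the averaging/Jensen step — are standard.
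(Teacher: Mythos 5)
Your proposal is correct, and its overall skeleton (no-regret play of the three players, averaging to an approximate saddle point of $\cL$, then reading off optimality by plugging in $(\mu^*,\bR(\mu^*))$ and $\lambda=0$) matches the paper's proof in Appendix F, including the use of Jensen for the convex-in-$\xi$, linear-in-$\mu$ Lagrangian and the monotone-Lipschitz lemma to pass from $\bar{\xi}$-feasibility to $\bR(\bar{\mu})$-feasibility. Where you genuinely diverge is the feasibility step: the paper first bounds the norm of an optimal multiplier, $\|\lambda^*\|_1 \le (L+1)B_g/\gamma$ (its Lemma \ref{lem:app-Lagrange-radius-constrained}, proved by plugging the strictly feasible $\mu'$ into $\min_{\xi,\mu}\cL(\cdot;\lambda^*)$), chooses $\kappa \ge 2\|\lambda^*\|_1$, and then perturbs the \emph{dual} comparator to $\lambda^*+(\kappa/2)e_{j'}$ so that $\frac{1}{T}\sum_t\cL(\xi^t,\theta^t;\lambda^*)\ge\E_{\mu^*}[g]$ cancels the right-hand side and the violation is bounded by $2(\rho+\epsilon_\lambda)/\kappa$ without any $B_g$ in the numerator. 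You instead keep $\kappa e_{j'}$-type comparators (via $\max_{\lambda\in\Lambda}$) and repair the resulting $\E_{\mu^*}[g]-\E_{\bar{\mu}}[g]$ term by a \emph{primal} perturbation: mixing $\bar{\mu}$ with the margin-feasible $\mu'$ shows $\E_{\bar{\mu}}[g]\ge\E_{\mu^*}[g]-\tfrac{(L+1)B_g}{\gamma}m$, which the choice $\kappa=2(L+1)B_g/\gamma$ then dominates. Both routes use exactly the same margin hypothesis and give the same $\kappa$ and the same $\tO(1/\sqrt{T})+\rho$ rates; yours is more self-contained (no explicit characterization of $\lambda^*$ needed, and it handles the $\phi^j$- and slack-constraints in one shot through $m$), while the paper's yields the reusable dual-norm bound that motivates how $\kappa$ is interpreted and set elsewhere. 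One small caveat, shared with the paper's own Lemma \ref{lem:app-Lagrange-radius-constrained}: your step $t(B_g-\E_{\bar{\mu}}[g])\le tB_g$ implicitly uses $g\ge 0$ (or a lower bound on $g$), which is the same implicit assumption the paper makes, so it is not a gap relative to the paper's standard.
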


See Appendix \ref{app:thm-convergence-ideal-constrained} for the proof. 
We have thus shown that Algorithm \ref{algo:lagrangian-ideal} outputs a stochastic model that has an objective close to the optimal feasible solution for \eqref{eq:non-linear-opt}, while also closely satisfying the constraints. 
We next present a convergence result for Algorithm \ref{algo:lagrangian-surrogate}.
\begin{thm}
    \label{thm:convergence-surrogate-constrained}
    Let $\theta^1, \ldots, \theta^T$ be the iterates of Algorithm \ref{algo:lagrangian-ideal} for \eqref{eq:non-linear-opt}, and let $\bar{\mu}$ be a stochastic model with probability $\frac{1}{T}$ on $\theta^t$.  
    Let $\tilde{\Theta} \,=\, \big\{\theta \in \Theta \,|\, 
    \forall j, \,\tilde{\bR}(\theta) \in \dom\,\phi^j,~\forall j\big\}$. Let $\tilde{\theta}^* \in \tilde{\Theta}$ be such that it satisfies the surrogate-relaxed constraints
    $\phi^j(\tilde{\bR}(\tilde{\theta}^*)) \leq 0,\,\forall j \in [J]$,
    and $g(\tilde{\theta}^*) \leq g({\theta})$ for every $\theta \in \tilde{\Theta}$ that satisfies the same constraints. 
    Let $B_\Theta \geq \max_{\theta \in \Theta}\,\|\theta\|_2$, $B_\theta \geq \max_{t}\|\nabla_{\theta}\tL_2( \theta^t; \blambda^t)\|_2$ and
    $B_\lambda \,\geq\, \max_{t}\|\nabla_{\blambda}\cL(\bxi^t, \theta^t; \blambda^t)\|_2$. Then setting $\kappa \,=\, (L+1)T^{\omega}$ for $\omega \in (0, 0.5)$, $\eta_\theta = \frac{B_\Theta}{B_{\theta}\sqrt{2T}}$ and $\eta_\lambda = \frac{\kappa}{B_{\blambda}\sqrt{2T}}$, we have w.p. $\geq 1 - \delta$ over draws of stochastic gradients:
    \begin{equation*}
        \E_{\theta \sim \bar{\mu}}\left[g(\theta)\right] \,\leq\,
        g(\tilde{\theta}^*)
         \,+\, \cO\bigg(\frac{\sqrt{\log(1/\delta)}}{T^{1/2-\omega}}\bigg)
        ~~~~~\text{and}~~~~~
        \phi^j(\bR(\bar{\mu}))
        \,\leq\,
        \cO\bigg(\frac{\sqrt{\log(1/\delta)}}{T^{\omega}}\bigg),~~~\forall j \in [J].
    \end{equation*}    
    \vspace{-10pt}
\end{thm}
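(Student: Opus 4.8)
The plan is to run the standard no-regret aggregation argument for the three-player dynamics, while carefully tracking the asymmetry created by the $\theta$-player optimizing the surrogate Lagrangian $\tL_2$ whereas the $\lambda$-player optimizes the true Lagrangian $\cL$ (with the true rates). First I would collect the per-player guarantees for Algorithm~\ref{algo:lagrangian-surrogate}. The $\xi$-player plays best response, so by Lemma~\ref{lem:xi-opt-closed-form-constrained} it has zero regret: $\cL_1(\xi^t;\lambda^t) \le \cL_1(\hat\xi;\lambda^t)$ for every $\hat\xi\in\cR$, $t$. The $\theta$-player runs stochastic OGD on $\tL_2(\cdot;\lambda^t)$, which is convex in $\theta$ (the $\tR_k$ are convex) on the bounded set $\Theta$; combining the $\cO(B_\Theta B_\theta\sqrt{T})$ deterministic OGD regret with an Azuma--Hoeffding bound on the martingale $\sum_t\langle\nabla_\theta\tL_2(\theta^t;\lambda^t)-\E[\cdot],\theta^t-\hat\theta\rangle$ gives, w.p.\ $\ge 1-\delta/2$, $\frac1T\sum_t\tL_2(\theta^t;\lambda^t)\le\frac1T\sum_t\tL_2(\hat\theta;\lambda^t)+\epsilon_\theta$ for all $\hat\theta\in\Theta$, where $\epsilon_\theta=\cO(B_\Theta B_\theta\sqrt{\log(1/\delta)/T})$ and $B_\theta$ (hence $\epsilon_\theta$) scales with $\kappa$ because $\nabla_\theta\tL_2=\sum_k\lambda_k\nabla_\theta\tR_k$. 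Similarly, the $\lambda$-player runs stochastic OGD on the linear-in-$\lambda$ map $\cL(\theta^t,\xi^t;\cdot)$ over $\Lambda$ (of $\ell_1$-radius $\kappa$), whose only stochasticity is in the unbiased estimates $\hat R_k(\theta^t)$; this yields, w.p.\ $\ge 1-\delta/2$, $\frac1T\sum_t\cL(\theta^t,\xi^t;\hat\lambda)\le\frac1T\sum_t\cL(\theta^t,\xi^t;\lambda^t)+\epsilon_\lambda$ for all $\hat\lambda\in\Lambda$, with $\epsilon_\lambda=\cO(\kappa B_\lambda\sqrt{\log(1/\delta)/T})$.

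The key structural fact is that $R_k(\theta)\le\tR_k(\theta)$ together with $\lambda_{J+k}\ge0$ gives $\cL_2(\theta;\lambda)\le\tL_2(\theta;\lambda)$ for all $\theta\in\Theta$, $\lambda\in\Lambda$ — the surrogate inflates the true Lagrangian, and in the favorable direction. I would then chain the guarantees: expand $\frac1T\sum_t\cL(\theta^t,\xi^t;\lambda^t)=\frac1T\sum_t[\cL_1(\xi^t;\lambda^t)+\cL_2(\theta^t;\lambda^t)]$, apply $\cL_2\le\tL_2$, then the $\xi$-best-response and the $\theta$-regret bound, and finally linearity in $\lambda$ (replacing $\frac1T\sum_t$ of a linear function of $\lambda^t$ by its value at $\bar\lambda=\frac1T\sum_t\lambda^t$), obtaining $\frac1T\sum_t\cL(\theta^t,\xi^t;\lambda^t)\le\cL_1(\hat\xi;\bar\lambda)+\tL_2(\hat\theta;\bar\lambda)+\epsilon_\theta$ for every $\hat\theta\in\Theta$, $\hat\xi\in\cR$. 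Combining with the $\lambda$-regret bound and Jensen's inequality on the left (convexity of $\cL_1$ in $\xi$ since $\hat\lambda_j\ge0$, and linearity of $\cL_2$ in $\mu$) produces the master inequality: for all $\hat\lambda\in\Lambda$, $\hat\theta\in\Theta$, $\hat\xi\in\cR$,
\[
\cL(\bar\mu,\bar\xi;\hat\lambda)\ \le\ \cL_1(\hat\xi;\bar\lambda)+\tL_2(\hat\theta;\bar\lambda)+\epsilon_\theta+\epsilon_\lambda,
\]
where $\bar\xi=\frac1T\sum_t\xi^t$ and $\bar\mu$ is the uniform mixture over $\theta^1,\dots,\theta^T$.

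From here I extract both conclusions. Taking $\hat\theta=\tilde\theta^*$ and $\hat\xi=\tbR(\tilde\theta^*)$: since $\phi^j(\tbR(\tilde\theta^*))\le0$ and $\bar\lambda\ge0$, the right-hand side collapses to exactly $g(\tilde\theta^*)$ (the $\sum_k\bar\lambda_{J+k}\tR_k(\tilde\theta^*)$ terms in $\cL_1$ and $\tL_2$ cancel and the $\phi^j$ terms are nonpositive), so $\cL(\bar\mu,\bar\xi;\hat\lambda)\le g(\tilde\theta^*)+\epsilon_\theta+\epsilon_\lambda$ for all $\hat\lambda\in\Lambda$. Setting $\hat\lambda=0$ gives $\E_{\theta\sim\bar\mu}[g(\theta)]=\cL(\bar\mu,\bar\xi;0)\le g(\tilde\theta^*)+\epsilon_\theta+\epsilon_\lambda$, the objective bound. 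For the constraint bound, I maximize the left side over $\hat\lambda\in\Lambda$: writing $\cL(\bar\mu,\bar\xi;\hat\lambda)=\E_{\bar\mu}[g]+\sum_j\hat\lambda_j\phi^j(\bar\xi)+\sum_k\hat\lambda_{J+k}(R_k(\bar\mu)-\bar\xi_k)$, the maximum over the $\ell_1$-ball of radius $\kappa$ is $\E_{\bar\mu}[g]+\kappa\max\{0,\,\max_j\phi^j(\bar\xi),\,\max_k(R_k(\bar\mu)-\bar\xi_k)\}$; using $\E_{\bar\mu}[g]\ge0$ (WLOG by shifting $g$) yields $\max_j\phi^j(\bar\xi)\le(B_g+\epsilon_\theta+\epsilon_\lambda)/\kappa$ and $\max_k(R_k(\bar\mu)-\bar\xi_k)\le(B_g+\epsilon_\theta+\epsilon_\lambda)/\kappa$. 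Finally, since each $\phi^j$ is monotonically increasing and $L$-Lipschitz in $\ell_\infty$, $\phi^j(\bR(\bar\mu))\le\phi^j\!\big(\bar\xi+\tfrac{B_g+\epsilon_\theta+\epsilon_\lambda}{\kappa}\1\big)\le\phi^j(\bar\xi)+L\tfrac{B_g+\epsilon_\theta+\epsilon_\lambda}{\kappa}\le(1+L)\tfrac{B_g+\epsilon_\theta+\epsilon_\lambda}{\kappa}$. Plugging in $\kappa=(L+1)T^\omega$, $\eta_\theta$, $\eta_\lambda$, noting $\epsilon_\theta+\epsilon_\lambda=\cO(\kappa\sqrt{\log(1/\delta)/T})=\cO(T^{\omega-1/2}\sqrt{\log(1/\delta)})$, and taking a union bound over the two high-probability events gives the objective gap $\cO(T^{-(1/2-\omega)}\sqrt{\log(1/\delta)})$ and the constraint violation $\cO(T^{-\omega}\sqrt{\log(1/\delta)})$.

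I expect the main obstacles to be: (i) keeping the one-sided surrogate inequality $\cL_2\le\tL_2$ on the correct side of every chained bound — it may be used only when \emph{upper}-bounding $\frac1T\sum_t\cL(\theta^t,\xi^t;\lambda^t)$, never inside the Jensen lower bound; (ii) the stochastic-gradient concentration, where one must verify the $\kappa=T^\omega$ scaling contributes only a $T^{\omega-1/2}$ term (this forces $\omega<1/2$ and explains the tradeoff between the two rates); and (iii) the domain technicality in the last step, namely that $\bar\xi$ and its small upward perturbation must lie where $\phi^j$ is defined and $L$-Lipschitz — this is precisely why the comparator is restricted to $\tilde\Theta$ and why the $\xi$-player operates over $\cR=\R_+^K$ rather than an unrestricted space.
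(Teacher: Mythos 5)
Your proposal is correct and follows essentially the same route as the paper's proof: establish the approximate coarse-correlated equilibrium from the $\xi$-best-response and the two OGD regret bounds, use the one-sided inequality $\cL_2 \le \tL_2$ only when upper-bounding $\frac{1}{T}\sum_t \cL(\xi^t,\theta^t;\lambda^t)$, take $\hat\lambda = 0$ for the objective, vertex choices $\hat\lambda = \kappa e_i$ for feasibility, and transfer from $\bar\xi$ to $\bR(\bar\mu)$ via monotonicity and $\ell_\infty$-Lipschitzness before plugging in $\kappa=(L+1)T^{\omega}$. The only (harmless) deviation is that you evaluate the surrogate side directly at the comparator $\big(\tilde\theta^*,\,\tbR(\tilde\theta^*)\big)$, whereas the paper routes this step through a separate strong-duality lemma bounding $\max_{\lambda}\min_{\xi,\theta}\tL$ by $g(\tilde\theta^*)$; both yield the same bound.
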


See Appendix \ref{app:thm-convergence-surrogate-constrained} for the proof.
The proof is an adaptation of the equilibrium analysis in Agarwal et al.\ (2018) \cite{Agarwal+18} to non-zero-sum games. We point out that despite the $\theta$-player optimizing surrogate functions, 
the final stochastic classifier is near-feasible for the original rate metrics. We also note that this result holds even if the surrogates output values outside the domain of the constraint functions $\phi^j$'s  (e.g.\ with  KL-divergence based constraints). 

While the above convergence rate  is not as good as the standard $\cO(1/\sqrt{T})$ rate achievable for OGD, this is similar to the guarantees shown by e.g.\ Agarwal et al.\ \cite{Agarwal+18} for a linear rate-constrained optimization problem.\footnote{See e.g.\ Theorem 3 in their paper, where, for $T = \cO(n^{4\alpha})$ iterations, the error bound is $\tilde{\cO}(n^{-\alpha}) = \tilde{\cO}(T^{-1/4})$.} The reason for the poorer convergence rate is that we are unable to fix the radius $\kappa$ of the space of Lagrange multipliers $\Lambda$ to a constant, and instead set it to a function of $T$. The larger the radius $\kappa$, the tighter is the bound on the constraint violation, but worse is the bound on the objective $g$. The choice of $\omega$ strikes a trade-off between the tightness of the two bounds. 

In the case of the previous result for Algorithm \ref{algo:lagrangian-ideal} (see Theorem \ref{thm:convergence-ideal-constrained}), we were able to obtain a $\cO(1/\sqrt{T})$ rate by making an assumption that the constraints can be satisfied with a margin $\gamma$, and using this to set $\kappa$ to a constant that depends on $\gamma$. This does not work for Algorithm \ref{algo:lagrangian-surrogate} because of the mismatch in objectives optimized by the $\theta$- and $\lambda$-players.\footnote{Theorem \ref{thm:convergence-ideal-constrained} and \ref{thm:convergence-surrogate-constrained}  assume that the sequence of Lagrangian objectives  have bounded gradients. This may not hold in one particular scenario: if at an iteration $t$,  $\lambda^t_j = 0, \forall j \in [J]$ and $\lambda^t_k > 0$ for some $k \in [J+K]$, then the best response $\xi^t$ is unbounded, and this would result in the gradients of $\cL$ being unbounded. An easy fix to this problem is to define the set of allowable Lagrange multipliers $\Lambda$ to contain vectors for which the first $J$ coordinates are greater than or equal to a small constant. Our convergence results would still apply under an additional assumption that no unconstrained minimizer of $g$ satisfies all the constraints in \eqref{eq:non-linear-opt}.}

\begin{table}
    \centering
    \caption{Convergence guarantees for algorithms for \eqref{eq:non-linear-opt}. COCO \cite{Narasimhan18} is a previous algorithm and SPADE+ is an adaptation of SPADE \cite{Narasimhan+15b} to constrained problems. 
    Algorithms apply either to a general convex model class $\Theta$ or to a \textit{restricted} subset for which the surrogates evaluate to values within the domain of $\phi^j$'s; require access to a cost-sensitive \textit{oracle} or not; output a \textit{stochastic} classifier $\bar{\mu}$ or a deterministic classifier $\bar{\theta}$. We denote $g(\bar{\mu}) = \E_{\theta \sim \bar{\mu}}\left[g(\theta)\right]$ and $\omega \in (0,1/2)$. COCO requires $\phi^j$'s to be smooth. The bounds are  on the optimality gap and on the maximum constraint violation.
    }
    \vspace{5pt}
    \label{tab:guarantees-constrained}
    \begin{tabular}{lccccccccc}  
        \toprule
         \textbf{Alg.} & 
         \textbf{Restr.\ $\Theta$?} &
         \textbf{Oracle?} & 
         \textbf{Opt.\ Gap} &
         \textbf{Opt.\ Bound} &
         \textbf{Max.\ Viol.}
         \\
         \midrule
               COCO & 
               \xmark &
               \cmark &
               $
               \displaystyle
                g(\bar{\mu})
                \,-\, \min_{\substack{\theta \in \Theta\\ \phi^j({\bR}(\theta)) \leq 0,\,\forall j}} g(\theta)
                $
                &
                $\tO\left(\sqrt{\frac{1}{T}}\right) \,+\, \rho$
                &
                $\tO\left(\sqrt{\frac{1}{T}}\right) \,+\, \rho$
         \\
               SPADE+ & 
               \cmark &
               \xmark &
               $
               \displaystyle
                g(\bar{\theta})
                \,-\, \min_{\substack{\theta \in \Theta\\ \phi^j(\tilde{\bR}(\theta)) \leq 0,\,\forall j}} g(\theta)
                $
                &
                $
                \tO\Big(\sqrt{\frac{1}{T}}\Big)
                $
                &
                $
                \tO\Big(\sqrt{\frac{1}{T}}\Big)
                $
        \\
        \midrule
              Alg.\ \ref{algo:lagrangian-ideal} & 
               \xmark &
               \cmark &
               $
               \displaystyle
                g(\bar{\mu})
                \,-\, \min_{\substack{\mu \in \Delta_\Theta\\ \phi^j({\bR}(\mu)) \leq 0,\,\forall j}} g(\mu)
                $
                &
                $\tO\Big(\sqrt{\frac{1}{T}}\,+\, \rho\Big) $
                &
                $\tO\Big(\sqrt{\frac{1}{T}}\,+\, \rho\Big) $
        \\
               Alg.\ \ref{algo:lagrangian-surrogate} & 
               \xmark &
               \xmark &
                $
               \displaystyle
                g(\bar{\mu})
                \,-\, \min_{\substack{\theta \in \tilde{\Theta}\\ \phi^j(\tilde{\bR}(\theta)) \leq 0,\,\forall j}} g(\theta)
                $
                &
                $
                \tO\Big(\frac{1}{T^{1/2-\omega}}\Big)
                $
                &
                $
                \tO\Big(\frac{1}{T^{\omega}}\Big)
                $
        \\
         \bottomrule
    \end{tabular}
\end{table}

\begin{rem}[\textbf{Unanswered Question in Cotter et al.}]
Theorem \ref{thm:convergence-surrogate} resolves an unanswered question in Cotter et al. (2019) \cite{Cotter+19,Cotter+19b} on whether having the $\lambda$- and $\theta$-players perform OGD updates on the true and surrogate Lagrangian respectively, would lead to convergence. 
Cotter et al.\ consider optimization problems with linear rate constraints, formulate a non-zero-game like us, and consider two algorithms, one where both the $\theta$- and
$\lambda$-player seek to minimize external regret (through OGD updates), and the other where the $\theta$-player optimizes alone minimizes external regret, while the $\lambda$-player minimizes swap regret. They are however able to show convergence guarantees only for the
the swap regret algorithm, which arguably has a more complicated set of updates (requiring computing eigen vectors), and leave the analysis of the external regret algorithm unanswered. 
Theorem \ref{thm:convergence-surrogate-constrained} provides convergence guarantees for a generalization of their external regret algorithm.\footnote{Note that our approach can be  easily applied to linear rate constraints by  formulating the Lagrangian in \eqref{eq:L-min-expand-constrained} without the additional slack variables.} 

In their paper, Cotter et al. obtain a better $\cO(1/\sqrt{T})$ convergence rate for their swap regret algorithm. It is easy to show a similar convergence rate for an adaptation of this algorithm to our setting (see Appendix \ref{app:cotteretal}), but we stick to our present algorithm because of its simplicity.
\end{rem}

\begin{rem}[\textbf{Relationship to previous algorithms for \eqref{eq:non-linear-opt}}]
A previous oracle-based approach for solving \eqref{eq:non-linear-opt} is COCO \cite{Narasimhan18}, which extends the Frank-Wolfe approach for the unconstrained rate metrics \cite{Narasimhan+15} with an outer gradient ascent solver, but does  not directly fit into our three-player framework. As seen in Table \ref{tab:guarantees-constrained}, this method enjoys similar convergence guarantees as Algorithm \ref{algo:lagrangian-ideal}, but requires an additional smoothness assumption on $\phi^j$'s and a more complicated set of updates. 

Another previous approach that can be adapted to solve \eqref{eq:non-linear-opt} is the SPADE algorithm, originally proposed for unconstrained problems  \cite{Narasimhan+15b} (see Table \ref{tab:choices} and \ref{tab:guarantees-unconstrained}), where the $\theta$- and $\lambda$-players perform OGD updates on surrogate objectives, and the $\xi$-player plays best response. We call this adaptation of SPADE as SPADE+ and present its convergence guarantee in Table \ref{tab:guarantees-constrained} (with further details in Appendix \ref{app:spade+}). As with existing surrogate-based methods, SPADE+ requires that the surrogate rates do not evaluate to values outside the domain of $\phi^j$'s. Algorithm \ref{algo:lagrangian-surrogate} does not have this restriction. 
\end{rem}

\textbf{Shrinking.} 
The final stochastic classifier in all our theoretical results is defined by a uniform distribution over $T$ deterministic classifiers. The large support size may make this stochastic classifier undesirable in practice.
In our experiments, we post-process the iterates of our algorithms to construct a sparse stochastic classifier over only $J+1$ iterates (where recall $J$ is the number of constraints in \eqref{eq:non-linear-opt}). Specifically, we adopt the shrinking procedure of Cotter et al. (2019) \cite{Cotter+19b} and solve the following linear program over the $T$-dimensional simplex:
\begin{equation}
\min_{\mu \,\in\, \R_+^T,~ \sum_{t=1}^T \mu_t = 1}\, \sum_{t=1}^T \mu_t\,g(\theta^t)
~~\text{s.t.}~~
\sum_{t=1}^T \mu_t\,\phi^j(\theta^t) \,\leq\, 0,\,~\forall j \in [J].
\label{eq:shrinking}
\end{equation}
We then use the optimal weighting $\mu^*$ for this problem to construct the final stochastic classifier. Since \eqref{eq:shrinking} is a linear optimization over the simplex with $J$ linear constraints, the solution $\mu^*$ can be shown to have at most $J+1$ non-zero entries \cite{Cotter+19b}. 
Further, by convexity of $\phi^j$'s, this solution  is also feasible  for our original constrained problem \eqref{eq:non-linear-opt}, i.e. $\sum_{t=1}^T \mu^*_t\,\phi^j\left(\bR(\theta^t)\right) \,\leq\, 0 ~\Rightarrow~ \phi^j\left(\E_{\theta \sim \mu^*}[\bR(\theta)]\right) \,\leq\, 0, \forall j \in [J]$. Thus the final stochastic classifier constructed from \eqref{eq:shrinking} is both sparse and feasible, and as we shall see in our experiments, is often a better solution than a uniform distribution over all $T$ iterates.


\subsection{Case of Sum-of-ratios $\phi$}
\label{sec:sum-ratios-constraint}
Moving beyond convex functions of rates, we present heuristic surrogate-based algorithms for handling constraints that involve summation of fractional-linear functions of rates. 
For simplicity, we explain our approach with a single constraint and with a general objective:
\begin{equation}
    \min_{\theta \in \Theta}\, g(\theta)
~~~\text{s.t.}~~~
\sum_{m=1}^M\frac{\balpha_m^\top \bR(\theta)}{\bbeta_m^\top \bR(\theta)} \,\leq\, \gamma,
\tag{P3}
\label{eq:sum-of-ratios-opt}
\end{equation}
where 
$\balpha_m \in \R_+^K$ and $\bbeta_m \in \R_+^K$. We will assume that the numerators and denominators in  each ratio term is bounded, i.e.\ $\underline{a} \leq \alpha_m^\top \bR(\theta) \leq \bbeta_m^\top \bR(\theta) \leq \overline{b}, ~\forall \theta \in \Theta$ for some $\underline{a}, \overline{b} > 0$. We use our framework to provide two algorithms.

\begin{figure}
\begin{minipage}[H]{0.51\textwidth}
\begin{algorithm}[H]
\caption{Slack-ratios Optimizer}
\label{algo:slack-ratios}
\begin{algorithmic}
\STATE Initialize: $\ba^0$, $\bb^0$, $\theta^0$, $\blambda^0$
\FOR{$t = 0 $ to $T-1$}
\STATE $\ba^{t+1} := \Pi_{\cC}\left(\ba^t - \eta_\ba\nabla_\ba\,\cL_{sr}(\theta^t, \ba^t, \bb^t; \blambda^t)\right)$
\STATE $\bb^{t+1} := \Pi_{\cC}\left(\bb^t - \eta_\bb\nabla_\bb\,\cL_{sr}(\theta^t, \ba^t, \bb^t; \blambda^t)\right)$
\STATE $\theta^{t+1} := \Pi_{\Theta}(\theta^t - \eta_\theta\,\nabla_\theta\,\tL_{sr}(\theta^t; \ba^t, \bb^t\, \blambda^t))$
\STATE $\blambda^{t+1} := \Pi_{\Lambda}\left(\blambda^t + \eta_{\blambda}\,\nabla_{\blambda}\,\cL_{sr}(\theta^t, \ba^t, \bb^t;\, \blambda^t)\right)$
\ENDFOR
\RETURN $\theta^1, \ldots, \theta^T$
\end{algorithmic}
\end{algorithm}
\end{minipage}
\begin{minipage}[H]{0.5\textwidth}
\begin{algorithm}[H]
\caption{Biconvex Optimizer}
\label{algo:biconvex}
\begin{algorithmic}
\STATE Initialize: $\theta^0, \bu^0, \blambda^0$
\FOR{$t = 0 $ to $T-1$}
\STATE $\xi^{t}_{m} :=
\left(u^t_m\,\lambda_0^t\,/\,\lambda_m^t\right)^2,~~\forall m \in [M]
$
\STATE $\theta^{t+1} := \Pi_{\Theta}(\theta^t \,-\, \eta_\theta\,
\nabla_\theta\,\tilde{\cL}_{bc}(\theta^t, \xi^{t}, \bu^t; \lambda^t))$
\STATE $u^{t+1} := \Pi_{\cU}(\bu^t - \eta_{\bu}\,\nabla_{\bu}\,\cL_{bc}(\theta^t, \xi^{t}, \bu^t; \lambda^t))$ %
\STATE $\blambda^{t+1} := \Pi_{\Lambda}\big(\blambda^t\,+\, \eta_\lambda\,\nabla_{\blambda}\,\cL_{bc}(\theta^t, \xi^{t}, \bu^t; \lambda^t)$
\ENDFOR
\RETURN $\theta^1, \ldots, \theta^T$
\end{algorithmic}
\end{algorithm}
\end{minipage}
\caption{Optimizers for a sum-of-ratios constraint \eqref{eq:sum-of-ratios-opt}. 
In Algorithm \ref{algo:slack-ratios}, $\tL_{sr}(\theta, \ba, \bb; \lambda)$ is the Lagrangian function in \eqref{eq:lagrangian-sr} with the rates $R_k(\theta)$ replaced with surrogates $\tilde{R}_k(\theta)$, $\cC = [\underline{a}, \overline{b}]^M$ and $\Lambda \subseteq \R_+^{2M+1}$ is a bounded set. In Algorithm \ref{algo:biconvex}, $\tL_{bc}(\theta, \xi, \bu; \lambda)$ is the Lagrangian function in \eqref{eq:lagrangian-bc} with the rates $R_k(\theta)$ replaced with surrogates $\tilde{R}_k(\theta)$, $\cU = [0,\sqrt{\underline{a}}/\overline{b}]^M$, and $\Lambda \subseteq \R_+^{M+1}$ is a bounded set.}
\end{figure}

\paragraph{Slack-ratios optimizer.}
Our first approach introduces slack variables $a_1, \ldots, a_m$ and $b_1, \ldots, b_m$ for the numerators and denominators respectively to decouple the rates from the ratio terms:
\begin{equation}
    \min_{\theta \in \Theta,~ \ba,\bb \in [\underline{a}, \overline{b}]^M}\, g(\theta)
~~~\text{s.t.}~~~
\sum_{m=1}^M\frac{a_m}{b_m} \,\leq\, \gamma,~~
a_m \,\geq\, \balpha_m^\top \bR(\theta), ~~
b_m \,\leq\, \bbeta_m^\top \bR(\theta).
\end{equation}
We then formulate the Lagrangian for the above problem with multipliers $\lambda \in \R_+^{2M+1}$:
\begin{eqnarray}
\lefteqn{
\cL_{sr}(\theta, \ba, \bb; \lambda)
}
\label{eq:lagrangian-sr}
\\
&=&
g(\theta) \,+\,
\lambda_0
\left(
\sum_{m=1}^M\frac{a_m}{b_m} \,-\, \gamma
\right)
\,+\,
\sum_{m=1}^M \lambda_m (\balpha_m^\top \bR(\theta) \,-\, a_m)
\,+\,
\sum_{m=M+1}^{2M} \lambda_{m}(b_m \,-\, \bbeta_m^\top \bR(\theta)),
\nonumber
\end{eqnarray}
and seek to maximize the Lagrangian over $\lambda$ and minimize it over $\theta, \ba, \bb$:
\begin{equation}
\max_{\lambda \in \R_+^{2M+1}}\,\min_{\theta \in \Theta,~ \ba,\bb \in [\underline{a}, \overline{b}]^M}\,
\cL_{sr}(\theta, \ba, \bb; \lambda).
\label{eq:lagrangian-slack-ratios}
\end{equation}
Because $\cL_{sr}$ is non-convex in the slack variables $\ba, \bb$, strong duality may not hold, and an optimal solution for the dual problem \eqref{eq:lagrangian-slack-ratios} may not be optimal for \eqref{eq:sum-of-ratios-opt}. Yet, by performing OGD updates for the $\lambda$, $\theta$ and the slack variables, with the $\theta$-player alone optimizing the surrogate rates $\tilde{R}_k$, we obtain a heuristic algorithm for \eqref{eq:sum-of-ratios-opt}. The details are given in Algorithm \ref{algo:slack-ratios}.

\paragraph{Biconvex optimizer.}
Our second approach poses \eqref{eq:sum-of-ratios-opt} as a biconvex optimization problem. 
We adapt a trick from Pan et al.\ (2016) \cite{Weiwei+16} to replace the constraint with equivalent  expressions that do not contain ratios. It can be verified that for any $0 < a \leq b < 1$, the ratio $\frac{a}{b} \,=\, \min_{u \in \R}\,\{u^2b - 2u\sqrt{b - a} + 1\}$.  Define $\varphi(z, z'; u) = u^2z - 2u\sqrt{z'} + 1$,  which for a fixed $u$, is convex in $z$ and $z'$, and vice versa. 
Using auxiliary variables $\xi,\xi' \,\in\, [0,1]^M$, we can now re-write \eqref{eq:sum-of-ratios-opt} as:
\begin{equation*}
    \min_{\theta \in \Theta, ~\xi \,\in\, \R_+^M}\, g(\theta)
~~~\text{s.t.}~~~
\min_{u \in \R^M}\,
\sum_{m=1}^M
\varphi(\balpha_m^\top \bR(\theta), \, \xi_m;\, u_m)
\,\leq\, 
\gamma,
~~~~
\xi_m
\,\leq\, 
\bbeta_m^\top \bR(\theta) \,-\, \balpha_m^\top \bR(\theta),
\end{equation*}
and formulate the Lagrangian for this problem with multipliers $\lambda \in \R_+^{M+1}$:
\begin{eqnarray}
\lefteqn{
\cL(\theta, \xi; \lambda)
}
\label{eq:lagrangian-bc}
\\
&=&
g(\theta) \,+\,
\lambda_0
\left(
\min_{\bu \in \R^M}\,
\sum_{m=1}^M
\varphi(\balpha_m^\top \bR(\theta), \, \xi_m;\, u_m)
\,-\, \gamma
\right)
\,+\,
\sum_{m=1}^M \lambda_m\,
\big(
\xi_m \,-\, {\bbeta_m^\top\bR(\theta) \,+\,
\balpha_m^\top\bR(\theta)}
\big)
\nonumber
\\
&=&
\min_{\bu \in \R^M}
\underbrace{
g(\theta) \,+\,
\lambda_0
\left(
\sum_{m=1}^M
\varphi(\balpha_m^\top \bR(\theta), \, \xi_m;\, u_m)
\,-\, \gamma
\right)
\,+\,
\sum_{m=1}^M \lambda_m\,
\big(
\xi_m \,-\, {\bbeta_m^\top\bR(\theta) \,+\,
\balpha_m^\top\bR(\theta)}
\big)
}_{\cL_{bc}(\theta, \xi, \bu; \lambda)}.
\nonumber
\end{eqnarray} 
We seek to maximize $\cL_{bc}$ over  $\lambda$ and minimize it over $\theta, \xi, \bu$:
\[
\max_{\lambda \in \R_+^K}\, \min_{\substack{\theta \in \Theta\\\xi, \bu \,\in\, \R^M}}\,\cL_{bc}(\theta, \xi, \bu; \lambda).
\]
The biconvex structure of $\varphi$ allows us to adapt Algorithm \ref{algo:lagrangian-surrogate} to this problem, giving us Algorithm \ref{algo:biconvex}, where we perform best response on slack variables $\xi$, and OGD updates on $\lambda$, $\theta$ and $\bu$, with the $\theta$-player alone optimizing surrogate rates $\tilde{R}_k$.


\section{Experiments}
\label{sec:expts}
We conduct two types of experiments. In the first, we evaluate Algorithm \ref{algo:lagrangian-surrogate} on the task of optimizing a convex rate objective subject to linear rate constraints. We show that our approach outperforms an existing surrogate-based approach \cite{Narasimhan+15b}, and performs often as well as an oracle-based approach for this problem \cite{Narasimhan18} (and does so without having to make an idealized oracle assumption). In the second set of experiments, we evaluate Algorithms \ref{algo:slack-ratios} and \ref{algo:biconvex} on the task of optimizing a sum-of-ratios objective subject to sum-of-ratios constraints, and show that they outperform known baselines.

\begin{table}
    \centering
    \caption{Datasets used in our experiments.}
    \label{tab:datasets}
    \vspace{5pt}
    \begin{tabular}{cccc}
        \toprule
        Dataset & No. of instances & No. of features & Protected Attribute\\
        \toprule
        COMPAS & 4073 & 31 & Gender
        \\
        Communities \& Crime & 1495 & 135 & Race Percentage
        \\
        Law School & 15388 & 36 & Race
        \\
        Adult & 32561 & 122 & Gender
        \\
        WikiToxicity & 95692 & 100 & Term `Gay'
        \\
        Business & 11560 & 36 & Chain/Non-chain\\
        \bottomrule
    \end{tabular}
\end{table}

\subsection{Datasets and Setup}
\textbf{Datsets.} We use six datasets: (1) \textit{COMPAS}, where the goal is to predict recidivism with \textit{gender} as the protected attribute \cite{Angwin+16}; (2) \textit{Communities \& Crime}, where the goal is to predict if a community in the US has a crime rate above the $70$th percentile \cite{uci}, and as in \cite{Kearns+18}, we consider communities having a black population above the $50$th percentile as protected; 
(3) \textit{Law School}, where the task is to predict whether a law school student will pass the bar exam, with \textit{race} (black or other) as the protected attribute \cite{Wightman:1998}; (4) \textit{Adult}, where the task is to predict if a person's income exceeds 50K/year, with \textit{gender} as protected attribute \cite{uci}; (5) \textit{Wiki Toxicity}, where the goal is to predict if a comment posted on a Wikipedia talk page contains non-toxic/acceptable content, with the comments containing the term `\textit{gay}' considered as a protected group \cite{Dixon:2018}; (6) \textit{Business Entity Resolution}, a proprietary dataset from a large internet services company, where the task is to predict whether a pair of business descriptions refer to the same real business, with \textit{non-chain} businesses treated as protected. A summary of the datasets is provided in Table \ref{tab:datasets}. Wiki Toxicity is a text dataset, and we use an embedding \cite{pennington2014glove} to convert the text to numerical features. 

\textbf{Implementation Details.} All experiments use a linear model. 
We implemented Algorithms \ref{algo:lagrangian-surrogate}--\ref{algo:biconvex} using the open-source Tensorflow Constrained Optimization (TFCO) library\footnote{\url{https://github.com/google-research/tensorflow_constrained_optimization/}} of Cotter et al.\ (2019) \cite{Cotter+19,Cotter+19b}. We use hinge-based surrogates for the rates. We use Adam to perform exact gradient updates on $\theta$ and $\lambda$ and run our algorithms for a total of 5000 iterations. 
The datasets are split randomly into train-validation-test sets in the ratio 4/9:2/9:1/3, except WikiToxicity where we use the splits made available by the authors \cite{Dixon:2018}. 
Our code will be made available online.

\textbf{Hyperparameter Choices.} 
The step-sizes for the $\lambda$- and $\theta$-updates in the proposed algorithms were chosen from the range $\{0.001, 0.01, 0.1, 1.0\}$ using the validation set. We use a heuristic provided in Cotter et al.\ (2019) \cite{Cotter+19b} to pick the hyperparameters that best trade-off between the fairness objective and constraint violations. 
 We record snapshots of the iterates of our algorithms every 10 iterations and construct both a stochastic classifier and a deterministic classifier from the iterates. For the stochastic classifier, we apply the shrinking procedure described in Section \ref{sec:convex-constraint} to the recorded iterates (see \eqref{eq:shrinking}). 
For the deterministic classifier, we pick the \textit{single best} iterate using the trade-off heuristic provided by Cotter et al.

\begin{table}[t]
\caption{Optimizing KL-divergence fairness metric s.t.\ error rate constraints. For each method, we report two metrics: A\,(B), where A is the test fairness metric  (\textit{lower} is better) and B is the ratio of the test error rate of the method and that of a classifier that optimizes unconstrained error rate (\textit{lower} is better).
During training, we constrain B to be $\leq 1.1$.
Among the last 4 columns, the lowest fairness metric is highlighted in blue, and the second-lowest is shown in light blue.}
\vspace{5pt}
\label{tab:kld}
\centering
\begin{tabular}{lcccccc}
	\toprule
	& & & & & \multicolumn{2}{c}{Algorithm \ref{algo:lagrangian-surrogate}} \\\cmidrule(r){6-7}
& UncError & PostShift & SPADE+ & COCO & Stochastic & Determ.\ \\\cmidrule(r){2-7}
COMPAS	 & 0.115 (1.00) 	 & 0.000 (1.01) 	 & \sbest{0.009} (1.03) 	 & 0.043 (1.01) 	 & \best{0.000} (1.03) 	 & \best{0.000} (1.03) \\
Crime	 & 0.224 (1.00) 	 & 0.005 (1.40) 	 & 0.158 (1.11) 	 & 0.166 (0.90)	 & {\sbest{0.094}} (1.11) 	 & \best{0.085} (1.16) \\
Law 	 & 0.199 (1.00) 	 & 0.001 (1.45) 	 & \best{0.040} (1.09)	 & \sbest{0.043} (1.05) 	 & {{0.054}} (1.12) 	 & 0.056 (1.08) \\
Adult	 & 0.114 (1.00) 	 & 0.000 (1.22) 	 &  0.071 (1.03) 	 & \best{0.011} (1.10) 	 & {\sbest{0.014}} (1.10) 	 & {\sbest{0.014}} (1.10) \\
Wiki	 & 0.175 (1.00) 	 & 0.001 (1.21) 	 & \best{0.083} (1.18)  	 & 0.134 (1.17) 	 & 0.133 (1.09) 	 & \sbest{0.127} (1.18) \\
Business	 & 0.014 (1.00) 	 & 0.003 (1.05) 	 & \sbest{0.007} (1.03) 	 & \best{0.004} (1.04)  	 & \sbest{0.007} (1.07) 	 & \sbest{0.007} (1.07) \\
	\bottomrule
\end{tabular}
\end{table}
\begin{table}[t]
\caption{ 
Same  as Table \ref{tab:kld}, except we compare SPADE+, COCO and Algorithm \ref{algo:lagrangian-surrogate} \textit{without} the post-processing shrinking procedure  in \eqref{eq:shrinking} to construct the final stochastic classifier. SPADE+ outputs $\frac{1}{T}
\sum_{t=1}^T \theta^t$. COCO uses the weighting scheme in \cite{Narasimhan18}. Algorithm \ref{algo:lagrangian-surrogate} outputs $Unif(\theta^1, \ldots, \theta^T)$.
}
\vspace{5pt}
\label{tab:kld-spade+}
\centering
\begin{tabular}{lccccccc}
	\toprule
& SPADE+ & COCO & Algorithm \ref{algo:lagrangian-surrogate} \\\cmidrule(r){2-4}
	COMPAS	 & 0.000 (1.02) 	 & 0.000 (1.07) 	 & 0.016 (1.02) \\
Crime	 & 0.157 (1.00) 	 & 0.043 (1.25) 	 & 0.116 (1.09) \\
Law 	 & 0.296 (1.01) 	 & 0.027 (1.13) 	 & 0.074 (1.11) \\
Adult	 & 0.098 (1.01) 	 & 0.003 (1.31) 	 & 0.068 (1.08) \\
Wiki	 & 0.106 (1.23) 	 & 0.010 (1.39) 	 & 0.098 (1.30) \\
Business	 & 0.006 (1.03) 	 & 0.001 (1.14) 	 & 0.005 (1.08) \\
	\bottomrule
\end{tabular}
\end{table}

\subsection{KL-divergence Based Fairness Objective} 
We consider a demographic-parity fairness objective that seeks to match the proportion of positives predicted in each group $\hat{p}_G$ with the true proportion of positives in the data $p$, measured using a KL-divergence metric: $\sum_{G \in \{0,1\}} \text{KLD}(p, \hat{p}_G)$. Note that this is convex in $\hat{p}_G$. We additionally enforce a constraint that the error rate of the model is no more than 10\% higher than an unconstrained model that optimizes error rate (UncError), i.e. $\hat{err}(f_\theta) \leq 1.1\, \hat{err}(f_{unc})$. 

We apply Algorithm \ref{algo:lagrangian-surrogate} (adapted to include a convex rate metric as the objective) to solve this problem. We compare it with the
surrogate-based method SPADE+, an adaptation of the SPADE method \cite{Narasimhan+15b}  to constrained problems, and the oracle-based method COCO \cite{Narasimhan18}. COCO uses a logistic regression based \textit{plug-in} method to implement an approximate cost-sensitive oracle. For a fair comparison, we apply the post-processing shrinking step to these baselines as well, i.e.\ we apply \eqref{eq:shrinking} to the iterates of SPADE+ and COCO, and construct stochastic classifiers. We also include the post-shift method of Hardt et al. (2016) \cite{Hardt+16} that seeks to minimize the error rate subject to zero fairness violations. Specifically, it takes a pre-trained class probability estimation model (in our case, logistic regression) and assigns thresholds to  the groups to correct for fairness disparity \cite{Hardt+16}. 

We implement UncError by optimizing the hinge loss. Both UncError and the logistic regression for COCO and PostShift are trained with 2500 iterations of Adam. COCO is run for 500 outer iterations and 10 inner iterations. The step-size parameter for these three methods is chosen from $\{0.005, 0.01, 0.05, \ldots, 10.0\}$.
SPADE+ uses Adam for both the $\theta$ and $\lambda$ updates, is run for 5000 iterations, with the two step-sizes  chosen from $\{0.001, 0.01, 0.1, 1.0, 10.0\}$. 

We evaluate all methods based on their (a) KLD fairness metric on the test set, and (b) their constraint violation on the test set, measured by the ratio of error rate of the learned model and that of the unconstrained model: $\hat{err}(f_\theta)  /  \hat{err}(f_{unc})$. The results are shown in Table \ref{tab:kld}.

%

 PostShift performs the best on the fairness metric but often fairs poorly on the constraints.
 SPADE+ yields significantly poorer fairness values on the Crime and Adult datasets and suffers high constraint violation on the Wiki dataset. In contrast, the stochastic classifier trained by Algorithm \ref{algo:lagrangian-surrogate} closely satisfies the error rate constraint on almost all datasets. Also on five datasets, the proposed algorithm achieves the best or second-best fairness metric, doing significantly better than SPADE+ and COCO  on the Crime dataset. COCO yields the best fairness metric on two datasets. 
 
 It is also worth noting that the best deterministic classifier from Algorithm \ref{algo:lagrangian-surrogate} often has a similar performance to the stochastic classifier that it provides. Only on two of the six datasets, Crime and Wiki, the deterministic classifier suffers higher constraint violations than the stochastic classifier.
 
 All three constrained optimization algorithms benefit from using the shrinking procedure to post-process their iterates and construct the final stochastic classifier. To better analyze their convergence behavior, we also report in Table \ref{tab:kld-spade+}, their performance without the shrinking procedure. Specifically, for each method, we construct the final classifier as prescribed by its convergence gurantee: for SPADE+, this a deterministic classifier given by the average model parameters: $\sum_{t=1}^T \theta^t$ (see Theorem \ref{thm:convergence-spade+}); for COCO, this is a stochastic classifier specified by the weighting scheme provided in Narasimhan (2019) \cite{Narasimhan18}; for Algorithm \ref{algo:lagrangian-surrogate}, this is a stochastic classifier specified by a uniform distribution over all iterates (see Theorem \ref{thm:convergence-surrogate-constrained}). 
 
 Without the post-processing step, SPADE+ tends to overconstrain the model, and yields very poor fairness objective on three of the six datasets. This is because of its heavy dependence on surrogates. In contrast, even without the shrinking step, Algorithm \ref{algo:lagrangian-surrogate} closely satisfies the constraint on almost all datasets, while suffering a modest increase in fairness metric on four datasets. COCO suffers a higher constraint violation on all datasets in the absence of the shrinking step.
 
\begin{table}[t]
\centering
\caption{Optimizing F-measure s.t.\ F-measure constraints. For each method, we report two metrics: A\,(B), where A is the overall test F-measure  (\textit{higher} is better) and B is the test constraint violation: $\text{Fmeasure}_{ptr} - \text{Fmeasure}_{other} - 0.01$ (\textit{lower} is better). Among the last 4 columns, the highest F-measure objective is highlighted in blue and the lowest constraint violation is shown in bold.
}
\vspace{5pt}
\label{tab:fmeasure}
\begin{tabular}{lcccccc}
	\toprule
	& & &
	\multicolumn{2}{c}{Algorithm \ref{algo:slack-ratios}} 
	&
	\multicolumn{2}{c}{Algorithm \ref{algo:biconvex}}
	\\\cmidrule(r){4-7}
	& UncError & UncF1 & Stochastic & Determ. & Stochastic & Determ. \\\cmidrule(r){2-7}
	COMPAS	 & 0.652 (0.14) 	 & 0.671 (0.11) 	 & 0.618 (0.04) 	 & 0.615 (0.07) 	 & \best{0.629} (0.06) 	 & 0.622 (\textbf{0.03}) \\
Crime	 & 0.742 (0.07) 	 & 0.761 (0.07) 	 & \best{0.735} (\textbf{0.06}) 	 & 0.720 (0.09) 	 & 0.692 (0.12) 	 & 0.671 (0.13) \\
Law 	 & 0.974 (0.08) 	 & 0.975 (0.09) 	 & 0.927 (0.04) 	 & 0.973 (0.09) 	 & 0.911 (\textbf{0.01}) 	 & \best{0.974} (0.08) \\
Adult	 & 0.671 (0.05) 	 & 0.687 (0.05) 	 & 0.655 (0.04) 	 & \best{0.678} (0.07) 	 & 0.592 (\textbf{0.02}) 	 & 0.591 (\textbf{0.02}) \\
Wiki	 & 0.968 (0.19) 	 & 0.969 (0.17) 	 & 0.838 (0.05) 	 & \best{0.892} (0.10) 	 & 0.829 (\textbf{0.02}) 	 & 0.805 (0.04) \\
Business	 & 0.778 (0.04) 	 & 0.783 (0.03) 	 & \best{0.764} (-0.04) 	 & \best{0.764} (-0.04) 	 & 0.762 (-0.04) 	 & 0.754 (\textbf{-0.06}) \\
	\bottomrule
\end{tabular}
\end{table}

\subsection{F-measure Based Parity Constraints} 
We consider the fairness goal of training a classifier that yields at least as high a F-measure for the protected group as it does for the rest of the population, and impose this as a constraint. Specifically, we seek to maximize the overall F-measure subject to the constraint: $\text{Fmeasure}_{ptr} \,\geq\, \text{Fmeasure}_{other} - 0.01$. We apply Algorithms \ref{algo:slack-ratios} and \ref{algo:biconvex} (modified to include the F-measure as the objective) and compare it with an unconstrained classifier that optimizes error rate (UncError).\footnote{To implement Algorithm \ref{algo:biconvex} using the helper functions in the TF constrained optimization library, we perform a simple change of variables: we replace $\lambda_m = \lambda_0\, \lambda'_m,$ and optimize over $\lambda'_m$'s instead of $\lambda_m$'s, $m = 1, \ldots, M$.}
We also include a plug-in classifier, which is a popular approach for optimizing the F-measures without constraints (UncF1) \cite{Koyejo+14, Narasimhan+14, Yan+18}. We implement the plug-in approach by tuning the threshold on the UncError classifier to yield maximum F-measure on the training set.  We are not aware of any other approach that can handle objective and constraints that are both sums of ratios of rates.

The results are shown in Table \ref{tab:fmeasure}, where we present both the F-measure objective on the test set achieved by different methods, and their constraint violations measured by: $\text{Fmeasure}_{ptr} - \text{Fmeasure}_{other} - 0.01$. None of the baselines are able to satisfy the constraints.
While UncF1 yields better objective than UncError, it suffers significant constraint violations. 
The proposed algorithms significantly reduce the constraint violations, by trading-off for accuracy, with Algorithm \ref{algo:biconvex} yielding the least violation in most cases, and Algorithm \ref{algo:slack-ratios} doing better on the F-measure objective.

\section{Conclusion}
We have proposed a three-player game framework for solving learning problems where the objective and constraints are defined by generalized rate metrics. Our template recovered many previous algorithms and provides a modular approach for developing new improved algorithms. Specifically, we developed a new surrogate-based algorithm that can handle objectives and constraints that are convex functions of rates. The proposed algorithm can be applied to a wider range of metrics than existing surrogate methods (without additional restrictions on the model space), enables a tighter handling of constraints, and enjoys provable convergence guarantees (without assuming access to an idealized oracle). The benefit of our method over previous surrogate-based approaches was evident from their improved performance on many real-world fairness tasks. We also used our framework to provide new heuristic methods for optimizing with sums of ratios of rates, and showed that they are better at solving constrained training problem compared to existing baselines.
\paragraph{Acknowledgements.}
HN thanks Harish Ramaswamy, Purushottam Kar, Prateek Jain and Shivani Agarwal for several past discussions and collaborations on topics related to generalized rate metrics. The authors thank Heinrich Jiang and Qijia Jiang for helpful feedback on the writing.

\bibliographystyle{abbrv}
\bibliography{references1,references2}

\appendix
\nocite{pennington2014glove}
\nocite{Cesa+14}
\if 0
\begin{table}[t]
\caption{Examples of generalized rate metrics. These may arise either as the objective or as constraints during training. Each $\psi$ is either convex (C) or a sum or difference of ratios (SR). \textit{KLD} is the KL-divergence between the true proportion of positives $p = \P(Y=1)$ and the predicted proportion of positives $\hat{p}$. \textit{Churn} is the problem of replacing a legacy model with a more accurate model while limiting the changed predictions between the new and old models. $wins$ denotes to the fraction of examples where the new model makes correct predictions, among all examples where it disagrees with the old model. $losses$ refers to the fraction of examples where the new model makes incorrect predictions, among all examples where it disagrees with the old model. $A$ and $B$ refers to different protected groups or slices of the population.}
\label{tab:measures}
\centering
\begin{tabular}{lccccc}
	\toprule
	\textbf{Measure} & \textbf{Definition} & $\psi$ & \textbf{Type}\\ 
	\toprule
	G-mean \cite{KubatMa97, Daskalaki+06} & $1 - \sqrt{\TPR \times \TNR}$ & $1 - \sqrt{z_1\,z_2}$ & C \\
	H-mean \cite{Kennedy+09} & $1 - \frac{2}{1/\TPR + 1/\TNR}$ & $1 - \frac{2}{1/z_1 + 1/z_2}$ & C \\
	Q-mean  \cite{Lawrence+98} & $1 - \sqrt{\FPR^2 + \FNR^2}$ & $1 - \sqrt{z_1^2 + z_2^2}$ & C\\
	KL-divergence \cite{Fab1,Fab2} & $p\log(\frac{p}{\hat{p}}) \,+\, p\log(\frac{1-p}{1-\hat{p}})$ & $p\log(\frac{p}{z_1}) \,+\, p\log(\frac{1-p}{z_2})$ & C \\
	\midrule
	Predictive parity \cite{Chouldechova17} & $\frac{\TP_A}{\TP_A + \FN_A} \,-\, \frac{\TP_B}{\TP_B + \FN_B}$ & $\frac{z_1}{z_1 + z_2} \,-\,  \frac{z_3}{z_3 + z_4}$ & SR\\
	F-measure parity & $\frac{2\TP_A}{2\TP_A + \FP_A + \FN_A} \,-\, \frac{2\TP_B}{2\TP_B + \FP_B + \FN_B}$ & $\frac{2z_1}{2z_1 + z_2 + z_3} \,-\,  \frac{2z_4}{2z_4 + z_5 + z_6}$ & SR \\
	Churn difference \cite{Fard+16} & $\frac{wins_A}{losses_A} \,-\, \frac{wins_B}{losses_B}$ & $\frac{z_1}{z_2} \,-\,  \frac{z_3}{z_4}$ & SR \\
	\bottomrule
\end{tabular}
\end{table}
\fi

\section{Preliminaries}
\label{app:prelims}
We cover some preliminary material that will be useful in subsequent sections.
\paragraph{Fenchel conjugate}
For a convex function $\psi: \R^K \> \R$, the Fenchel conjugate $\psi^*: \R^K \> \R$ is defined by:
\begin{equation}
\psi^*(v) \,=\, 
\max_{c \,\in\, \dom\, \psi}\left\{
c^\top v \,-\, \psi(c)\right\}
\,=\,
-\min_{c \,\in\, \dom\, \psi}\left\{\psi(c) \,-\, c^\top v\right\},
\label{eq:fenchel-conjugate}
\end{equation}
where $\dom\, \psi$ denotes the domain of $\psi$. Since $\psi^*(v)$ is a point-wise maximum of linear functions in $v$, it is easy to see that $\psi^*$ is convex. We denote the second Fenchel conjugate of $\psi$ by $\psi^{**}$, which for any $c \in \dom \psi$ is given by:
\begin{equation}
\psi^{**}(c) \,=\, 
\max_{v \,\in\, \dom\,\psi^*}\left\{c^\top v
\,-\, \psi^*(v)
\right\}
\,=\, 
-\min_{v \,\in\, \dom\,\psi^*}\left\{\psi^*(v) \,-\, c^\top v\right\},
\label{eq:second-fenchel-conjugate}
\end{equation}
where $\dom\, \psi^*$ is the subset of $\R^K$ for which $\psi^*$ is defined. For example, if $\psi$ is monotonically non-decreasing in an argument, $\dom\,\psi^*$ contains vectors that are non-negative in that coordinate. For convex functions $\psi$, $\psi^{**}(c) = \psi(c),\, \forall c \in \dom\,\psi$.

The subdifferential $\partial\psi(c)$ of a convex  function $\psi$ at a point $c\in \R^K$ is the set of all vectors $v \in \R^K$ such that $f(c') \geq f(c) + v^\top (c' - c), \,  \forall c'  \in \R^K$. It  follows from the definition of the Fenchel conjugate 
that for any $c \in \dom\, \psi$,
\begin{equation}
 v \in \partial\psi(c)
~~\iff~~
\psi(c) + \psi^*(v) \,=\, v^\top c.
\label{eq:fenchel-young}
\end{equation}
This is often referred to as the \textit{Fenchel-Young equality}. 
A straight-forward consequence of \eqref{eq:fenchel-young} is that for any $c \in \dom\, \psi$,
\begin{equation}
 v^* \in \partial\psi(c)~~\iff~~
v^* \in \argmax{v \,\in\, \dom\,\psi^*}\left\{
c^{\top} v \,-\, \psi^*(v)\right\}.
\label{eq:psi-grad}
\end{equation}

\paragraph{Online convex optimization}
Consider an online convex optimization problem with a bounded convex set $\cC$, and a sequence of convex functions $h_1, h_2, \ldots, h_T: \cC \> \R$. In each round $t$, the learner needs to choose a point $c_t \in \cC$ and incurs a loss $h_t(c_t)$. The goal is to minimize the learner's average regret: 
$\mathcal{R}_T = \frac{1}{T}\sum_{t=1}^T h_t(c_t) \,-\, \min_{c\in \cC} \frac{1}{T}\sum_{t=1}^T h_t(c)$. 

Of course, the learner could implement a \textit{best response} (BR) strategy, i.e. simply output the minimizer of $h_t$ at each step: $c_t \in \amin{c \in \cC}\,h_t(c).$ This trivially gives a no-regret guarantee, but would be expensive in practice.

A more practical and simple approach to the above problem is the classical \textit{follow-the-leader} (FTL) algorithm, which simply outputs a point  with the least total loss for the previous rounds:
$$
c_{t+1} = \text{FTL}_\cC(h_1, \ldots, h_t) \in \amin{c \in \cC} \sum_{\tau=1}^t h_\tau(c).
$$
If each $h_t$ is Lipchitz and strongly convex, then FTL enjoys the following no-regret guarantee: $\mathcal{R}_T = \cO\big(\frac{\log(T)}{T}\big)$.

Another popular approach is \textit{online gradient descent} (OGD), which performs gradient updates based on the previously chosen point: 
$
c_{t+1} = \Pi_{\cC}\left(c_{t} \,-\, \nabla h_t(c_{t})\right),
$
where $\nabla h_t(c_{t}) \in \partial\, h_t(c_{t})$ and $\Pi_\cC$ denotes the $\ell_2$-projection onto $\cC$. If each $h_t$ is Lipchitz, then OGD has the following guarantee: $\mathcal{R}_T = \cO\Big(\sqrt{\frac{1}{T}}\Big)$.

\section{Connections to Existing Algorithms for \eqref{eq:custom-opt}}
\label{app:connections}
We elaborate on the three previous algorithm listed in Table \ref{tab:choices} and show how their updates can be seen as special cases of our approach. We present these methods using the notations used in this paper and re-written to handle a  convex function $\psi: \cR \> \R$ that is monotonic in each argument. We point out these approaches were not previously derived using the three-player game view-point that we present in this paper. 

\paragraph{SPADE:} This method \cite{Narasimhan+15b} seeks to optimize a convex function of two rates: $\psi(R_1(\theta), R_2(\theta))$, and does so by replacing the rates with convex surrogates $\tR_1$ and $\tR_2$, and performing  gradient updates on $\theta$ and additional dual variables $\alpha, \beta \in \R_+$:
\begin{equation}
\theta^{t+1} \leftarrow \Pi_{\Theta}\left(\theta^t \,-\, \eta\,\alpha^t\,\nabla\,\tilde{R}_1(\theta)
\,-\, \eta \beta^t\,\nabla\,\tilde{R}_2(\theta)\right)
\label{eq:spade-theta}
\end{equation}
\vspace{-7pt}
\begin{equation}
    (\alpha^{t+1},
    \beta^{t+1})
    \leftarrow
    \Pi_{\cA}\left(
    (\alpha^{t},
    \beta^{t})
    \,-\, \eta'(\tR_1(\theta), \tR_2(\theta)) \,+\,
    \eta' \nabla \psi^*(\alpha^t, \beta^t)\right),
\label{eq:spade-dual}
\end{equation}
where $\cA \subseteq \R_+^2$ is a bounded set that contains all gradients of $\psi$ and $\eta,\eta' > 0$. It is straight-forward to show that the above updates can be recovered from our  proposed framework by  having both the $\theta$- and $\lambda$-players play OGD on the surrogate Lagrangian, and the $\xi$-player play best response.
\begin{thm}
   With slack variables $\xi_1, \xi_2 \in  \R_+$ and Lagrange multipliers $\alpha, \beta \in \R_+$,  define $\cL(\theta, \xi; \alpha, \beta) \,=\, \psi(\xi_1, \xi_2) \,+\, \alpha(\tR_1(\theta) - \xi_1) \,+\, \beta(\tR_2(\theta) \,-\, \xi_2)$.
    Starting with the same $\theta^0, \alpha^0, \beta^0$, the following updates generate the same iterates $\theta^t$ as the updates in \eqref{eq:spade-theta} and \eqref{eq:spade-dual}:
    $$\xi^{t} \leftarrow \argmin{\xi \in \R_+^2}\, \cL(\theta^t, \xi; \alpha^t, \beta^t)
    $$
    $$\theta^{t+1} \leftarrow \Pi_{\Theta}\left(\theta^t \,-\, \eta\,\nabla_{\theta}\,\cL(\theta^t, \xi^{t}; \alpha^t, \beta^t)\right)
    $$
    $$(\alpha^{t+1},
    \beta^{t+1}) \leftarrow \Pi_{\cA}\left((\alpha^{t},
    \beta^{t}) \,-\, \eta'\,\nabla_{\alpha, \beta}\,\cL(\theta^t, \xi^{t}; \alpha^t, \beta^t)\right).
    $$
\end{thm}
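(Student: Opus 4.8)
The plan is a term-by-term induction on $t$ comparing the two update sequences, with essentially all of the content concentrated in one step: identifying the $\xi$-player's best response. The key structural observation is that $\cL(\theta,\xi;\alpha,\beta)$ depends on $\theta$ only through the term $\alpha\tR_1(\theta) + \beta\tR_2(\theta)$, so $\nabla_\theta\cL(\theta,\xi;\alpha,\beta) = \alpha\nabla\tR_1(\theta) + \beta\nabla\tR_2(\theta)$, which does not involve $\xi$ at all. Hence the new $\theta$-update is literally \eqref{eq:spade-theta} the moment $(\alpha^t,\beta^t)$ agree with the SPADE duals — no information about $\xi^t$ is needed. Similarly, $\nabla_{\alpha,\beta}\cL(\theta,\xi;\alpha,\beta) = (\tR_1(\theta) - \xi_1,\ \tR_2(\theta) - \xi_2)$, so the new dual update reads $(\alpha^{t+1},\beta^{t+1}) = \Pi_{\cA}\big((\alpha^t,\beta^t) - \eta'(\tR_1(\theta^t),\tR_2(\theta^t)) + \eta'\,\xi^t\big)$, which coincides with \eqref{eq:spade-dual} exactly when $\xi^t = \nabla\psi^*(\alpha^t,\beta^t)$.

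Thus the only non-bookkeeping step is to show that the best response has this closed form: $\xi^t \in \argmin{\xi \in \R_+^2}\,\cL(\theta^t, \xi; \alpha^t, \beta^t) \ \Rightarrow\ \xi^t = \nabla\psi^*(\alpha^t,\beta^t)$. Dropping the $\xi$-independent terms, this minimization is $\min_{\xi \in \R_+^2}\{\psi(\xi_1,\xi_2) - \alpha^t\xi_1 - \beta^t\xi_2\}$, which is precisely the optimization in Lemma \ref{lem:xi-opt-closed-form} with $\lambda = (\alpha^t,\beta^t)$; equivalently it is the Fenchel--Young correspondence \eqref{eq:psi-grad}. To invoke the lemma I would note that $\cA$ is (a subset of) the set of gradients of $\psi$, which, since $\psi$ is $L$-Lipschitz w.r.t.\ $\|\cdot\|_\infty$, is contained in the $\ell_1$-ball of radius $L$; hence $\|(\alpha^t,\beta^t)\|_1 \le L$ for every $t$ and the lemma applies. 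Strict convexity of $\psi$ makes $\psi^*$ differentiable, so the minimizer is the single point $\nabla\psi^*(\alpha^t,\beta^t)$; and since $\dom\,\psi \subseteq \R_+^2$ (the metrics considered are defined on the non-negative orthant) and $\psi$ is monotone, the constrained minimizer over $\R_+^2$ agrees with the unconstrained one, so $\nabla\psi^*(\alpha^t,\beta^t) \in \R_+^2$ is indeed feasible.

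Assembling the induction: the base case is the shared initialization $\theta^0,\alpha^0,\beta^0$. For the step, suppose $\theta^t,\alpha^t,\beta^t$ from the new scheme equal the SPADE iterates. Then (i) by the previous paragraph $\xi^t = \nabla\psi^*(\alpha^t,\beta^t)$; (ii) using $\nabla_\theta\cL(\theta^t,\xi^t;\alpha^t,\beta^t) = \alpha^t\nabla\tR_1(\theta^t) + \beta^t\nabla\tR_2(\theta^t)$, the new $\theta^{t+1}$ equals the right-hand side of \eqref{eq:spade-theta}; (iii) substituting (i) into the new dual update turns it into \eqref{eq:spade-dual}, so $(\alpha^{t+1},\beta^{t+1})$ also agree. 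Hence all iterates coincide, and in particular the returned sequences $\theta^1,\ldots,\theta^T$ are identical.

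I do not expect a genuine obstacle here: the argument is an invocation of Lemma \ref{lem:xi-opt-closed-form} (the Fenchel--Young equality) plus the bookkeeping above. The only points deserving a sentence of care are (a) checking that $(\alpha^t,\beta^t)$ remains in the region where $\psi^*$ is differentiable and the lemma's hypothesis $\|\lambda\|_1\le L$ holds — which follows from $\cA$ being a set of gradients of the $L$-Lipschitz $\psi$ — and (b) verifying that the best response $\nabla\psi^*(\alpha^t,\beta^t)$ is feasible for the $\xi$-player (i.e.\ lies in $\R_+^2$), which follows from monotonicity of $\psi$ and $\dom\,\psi\subseteq\R_+^2$.
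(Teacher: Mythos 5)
Your proposal is correct and follows essentially the same route as the paper's own (very terse) proof: invoke Lemma \ref{lem:xi-opt-closed-form} to get the closed form $\xi^t = \nabla\psi^*(\alpha^t,\beta^t)$ for the $\xi$-player's best response, then substitute into the $\theta$- and $(\alpha,\beta)$-updates and check they reduce to \eqref{eq:spade-theta} and \eqref{eq:spade-dual}. Your explicit induction and gradient bookkeeping simply spell out what the paper calls ``straight-forward,'' so there is nothing to correct.
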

The proof is straight-forward and follows from Lemma \ref{lem:xi-opt-closed-form}, which shows that a solution to the $\xi$-player's update is $\xi^{t} = \nabla\psi^*(\alpha^t, \beta^t)$. Substituting this into the expression for the $\lambda$-player's update gives us the same updates as in \eqref{eq:spade-theta} and \eqref{eq:spade-dual}.

\paragraph{NEMSIS:} This method \cite{Kar+16} was designed to work with nested convex functions $\psi$, i.e. functions $\psi$ that are a convex function of multiple convex functions. While these nested metrics can be easily handled within our framework by introducing separate slack variables for each of the inner convex functions, here we consider an application of this method to a simpler convex metric of the form $\psi(R_1(\theta), R_2(\theta))$. 
The NEMSIS updates for this metric, assuming $\psi$ is smooth, is given by:
\begin{equation}
\theta^{t+1} \leftarrow \Pi_{\Theta}\left(\theta^t \,-\, \eta\,\alpha^t\,\nabla\,\tilde{R}_1(\theta)
\,-\, \eta\beta^t\,\nabla\,\tilde{R}_2(\theta)\right)
\label{eq:nemsis-theta}
\end{equation}
$$
    (\alpha^{t+1},
    \beta^{t+1})
    \leftarrow
    \argmax{(\alpha, \beta) \in \cA}\,\sum_{\tau=1}^t g_\tau(\alpha, \beta)
$$
\begin{equation}
    ~~\text{where}~~
    g_\tau(\alpha, \beta) \,:=\,
\psi^*(\alpha, \beta) \,-\, \alpha\tR_1(\theta^\tau) \,-\, \beta\tR_2(\theta^\tau).
\label{eq:nemsis-dual}
\end{equation}

These updates can be recovered from our framework by having the $\lambda$-player play FTL on the combined objective that includes the min over $\xi$, and having the $\theta$-player play OGD, with both players operating on the surrogate Lagrangian.
\begin{thm}
    With slack variables $\xi_1, \xi_2 \in  \R_+$ and Lagrange multipliers $\alpha, \beta \in \R_+$, define $\cL_2(\theta; \alpha, \beta) = \alpha\,\tR_1(\theta) \,+\, \beta\,\tR_2(\theta)$ and $\cL(\theta, \xi; \alpha, \beta) \,=\, \psi(\xi_1, \xi_2) \,-\, \alpha\,\xi_1 \,-\, \beta\,\xi_2 \,+\, \cL_2(\theta; \alpha, \beta)$. 
    Starting with the same $\theta^0, \alpha^0, \beta^0$, the following updates generate the same iterates $\theta^t$ as the updates in \eqref{eq:nemsis-theta} and \eqref{eq:nemsis-dual}:
    $$\theta^{t+1} \leftarrow \Pi_{\Theta}\left(\theta^t \,-\, \eta\,\nabla_{\theta}\,\cL_2(\theta^t; \alpha^t, \beta^t)\right)
    $$
    \begin{equation*}
        (\alpha^{t+1},
        \beta^{t+1})
        \leftarrow
        \textrm{\textup{FTL}}_{\cA}\left(
        \left\{
        \textstyle
        \min_{\xi \in \R_+^2}
        \cL(\theta^\tau, \xi; \cdot, \cdot)
        \right\}_{\tau=1}^t
        \right)
    \end{equation*}
\end{thm}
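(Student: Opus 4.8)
The plan is to prove the iterate-by-iterate equivalence by induction on $t$: the base case is the shared initialization $\theta^0,\alpha^0,\beta^0$, and for the inductive step I would assume that the two update schemes have produced the same $\theta^0,\dots,\theta^t$ and the same $(\alpha^0,\beta^0),\dots,(\alpha^t,\beta^t)$, and then show they must agree on $\theta^{t+1}$ and on $(\alpha^{t+1},\beta^{t+1})$. The $\theta$-step is immediate from the definition $\cL_2(\theta;\alpha,\beta)=\alpha\tR_1(\theta)+\beta\tR_2(\theta)$: any (sub)gradient in $\theta$ at $\theta^t$ is $\alpha^t\nabla\tR_1(\theta^t)+\beta^t\nabla\tR_2(\theta^t)$, so the step $\theta^{t+1}\leftarrow\Pi_\Theta(\theta^t-\eta\,\nabla_\theta\cL_2(\theta^t;\alpha^t,\beta^t))$ is literally the NEMSIS update \eqref{eq:nemsis-theta}.

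The substance is in the dual step, and the single computation to carry out is the closed form of the inner minimization $\min_{\xi\in\R_+^2}\cL(\theta^\tau,\xi;\alpha,\beta)$. Splitting $\cL(\theta^\tau,\xi;\alpha,\beta)=\big(\psi(\xi_1,\xi_2)-\alpha\xi_1-\beta\xi_2\big)+\cL_2(\theta^\tau;\alpha,\beta)$, only the parenthesized term depends on $\xi$, so the minimization decouples from $\theta$; and since $\dom\psi=\R_+^2$, the definition of the Fenchel conjugate \eqref{eq:fenchel-conjugate} gives $\min_{\xi\in\R_+^2}\big\{\psi(\xi_1,\xi_2)-\alpha\xi_1-\beta\xi_2\big\}=-\psi^*(\alpha,\beta)$, which is finite because $\cA$ is a bounded set of valid gradients of $\psi$, hence $\cA\subseteq\dom\psi^*$. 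Therefore $\min_{\xi\in\R_+^2}\cL(\theta^\tau,\xi;\alpha,\beta)=\alpha\tR_1(\theta^\tau)+\beta\tR_2(\theta^\tau)-\psi^*(\alpha,\beta)$, which is precisely $g_\tau(\alpha,\beta)$ of \eqref{eq:nemsis-dual} (up to the overall sign fixing the direction of the $\lambda$-player's optimization). Since by the inductive hypothesis the functions $\{\min_\xi\cL(\theta^\tau,\xi;\cdot,\cdot)\}_{\tau=1}^t$ are built from the same past iterates $\theta^1,\dots,\theta^t$, the Follow-The-Leader step of the $\lambda$-player selects the same $(\alpha^{t+1},\beta^{t+1})$ as the cumulative argmax in \eqref{eq:nemsis-dual}, which closes the induction. (If one also wants the $\xi$-player's best response itself, Lemma \ref{lem:xi-opt-closed-form} identifies it as $\nabla\psi^*(\alpha,\beta)$, but the FTL dual update uses only the optimal \emph{value} computed above.)

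I expect no deep obstacle here; the only points needing care are (i) that the $\xi$-minimization is taken over $\R_+^2$ rather than all of $\R^2$, which is harmless precisely because $\psi$ (and all the target metrics) are defined on the non-negative orthant and $\psi^*$ is finite on the bounded set $\cA$, and (ii) matching the sign/orientation conventions between ``the $\lambda$-player maximizes the max--min objective that absorbs $\min_\xi$'' and the argmax appearing in \eqref{eq:nemsis-dual}. I would also note explicitly that the smoothness of $\psi$ assumed for NEMSIS plays no role in this equivalence claim: it is needed only for the regret/convergence rate of the FTL player, not for the per-iteration correspondence of the two algorithms.
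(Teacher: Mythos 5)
Your proposal is correct and follows essentially the same route as the paper's (one-line) proof: the whole content is the identity $\min_{\xi\in\R_+^2}\{\psi(\xi_1,\xi_2)-\alpha\xi_1-\beta\xi_2\}=-\psi^*(\alpha,\beta)$, which makes the FTL objective coincide (up to sign/orientation, as you note) with the cumulative $g_\tau$ of \eqref{eq:nemsis-dual}, while the $\theta$-updates coincide by inspection. Your added induction scaffolding and the remark that smoothness of $\psi$ is irrelevant to the per-iteration equivalence are fine elaborations of the same argument, not a different approach.
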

The proof follows in a straight-forward manner from the definition of the Fenchel conjugate of $\psi$: $\psi^*(\alpha, \beta) = \min_{\xi_1,\xi_2 \in \R_+}\{
\psi(\xi_1, \xi_2) \,-\, \alpha\xi_1 \,-\, \beta\xi_2
\}
$.

\paragraph{F-W:} The Frank-Wolfe based method of \cite{Narasimhan+15} was designed to optimize performance metrics $\psi(R_1(\theta), \ldots, R_K(\theta))$ with smooth, convex $\psi$. 
This employs the classical Frank-Wolfe algorithm along with a cost-sensitive oracle to perform the inner linear optimization.
Specifically, the algorithm maintains iterates $\theta^t \in \Theta$ and $\r^t \in \R_+^K$. Starting with $\r^0 = \bR(\0)$, it performs the following updates:
\begin{equation}
\theta^{t+1} \leftarrow \argmin{\theta \in \Theta}\,
\sum_{k=1}^K v^t_k R_k(\theta),
~~\text{where}~\bv^t \leftarrow \nabla\psi(\r^t)
\label{eq:fw-1}
\end{equation}
\begin{equation}
\r^{t+1} \leftarrow \left(1  - \frac{1}{t}\right) \r^t + \frac{1}{t}\, \bR(\theta^{t+1}),
\label{eq:fw-2}
\end{equation}
where the first step uses a cost-sensitive oracle. The output of this algorithm after $T$ iterations is a stochastic model with probability $\frac{1}{t}\prod_{\tau=t+1}^T
\left(1-\frac{1}{\tau}\right)$ on $\theta^t$. 

These updates can be recovered from our game formulation by having the $\theta$-player play OGD and the $\lambda$-player play FTL on a combined objective that contains a min over $\xi$, with both players operating on the true Lagrangians.
\begin{thm}
     With slack variables $\xi \in  \R^K_+$ and Lagrange multipliers $\lambda \in \R^K_+$,  define $\cL(\theta, \xi; \lambda) \,=\, \psi(\xi) \,+\, \sum_{k=1}^K \lambda_k (R_k(\theta) - \xi_k)$ and $\cL_2(\theta; \lambda) \,=\,\sum_{k=1}^K \lambda_k R_k(\theta)$. Let $\Lambda = \R_+^K$. 
     Then starting with $\lambda^0 = \nabla\psi(\bR(\0))$, 
    the following updates generate the same iterates $\theta^t$ as \eqref{eq:fw-1} and \eqref{eq:fw-2}:
    \begin{equation}
    \theta^{t+1} \leftarrow \argmin{\theta \in \Theta}\,\cL_2(\theta; \lambda^t)
    \label{eq:fw-players-theta}
    \end{equation}
    \begin{equation}
        \lambda^{t+1}
        \leftarrow
        \textrm{\textup{FTL}}_{\Lambda}\left(
        \left\{
        \textstyle
        \min_{\xi \in \R_+^K}
        \cL(\theta^\tau, \xi; \cdot)
        \right\}_{\tau=1}^t
        \right)
    \label{eq:fw-players-lambda}
    \end{equation}
\end{thm}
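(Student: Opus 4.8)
The plan is to run a joint induction on $t$ showing that the iterates $(\theta^t,\lambda^t)$ produced by the game updates \eqref{eq:fw-players-theta}--\eqref{eq:fw-players-lambda} coincide with the Frank--Wolfe iterates $(\theta^t,\bv^t)$ of \eqref{eq:fw-1}--\eqref{eq:fw-2}. Once the correspondence $\lambda^t=\bv^t$ is established, the two $\theta$-updates are literally the same linear-in-rates minimization, so the $\theta^t$ must agree. Before running the induction I would rewrite both game updates in closed form.

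First I would simplify the inner minimization over $\xi$ that appears inside the leader's objective in \eqref{eq:fw-players-lambda}. Writing $\cL(\theta^\tau,\xi;\lambda)=\cL_1(\xi;\lambda)+\cL_2(\theta^\tau;\lambda)$ with $\cL_1(\xi;\lambda)=\psi(\xi)-\sum_k\lambda_k\xi_k$, the definition of the Fenchel conjugate (equivalently Lemma~\ref{lem:xi-opt-closed-form} together with \eqref{eq:fenchel-young}) gives $\min_{\xi\in\cR}\cL_1(\xi;\lambda)=-\psi^*(\lambda)$, hence
\begin{equation*}
\min_{\xi\in\cR}\cL(\theta^\tau,\xi;\lambda)\;=\;\lambda^\top\bR(\theta^\tau)-\psi^*(\lambda).
\end{equation*}
Summing over $\tau$, the leader's objective in \eqref{eq:fw-players-lambda} is, up to an additive constant, $\lambda\mapsto\lambda^\top\big(\sum_{\tau=1}^t\bR(\theta^\tau)\big)-t\,\psi^*(\lambda)$, so after dividing by $t$,
\begin{equation*}
\lambda^t\;\in\;\amax{\lambda\in\R_+^K}\Big\{\lambda^\top\bar{\r}^{\,t}-\psi^*(\lambda)\Big\},\qquad\bar{\r}^{\,t}:=\tfrac1t\textstyle\sum_{\tau=1}^t\bR(\theta^\tau).
\end{equation*}
By the conjugate--subgradient identity \eqref{eq:psi-grad} this maximizer is an element of $\partial\psi(\bar{\r}^{\,t})$; smoothness of $\psi$ makes it the single point $\nabla\psi(\bar{\r}^{\,t})$, and monotonicity of $\psi$ forces $\nabla\psi(\bar{\r}^{\,t})\in\R_+^K=\Lambda$, so the constraint set is inactive. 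Thus the $\lambda$-update collapses to $\lambda^t=\nabla\psi(\bar{\r}^{\,t})$ for $t\ge1$, while the given initialization is $\lambda^0=\nabla\psi(\bR(\0))$.

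Second I would unroll the averaging recursion \eqref{eq:fw-2} (with its $\tfrac1t$ step sizes) started from $\r^0=\bR(\0)$: the $\bR(\0)$ term is used only to form $\bv^0=\nabla\psi(\r^0)$ and is annihilated at the first averaging step, leaving $\r^t=\bar{\r}^{\,t}$ for every $t\ge1$. Therefore $\bv^t=\nabla\psi(\r^t)=\nabla\psi(\bar{\r}^{\,t})=\lambda^t$ for all $t\ge0$, the base case being $\bv^0=\nabla\psi(\bR(\0))=\lambda^0$. The induction then closes: if $\theta^1,\dots,\theta^t$ agree across the two schemes, then $\r^t$ and $\bar{\r}^{\,t}$ agree, hence $\bv^t=\lambda^t$, and the updates \eqref{eq:fw-1} and \eqref{eq:fw-players-theta} pose the identical problem $\amin{\theta\in\Theta}\sum_k v^t_k R_k(\theta)=\amin{\theta\in\Theta}\sum_k\lambda^t_k R_k(\theta)$, so $\theta^{t+1}$ is the same.

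All the substantive content is borrowed from earlier results — the Fenchel-conjugate rewriting of the $\xi$-minimization from Lemma~\ref{lem:xi-opt-closed-form}, and the fact that the maximizer in $\lambda$ of $\lambda^\top c-\psi^*(\lambda)$ equals $\nabla\psi(c)$ from \eqref{eq:fenchel-young}--\eqref{eq:psi-grad} — so there is no real obstacle; the only thing requiring care is the bookkeeping. Specifically: reconciling the explicit initialization $\lambda^0=\nabla\psi(\bR(\0))$ with the implicit role of $\r^0=\bR(\0)$ in \eqref{eq:fw-1}--\eqref{eq:fw-2}, getting the normalization in the averaging recursion exactly right so that $\r^t=\bar{\r}^{\,t}$, checking that although $\Lambda=\R_+^K$ is unbounded the leader's maximizer still lies in $\Lambda$ (monotonicity of $\psi$) and is unique (smoothness of $\psi$), and noting $\|\lambda^t\|_1\le L$ along the trajectory so that Lemma~\ref{lem:xi-opt-closed-form} applies. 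As in the other special-case reductions, ties in the $\theta$-$\operatorname{argmin}$ are assumed broken consistently across the two algorithms.
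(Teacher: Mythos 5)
Your proposal is correct and follows essentially the same route as the paper's proof: collapse the inner $\xi$-minimization via the Fenchel conjugate, observe that the FTL maximizer is $\nabla\psi$ of the running average of the rate vectors (by \eqref{eq:psi-grad}), identify this with $\bv^t$ in the Frank--Wolfe updates, and conclude that the $\theta$-updates coincide. Your added bookkeeping (explicit induction, the sign $-\psi^*(\lambda)$, and the check that the unconstrained maximizer lies in $\Lambda$) only makes the same argument more careful.
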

\begin{proof}
The proof follows directly from a similar result in \cite{AbernethJ17}. We first note that by definition of the Fenchel conjugate of $\psi$:
\[
\min_{\xi \in \R_+^K} \cL(\theta, \xi; \lambda) ~=~
\psi^*(\lambda) \,+\, \sum_{k=1}^K \lambda_k R_k(\theta).
\]
We expand the FTL update and show that $\lambda^t$ here plays the same role as $\bv^t$ in the F-W method.
\begin{eqnarray*}
\lambda^{t+1} &=& \argmax{\lambda \in \R_+^K}\,\left\{ t\psi^*(\lambda) \,+\, \sum_{\tau=1}^t \sum_{k=1}^K \lambda_k R_k(\theta^\tau)\right\}\\
&=& \argmax{\lambda \in \R_+^K}\,\left\{ \psi^*(\lambda) \,+\,  \sum_{k=1}^K \lambda_k \frac{1}{t}\sum_{\tau=1}^t R_k(\theta^\tau)
\right\}\\
&=& \nabla\psi\left(\frac{1}{t}\sum_{\tau=1}^t R_k(\theta^\tau)\right)
~=~ \nabla\psi(\r^t) ~=~ \bv^t,
\end{eqnarray*}
where the third equality follows from \eqref{eq:psi-grad}.
With the above observation, it is clear that \eqref{eq:fw-1} and \eqref{eq:fw-players-theta} generate the same iterates $\theta^t$.
\end{proof}

We note that Abernethy and Wang (2017) \cite{AbernethJ17} previously pointed out that the classical Frank-Wolfe algorithm for convex optimization can be viewed as computing the equilibrium of a zero-sum game. We directly apply their result to show a connection between the method of Narasimhan et al.\ (2015)\cite{Narasimhan+15} and our framework.

\section{SPADE+: An Adaptation of SPADE for \eqref{eq:non-linear-opt}}
\label{app:spade+}
\begin{algorithm}[t]
\caption{SPADE+ for \eqref{eq:non-linear-opt}}
\label{algo:spade+}
\begin{algorithmic}
\STATE Initialize: $\theta^0$, $\blambda^0$
\FOR{$t = 0 $ to $T-1$}
\STATE $\xi^{t} \,=\,
  \nabla \Phi^*(
\lambda^t_{J+1}, \ldots, \lambda^t_{J+K})$, ~where $\Phi: \xi \mapsto \sum_{j=1}^J \lambda^t_j\, \xi_k$
\STATE $\theta^{t+1} \leftarrow \Pi_{\Theta}(\theta^t \,-\, \eta_\theta\,\nabla_\theta\,\tL_2(\theta^t;\, \blambda^t))$
\STATE $\blambda^{t+1} \leftarrow \Pi_{\Lambda}(\blambda^t + \eta_{\blambda}\,\nabla_{\blambda}\,\tL(\bxi^{t},\theta^t;\, \blambda^t))$
\ENDFOR
\RETURN $\bar{\theta} = \frac{1}{T}\sum_{t=1}^T\theta^t$
\end{algorithmic}
\end{algorithm}

We extend the SPADE algorithm of \cite{Narasimhan+15b}, originally proposed for unconstrained optimization of generalized rate metrics, to the constrained problem in \eqref{eq:non-linear-opt}. We seek to solve following max-min problem, where we use a convex relaxation to the Lagrangian for both the 
$\theta$- and $\lambda$-player:
\begin{equation}
    \max_{\lambda \in \R_+^K}\,\min_{\substack{\theta \in \Theta,\\ \xi\in \cR}}\,
    \underbrace{
    \cL_1(\xi; \lambda) \,+\, \tL_2(\theta; \lambda)}_{\tL(\xi, \theta; \lambda)},
\label{eq:spade+-max-min}
\end{equation}
The procedure is outlined in Algorithm \ref{algo:spade+}, where the $\theta$- and $\lambda$-players perform OGD updates on surrogate objectives, and the $\xi$-player plays best response. Since the max-min objective in \eqref{eq:spade+-max-min} is convex in $\theta$,
the algorithm returns the average of the model across all iterates.
We can then show the following convergence guarantee for Algorithm \ref{algo:spade+} under the assumption that $\Theta$ only contains models for which each $\phi^j(\tilde{\bR}(\theta))$ is defined. 
\begin{thm}
    \label{thm:convergence-spade+}
    Let $\bar{\theta}$ be the model returned by Algorithm \ref{algo:spade+}. 
    Let $\Theta$ be a bounded convex set such that each $\phi^j(\tilde{\bR}(\theta))$ is defined  for all $\theta \in \Theta$. Let ${\theta}^* \in {\Theta}$ be such that 
    $\theta^*$ is feasible, i.e.\ $\phi^j(\tilde{\bR}(\theta^*)) \leq 0,\,\forall j \in [J]$, and
    $g({\theta}^*) \leq g({\theta})$ for all $\theta \in \tilde{\Theta}$ that are feasible. 
    Suppose there exists a $\theta' \in \Theta$ such that $\phi^j(\tilde{\bR}(\theta')) \leq -\gamma, \, \forall j \in [J]$, for some $\gamma > 0$. Let $B_g \,=\, \max_{\theta \in \Theta}\,g(\theta)$.
    Let $B_\Theta \geq \max_{\theta \in \Theta}\,\|\theta\|_2$, $B_\theta \geq \max_{t}\|\nabla_{\theta}\tL_2( \theta^t; \blambda^t)\|_2$ and
    $B_\lambda \,\geq\, \max_{t}\|\nabla_{\blambda}\tL(\bxi^t, \theta^t; \blambda^t)\|_2$. 
     Then setting $\kappa \,=\, 2(L+1)B_g/\gamma$, $\eta_\theta = \frac{B_\Theta}{B_{\theta}\sqrt{2T}}$ and $\eta_\lambda = \frac{\kappa}{B_{\blambda}\sqrt{2T}}$, we have w.p. $\geq 1 - \delta$ over draws of stochastic gradients:
    \begin{equation*}
        g(\bar{\theta}) \,\leq\,
        g({\theta}^*)
         \,+\, \cO\bigg(\sqrt{\frac{\log(1/\delta)}{T}}\bigg)
        ~~~~~\text{and}~~~~~
        \phi^j(\tbR(\bar{\theta}))
        \,\leq\,
        \cO\bigg(\sqrt{\frac{\log(1/\delta)}{T}}\bigg),~~~\forall j \in [J].
    \end{equation*}    
\end{thm}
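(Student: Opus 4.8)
The plan is to exploit the fact that---unlike Algorithm~\ref{algo:lagrangian-surrogate}---SPADE+ defines a genuine \emph{zero-sum} convex-concave game. Since $\cL_1$ does not depend on $\theta$ and $\tL_2$ does not depend on $\xi$, all three players in Algorithm~\ref{algo:spade+} operate on the common objective $\tL(\xi,\theta;\lambda)=\cL_1(\xi;\lambda)+\tL_2(\theta;\lambda)$: the $\theta$- and $\xi$-players minimize it and the $\lambda$-player maximizes it. Because $g$ and each $\tR_k$ are convex, $\tL$ is jointly convex in $(\xi,\theta)$ over the bounded set $\Theta\times\cR$ and linear (hence concave) in $\lambda$ over $\Lambda$. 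Consequently there is no need to expand the model class to $\Delta_\Theta$, and the deterministic average iterate $\bar\theta=\frac1T\sum_t\theta^t$ is the natural output. I would run the standard no-regret $\Rightarrow$ approximate-saddle-point $\Rightarrow$ constrained-optimization pipeline (as in Agarwal et al.~(2018)~\cite{Agarwal+18}), essentially re-deriving the proof of Theorem~\ref{thm:convergence-ideal-constrained} with $\tR_k$ in place of $R_k$ and with the cost-sensitive oracle error $\rho$ replaced by the $\theta$-player's online-gradient-descent regret.

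\smallskip\noindent\textbf{Step 1 (per-player regret).} The $\xi$-player plays best response, so by Lemma~\ref{lem:xi-opt-closed-form-constrained} its update is $\xi^t=\nabla\Phi^*(\lambda^t_{J+1},\ldots,\lambda^t_{J+K})$ and it has zero regret. The $\theta$-player runs OGD on the convex function $\tL_2(\cdot;\lambda^t)$ over the $B_\Theta$-bounded set $\Theta$ with stochastic subgradients bounded by $B_\theta$; the OGD analysis of Appendix~\ref{app:prelims} together with an Azuma--Hoeffding bound on the martingale of gradient-estimation errors gives average regret $\epsilon_\theta=\cO\big(B_\Theta B_\theta\sqrt{\log(1/\delta)/T}\big)$ with probability $\ge 1-\delta$. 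The $\lambda$-player runs OGD on $\lambda\mapsto\tL(\xi^t,\theta^t;\lambda)$, which is linear with gradient norm at most $B_\lambda$, over the $\kappa$-bounded set $\Lambda$, giving $\epsilon_\lambda=\cO\big(\kappa B_\lambda\sqrt{\log(1/\delta)/T}\big)$. Combining the three bounds in the usual way for zero-sum games shows that $(\bar\xi,\bar\theta,\bar\lambda)$, with $\bar\xi=\frac1T\sum_t\xi^t$, $\bar\lambda=\frac1T\sum_t\lambda^t$, is an $\epsilon$-approximate saddle point of $\tL$ over $\Theta\times\cR\times\Lambda$ with $\epsilon=\epsilon_\theta+\epsilon_\lambda=\tO(1/\sqrt T)$.

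\smallskip\noindent\textbf{Step 2 (saddle point $\Rightarrow$ guarantee).} Let $\lambda^*$ be an optimal dual solution of the surrogate-relaxed convex program $\min_\theta g(\theta)$ s.t.\ $\tR_k(\theta)\le\xi_k$, $\phi^j(\xi)\le 0$. The Slater point $\theta'$ (with $\phi^j(\tbR(\theta'))\le-\gamma$ and companion slacks $\xi'=\tbR(\theta')$) bounds $\|\lambda^*\|_1\le(L+1)B_g/\gamma$---the factor $L$ because the domain constraint on $\Phi^*$ forces the slack multipliers to be at most $L\sum_j\lambda^*_j$, and $B_g$ bounds the objective gap. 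Setting $\kappa=2(L+1)B_g/\gamma\ge 2\|\lambda^*\|_1$ keeps $\lambda^*$ well inside $\Lambda$, so the $\Lambda$-restricted approximate saddle point is also an approximate saddle point of the Lagrangian of the surrogate-relaxed program. Plugging $\lambda=0$ into the saddle inequality gives $g(\bar\theta)\le g(\theta^*)+\cO(\epsilon)$; plugging in $\lambda^\sharp:=\lambda^*+(\kappa-\|\lambda^*\|_1)e_c$, where $e_c$ picks out the most violated of the $J+K$ constraints, and using strong duality $\min_{\xi,\theta}\tL(\xi,\theta;\lambda^*)=g(\theta^*)$, gives $(\kappa-\|\lambda^*\|_1)$ times the maximum constraint violation $\le\cO(\epsilon)$, hence every constraint violation is $\cO(\epsilon/\|\lambda^*\|_1)=\cO(\epsilon)$. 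In particular $\tR_k(\bar\theta)\le\bar\xi_k+\cO(\epsilon)$ and $\phi^j(\bar\xi)\le\cO(\epsilon)$; combining these through the monotonicity and $L$-Lipschitzness of $\phi^j$ yields $\phi^j(\tbR(\bar\theta))\le\cO(\epsilon)=\tO(1/\sqrt T)$. This is feasibility w.r.t.\ the \emph{surrogate} rates $\tbR$---not $\bR$---because both the $\theta$- and $\lambda$-players use surrogates; this is the price SPADE+ pays relative to Algorithm~\ref{algo:lagrangian-surrogate}, and the reason it needs $\Theta$ restricted so that each $\phi^j(\tbR(\theta))$ is defined.

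\smallskip\noindent\textbf{Main obstacle.} The one delicate point is the boundedness of the $\xi$-player's best response, and hence of the gradient bounds $B_\theta,B_\lambda$. The closed form $\xi^t=\nabla\Phi^*(\lambda^t_{J+1},\ldots,\lambda^t_{J+K})$ is finite only when $(\lambda^t_{J+1},\ldots,\lambda^t_{J+K})\in\dom\Phi^*$, i.e.\ roughly when the slack multipliers are not too large relative to $\sum_j\lambda^t_j$ (cf.\ the footnote: if $\lambda^t_j=0$ for all $j\le J$ but some $\lambda^t_{J+k}>0$, the best response diverges). This forces $\Lambda$ to be defined as the intersection of the $\kappa$-ball with $\dom\Phi^*$---or, as in the footnote, with the first $J$ coordinates bounded below by a small constant, at the cost of an additional assumption that no unconstrained minimizer of $g$ is feasible. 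Once $\Lambda$ is set up so that the $\xi$-responses and all gradients are bounded, the remaining pieces---OGD regret, martingale concentration, and the Slater/$\kappa$ calculation---are routine and parallel the proof of Theorem~\ref{thm:convergence-ideal-constrained} almost verbatim.
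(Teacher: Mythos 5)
Your proposal is correct and matches the paper's intended argument: the paper proves this result exactly by noting that SPADE+ plays a zero-sum convex-concave game on $\tL=\cL_1+\tL_2$ and adapting the proofs of Theorems \ref{thm:convergence-ideal-constrained} and \ref{thm:convergence-surrogate-constrained} (OGD/best-response regret bounds giving an approximate equilibrium, the Slater-margin bound $\|\lambda^*\|_1\le(L+1)B_g/\gamma$ to fix $\kappa$ constant, perturbation of $\lambda^*$ along the most violated constraint plus the Lipschitz/monotonicity lemma for feasibility, and Jensen to output the deterministic average $\bar\theta$ since $\tL$ is convex in $\theta$). Your caveat about keeping the $\xi$-best-response and gradients bounded is the same issue the paper flags in its footnote, so it does not constitute a gap.
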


The proof follows from an adaptation of the proofs for Theorems \ref{thm:convergence-ideal-constrained} and \ref{thm:convergence-surrogate-constrained} (see Sections \ref{app:thm-convergence-ideal-constrained} and \ref{app:thm-convergence-surrogate-constrained}). The iterates of SPADE+ form an approximate  mixed Nash equilibrium of the zero-sum game in \eqref{eq:spade+-max-min}.


\section{Surrogates for KL-divergence}
\label{app:kld}

We now explain why it is difficult to apply previous surrogate-based methods to performance measures such as the KL-divergence metric  in Table \ref{tab:measures} that are defined over a restricted domain, and why our framework provides a cleaner solution for optimizing these metrics.

\paragraph{Difficulty with SPADE \& NEMSIS.}
Consider the negative logarithm of the prediction rate of a model: $-\log(\E_{X}(\mathbb{I}\{f_\theta(X) \geq 0\}))$. This is one of the terms in the KL-divergence  metric in Table \ref{tab:measures}.
%
Previous surrogate methods such as SPADE \cite{Narasimhan+15b} or NEMSIS \cite{Kar+16}
optimize this metric by first 
replacing the indicator function within the logarithm with a surrogate function such as the hinge-loss to obtain a convex upper bound, and then  using this  surrogate approximation for all updates. The only issue with this approach is that the log is not defined for negative values, and as we shall see below, any reasonable choice of surrogate to replace the inner indicator would produce values that are negative. 

To see this, first note that $-\log$ is convex, but non-increasing in its input. 
Hence to construct a convex upper bound for the negative log-rate, we would have to replace the inner indicator with a concave lower-bounding surrogate. However, if we insist that the surrogate needs to be both concave and lower-bound the indicator function, then we would have to allow the surrogate to be negative for some parts of the input space (unless the surrogate is a constant function). 
For example, a
hinge-based convex upper bound on the negative log metric would look like $-\log(\E_{X}(\min\{1, f_\theta(X)\}))$. But if  $f_\theta(X)$ takes large negative values for a large-enough portion of the input space $X$, the term within the log would be negative, rendering this function ill-defined. See Figure \ref{fig:concave-lower-bound} for an illustration.

\begin{figure}
    \centering
    \includegraphics[width=0.3\textwidth]{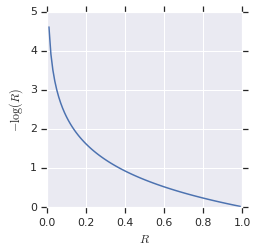}
    ~~
    \includegraphics[width=0.3\textwidth]{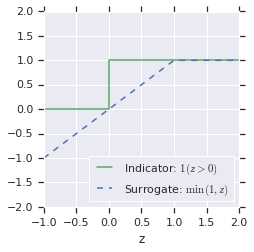}
    \caption{Left: Plot of negative logarithm as a function of rate. Right: A hinge-based concave lower bound on the indicator function.}
    \label{fig:concave-lower-bound}
\end{figure}

The above issue points to a drawback with the SPADE and NEMSIS methods that rely heavily on the use of surrogates for optimization. For instance, to optimize
$-\log(\E_{X}(\min\{1, f_\theta(X)\}))$ using
the NEMSIS method and  a hinge-based surrogate \cite{Kar+16}, we would implement the following dual update (see \eqref{eq:nemsis-dual}):
$$
    \alpha^{t+1}
    \leftarrow
    \argmax{\alpha \in \R_+}\,
    \left\{\log(\alpha+\epsilon) - \alpha\frac{1}{t}\sum_{\tau=1}^t\,\E_{X}\left[\min\{1, f_{\theta^\tau}(X)\}\right]\right\}.
$$
A small $\epsilon > 0$ is added to the log argument to avoid numerical issues. Note that if the term $\E_{X}\left[\min\{1, f_{\theta^\tau}(X)\}\right]$ is negative at any point during the course of optimization, the maximization over $\alpha$ becomes unbounded, and the iterates will never converge from that point onwards.

One way NEMSIS can be applied  to handle log-rates, is to strongly regularize the model to not output large negative values. 
We  believe that this is how 
the NEMSIS was previously used to optimize the KL-divergence metric \cite{Kar+16}. 
This approach may however unnecessarily restrict the space of models we are allowed to optimize over, and may prevent us from finding good solutions.

\paragraph{Applying Algorithm \ref{algo:lagrangian-surrogate} to KLD.}
On the other hand, Algorithm \ref{algo:lagrangian-surrogate} offers a cleaner solution to dealing with log-rates and performance measures based on the KL-divergence. By using original rates (instead of surrogates) for the updates on $\lambda$, the proposed algorithm ensures that the game play between $\lambda$ and $\xi$ never produces values that are outside the domain of $\psi$. 

For completeness, we derive the updates for Algorithm \ref{algo:lagrangian-surrogate} for minimizing the KL-divergence metric  between the true positive proportion $p$ and the model's prediction rate $\hat{p}(\theta)$.  The optimization problem (with constants removed) is given by:
\begin{equation*}
\min_{\theta \in \Theta}\, -p\log(\hat{p}(\theta)) \,-\, (1-p)\log(1 - \hat{p}(\theta)).
\end{equation*}
Introducing slack variables for the terms within the log, this can be re-written as: 
\begin{equation}
\min_{\theta \in \Theta,~ \xi \in \R^2_+}\, -p\log(\xi_1) \,-\, (1-p)\log(\xi_2)
~~~~\text{s.t.}~~~\xi_1 \leq \hat{p}(\theta),~~\xi_2 \leq 1 - \hat{p}(\theta).
\label{eq:log-helper}
\end{equation}
Further, formulating the Lagrangian:
\[
\cL(\theta, \xi; \lambda) \,=\,
-p\log(\xi_1) \,-\, (1-p)\log(\xi_2)
\,+\,
\lambda_1(\xi_1 - \hat{p}(\theta))
\,+\,
\lambda_2(\xi_2 - (1-\hat{p}(\theta))),
\]
we  seek to solve the following max-min optimization problem:
\[
\max_{\lambda \in \R_+^2}\,
\min_{\theta\in\Theta,~\xi \in \R_+^2}\,
\cL(\theta, \xi; \lambda).
\]
Notice that the optimum Lagrange multipliers $\lambda$ are always non-negative. 

Using a concave lower-bounding surrogate $\tilde{p}_+$ for $\hat{p}$ and a concave lower-bounding surrogate $\tilde{p}_-$ for $1-\hat{p}$, we have the following updates for Algorithm \ref{algo:lagrangian-surrogate}:
$$
\xi_1^{t+1} \,=\, \frac{p}{\lambda_1^t + \epsilon};~~
\xi_2^{t+1} \,=\,
\frac{1-p}{\lambda_2^t + \epsilon}
$$
$$
\theta^{t+1} \,=\,
\Pi_{\Theta}\Big(
\theta^t \,-\,
\eta_\theta\left(
\lambda_1^t\,
\nabla_\theta\, [-\tilde{p}_+(\theta^t)]
\,+\, 
\lambda_2^t\,
\nabla_\theta\, [-\tilde{p}_-(\theta^t)]
\right)
\Big)
$$
$$
\lambda^{t+1}
 \,=\,
\Pi_\Lambda\Big(
\lambda^t \,+\,
\eta_\lambda
\left(
\xi^{t+1} \,-\,
\begin{bmatrix}
p(\theta^t)\\
1-p(\theta^t)
\end{bmatrix}
\right)
\Big)
$$
where $\Lambda \subset \R_+$ is a bounded set and $\epsilon > 0$ is a very small value to avoid numerical issues. Because $\lambda \geq \0$, both $\xi_1$ and $\xi_2$ are always non-negative, ensuring that the $\log$ in \eqref{eq:log-helper} is always defined.

\paragraph{Other Metrics.} As noted in Section \ref{sec:limitations-existing-surrogates}, the above issue also arises with the G-mean metric, where the square root is undefined for negative values, as well as fractional-linear metrics such as the F-measure. In the case of fractional-linear metrics, one can obtain a pseudo-convex upper bound for the metric by replacing the numerator with a convex upper-bounding surrogate and the denominator with a concave lower-bounding surrogate; however the pseudo-convex property of the resulting function holds only if the surrogate for the denominator evaluates to non-negative values, which as with the KLD, poses a restriction on the model class we are allowed to use.

\section{Connections to Cotter et al. (2019)}\label{app:cotteretal}

Like us, Cotter et al.~\cite{Cotter+19} proposed using surrogates \emph{only} for one of the two players of their game, however, unlike us, they did not prove a convergence result for such an approach, when using the usual Lagrangian formulation. Instead, they proposed a different non-zero-sum game, which they called the proxy-Lagrangian. In the context of \eqref{eq:custom-opt}, we can write the proxy-Lagrangian of \eqref{eq:custom-opt-as-max-min} as:
\begin{align*}
    \mathcal{L}_{\theta,\xi}\left(\theta,\xi;\lambda\right) =& \lambda_1 \psi\left(\xi_1,\ldots,\xi_K\right) + \sum_{k=1}^K \lambda_{k+1}\left( \tilde{R}_K(\theta) - \xi_k \right) \\
    \mathcal{L}_{\lambda}\left(\theta,\xi;\lambda\right) =& \sum_{k=1}^K \lambda_{k+1}\left( R_K(\theta) - \xi_k \right)
\end{align*}
where $\lambda \in \Delta^{K-1} \subseteq \R^{K}$ is the $(K-1)$-probability-simplex---notice that, unlike for the Lagrangian formulation, a multiplier $\lambda_1$ is associated with the \emph{objective}, in addition to the $K$ multipliers associated with the constraints. A similar variant could be easily given for \eqref{eq:non-linear-opt}. The only difference from their presentation is that we have grouped $\theta$ and $\xi$ together as the parameters of the first player. Cotter et al.~\cite{Cotter+19}'s proposed approach to such a problem is to minimize $\mathcal{L}_{\theta,\xi}$ over $\theta$ and $\xi$ using an external regret minimizing algorithm (e.g. OGD), and to maximize $\mathcal{L}_{\lambda}$ over $\lambda$ using a \emph{swap regret} minimizing algorithm. They go on to prove a $\mathcal{O}(1/\sqrt{T})$ convergence rate.

This discussion shows that our proposed approach can be plugged-in to theirs straightforwardly. The resulting algorithm is more complicated (due to the use of swap regret), but the convergence guarantee has a better $T$-dependence. Unfortunately, since they do not use the Lagrangian formulation, and their $\lambda$-player minimizes swap regret, it's difficult to see how analytically optimizing over $\xi$, as we do in Algorithms \ref{algo:lagrangian-ideal} and \ref{algo:lagrangian-surrogate}, could be incorporated into their approach.

\if 0
\begin{rem}[\textbf{Extending to black-box metrics}]
Both Algorithms \ref{algo:lagrangian-ideal}
and \ref{algo:lagrangian-surrogate} need to compute the Fenchel conjugate of $\psi$ to implement the best response strategy for the $\xi$-player. This is usually possible for common convex functions such as geometric or harmonic mean. For functions $\psi$ for which the Fenchel conjugate is difficult to compute analytically, one can have the $\xi$-player execute OGD on $\cL_1$ instead of playing best response (see Algorithm \ref{algo:lagrangian-all-ogd} in the appendix). Even in scenarios where the form of $\psi$ is not known, and it can be accessed only as a black-box, as long as $K$ is small, we can still apply our algorithms by partitioning the range $[0,1]^K$ into a $K$-dimensional grid and performing the minimization over $\xi$ through a brute-force search over this grid.
\end{rem}
\fi

\if 0
\section{Additional Experimental Details}
\label{app:expts}
The datasets used are listed in Table \ref{tab:datasets}. WikiToxicity alone is a text dataset. Here, we use an embedding \cite{pennington2014glove} to convert the text to numerical features. 
All datasets were split randomly into train-validation-test sets in the ratio 4/9:2/9:1/3, except WikiToxicity where we used the splits made available by the authors \cite{Dixon:2018}. 

We implemented our algorithms using the open-source Tensorflow Constrained Optimization (TFCO) library of of  \cite{Cotter+19,Cotter+19b}.\footnote{\url{https://github.com/google-research/tensorflow_constrained_optimization/}} All experiments were run on a compute cluster. We used Adam to perform gradient updates on $\theta$ and $\lambda$ and ran the algorithm for  5000 iterations.
We recorded a snapshot of the iterates  every 10 iterations and used the shrinking procedure proposed \cite{Cotter+19b}  to form a sparse stochastic classifier over a small number of iterates. The TFCO library also provided a heuristic to pick the best deterministic classifier from the iterates.
The two step-sizes for these updates were chosen from the range $\{0.001, 0.01, 0.1, 1.0\}$ using the validation set. We will make our code available.

The baseline UncError optimized the hinge loss using Adam. PostShift trained a logistic regression model and assigned different thresholds to the two protected groups so that both groups have the same prediction rate. COCO implemented the cost-sensitive oracle by training a logistic regression model and suitably thresholding it. COCO was run for 5000 iterations with the step-size for the gradient ascent procedure it uses tuned $\{0.005, 0.01, \ldots, 10.0\}$.
UncF1 trained a logistic regression model and tuned a threshold on this model to optimize the F-measure. 
\fi

\if 0
\section{Derivation of Algorithm \ref{algo:lagrangian-sum-of-ratios}}
\label{app:lagrangian-sum-of-ratios}
Suppose
$0 ~\leq~ \alpha_m^\top \bR(\theta) ~\leq~ a ~\leq~ b ~\leq~ 
\bbeta_m^\top \bR(\theta)  ~\leq~ 1,
~\forall m \in [M], \forall \theta \in \Theta$ for some $a, b > 0$. We can then restrict the slack variables $\xi_m$ in \eqref{eq:lagrangian-sum-of-ratios-opt} to the range $[0,a]$ and the slack variables $\xi'_m$ to the range $[b,1]$:
\begin{equation}
    \min_{\theta \in \Theta, ~\xi \in [0,a]^M, ~\xi' \in [b,1]^M}\, g(\theta)
~~~\text{s.t.}~~~
\xi_m \geq\balpha_m^\top \bR(\theta), ~~
\xi'_m \leq\bbeta_m^\top \bR(\theta),
~~\forall m \in [M],~~
\sum_{m=1}^M\frac{\xi_m}{\xi'_m} \,\leq\, \gamma.
\label{eq:lagrangian-sum-of-ratios-opt-bounded-xi}
\end{equation}
Introducing Lagrange multipliers $\lambda_0 \in \R_+,\, \lambda, \lambda' \in \R_+^M$, the Lagragian for the above problem is given by:
\begin{eqnarray*}
\lefteqn{
\cL_{sr}(\theta, \xi, \xi'; \lambda, \lambda')}\\
&=&
g(\theta)
\,+\, 
\lambda_0\,\left(\sum_{m=1}^M\frac{\xi_m}{\xi'_m} \,-\, \gamma\right)
\,+\,
\sum_{m=1}^M\lambda'_m\left( 
\xi'_m\,-\, 
\bbeta_m^\top \bR(\theta)
\right)
\,+\,
\sum_{m=1}^M\lambda_m\left(\balpha_m^\top \bR(\theta)
\,-\, \xi_m
\right).
\end{eqnarray*}
We seek to solve the following max-min optimization problem:
\[
\max_{\lambda, \lambda'}\,
\min_{\theta,\, \xi, \xi'}\,
\cL_{sr}(\theta, \xi, \xi'; \lambda, \lambda').
\]
The minimization of the Lagrangian over the slack variables $\xi'$ can be simplified as:
\begin{eqnarray*}
\lefteqn{
\min_{\xi' \,\in\, [0,1]^M}\cL_{sr}(\theta, \xi, \xi'; \lambda, \lambda')}\\
&=&
\sum_{m=1}^M
\min_{\xi'_m \in [0,1]}
\left\{-\lambda_0\frac{\xi'_m - \xi_m}{\xi'_m}
\,+\, \lambda_0
\,+\, \lambda'_m\xi'_m
\right\}
\,-\,
\sum_{m=1}^M\lambda'_m\,\bbeta_m^\top \bR(\theta)
\,+\,
\sum_{m=1}^M\lambda_m\left(\balpha_m^\top \bR(\theta)
\,-\,
\xi_m
\right)\\
&=&
-2\sum_{m=1}^M
\sqrt{\lambda_0\lambda'_m\xi_m}
\,+\,
\sum_{m=1}^M\lambda'_m\,\bbeta_m^\top \bR(\theta)
\,+\,
\sum_{m=1}^M\lambda_m\left(\xi_m
\,-\, \balpha_m^\top \bR(\theta)
\right)\\
&=&
-2\sum_{m=1}^M
\sqrt{\lambda_0\lambda'_m\xi_m}
\,+\,
\sum_{m=1}^M\lambda'_m\,\bbeta_m^\top \bR(\theta)
\,+\,
\sum_{m=1}^M\lambda_m\left(\xi_m
\,-\, \balpha_m^\top \bR(\theta)
\right)\\
&=&
-2\sum_{m=1}^M
\sqrt{\lambda_0\lambda'_m\xi_m}
\,+\,
\sum_{m=1}^M\lambda'_m\,\bbeta_m^\top \bR(\theta)
\,-\,
\sum_{m=1}^M\lambda_m\balpha_m^\top \bR(\theta)
\,+\,
\sum_{m=1}^M
\lambda_m\xi_m
\end{eqnarray*}
Using a change of variable $\lambda'_m = u_m^2$, for $u_m \in \R_+$, we define:
\begin{eqnarray*}
\bar{\cL}_{sr}(\theta, \xi; \lambda, u)&=&
-2\sum_{m=1}^M
u_m\sqrt{\lambda_0\xi_m}
\,+\,
\sum_{m=1}^M u_m^2\,\bbeta_m^\top \bR(\theta)
\,-\,
\sum_{m=1}^M\lambda_m\balpha_m^\top \bR(\theta)
\,+\,
\sum_{m=1}^M
\lambda_m\xi_m
\end{eqnarray*}
and can equivalently write 
\begin{lem}
Suppose
$\alpha_\ell^\top \bR(\theta) \leq a,~
\bbeta_\ell^\top \bR(\theta) \geq b,
~\forall \ell \in [L], \forall \theta \in \Theta$ for some $a, b > 0$
$$
\frac{\balpha_\ell^\top \bR(\theta)}{\bbeta_\ell^\top \bR(\theta)}
\,=\, 
\varphi\big(u_\ell,\, \bbeta_\ell^\top \bR(\theta),\,
\bbeta_\ell^\top \bR(\theta)\,-\, \balpha_\ell^\top \bR(\theta)
\big),
~~\text{where}~~
\varphi(u_\ell, z, z')\,=\,
u_\ell^2\,z\,\,-\, 2u_\ell\,\sqrt{z'}
\,+\,
1.
$$
\label{lem:sum-of-ratios-re-write}
\end{lem}
 Under assumptions in Lemma \ref{lem:sum-of-ratios-re-write}, we can re-write \eqref{eq:sum-of-ratios-opt} as follow, where $\cU = [0,\sqrt{a}/b]^L$:
\begin{equation*}
    \min_{\theta \in \Theta,\, \xi \in [0,1]^L}\, g(\theta)
~~~~\text{s.t.}~~~~
\min_{u \in \cU}\,
\sum_{\ell=1}^L
\varphi(u_\ell, \balpha_\ell^\top \bR(\theta), \, \xi_\ell)
\,\leq\, 
\gamma,
~~~~
\xi_\ell
\,\leq\, 
\bbeta_\ell^\top \bR(\theta) \,-\, \balpha_\ell^\top \bR(\theta),
\end{equation*}
Introducing Lagrange multipliers for the constraints, we define the Lagrangian for this problem as:
$$
\textstyle
{\cL_{sr}(\theta, \xi, u; \lambda)}
\,=\, g(\theta) \,+\,
\lambda_0
\sum_{\ell=1}^L
\varphi(u_\ell, \balpha_\ell^\top \bR(\theta), \, \xi_\ell)
\,+\,
\sum_{\ell=1}^L \lambda_\ell\,
\big(
\xi_\ell \,-\, {\bbeta_\ell^\top\bR(\theta) \,+\,
\balpha_\ell^\top\bR(\theta)}
\big),
$$
and formulate a max-min optimization problem:
\begin{equation*}
    \max_{\lambda \in \R_+^{L+1}}\, \min_{\theta \in \Theta,\,
    \xi \in [0,1]^L,
    \, u \in \cU }\,\cL_{sr}(\theta, \xi; \lambda, u).
\end{equation*}
%
Note that $\cL_{sr}$ is individually convex in $u_\ell$, $\xi_\ell$ and $\R(\theta)$ (but not jointly-convex), and linear in $\lambda$. We adapt Algorithm \ref{algo:lagrangian-surrogate} to this problem by performing best response for $\xi$ and running OGD for $u$ and $\theta$ and $\lambda$, with the $\theta$-optimization alone performed on a surrogate Lagrangian $\tL_{sr}$ obtained by replacing the rates $R_k(\theta)$ in $\cL_{sr}$ with surrogate functions $\tR_k(\theta)$. The details are shown in Algorithm \ref{algo:lagrangian-sum-of-ratios}.
\fi

\if 0
\section{Additional Algorithms and General Algorithmic Templates}
\begin{figure}
\begin{algorithm}[H]
\caption{Surrogate-based All-OGD Optimizer}
\label{algo:lagrangian-all-ogd}
\begin{algorithmic}
\STATE Initialize: $\theta \in \0^d$, $\blambda^0 = \0^K$
\FOR{$t = 0 $ to $T-1$}
\STATE $\xi^{t+1} \leftarrow \xi^t \,-\, \eta_\xi\,\Pi_{[0,1]^K}(\nabla_\xi\,\cL_1(\xi^t;\, \blambda^t))$
\STATE $\theta^{t+1} \leftarrow \theta^t \,-\, \eta_\theta\,\Pi_{\Theta}(\nabla_\theta\,\tL_2(\theta^t;\, \blambda^t))$
\STATE $\blambda^{t+1} \leftarrow \blambda^t + \eta_{\blambda}\,\Pi_{\Lambda}\left(\nabla_{\blambda}\,\cL(\theta^t, \bxi^t;\, \blambda^t)\right)$
\ENDFOR
\RETURN $\theta^1, \ldots, \theta^T$
\end{algorithmic}
\end{algorithm}
\begin{algorithm}[H]
\caption{General Family of Lagrangian Optimizers}
\label{algo:lagrangian-template}
\begin{algorithmic}
\STATE Initialize: $\xi^0$, $\theta^0$, $\blambda^0$
\FOR{$t = 0 $ to $T-1$}
\STATE $\xi^{t+1} \,=\,
\text{NoRegret}_\xi(\xi^0,\ldots,\xi^{t}, \theta^{0},
\ldots, \theta^{t}, \lambda^{0},\ldots,\lambda^{t})$ on $\cL_1$ or $\tL_1$
\STATE $\theta^{t+1} \,=\, \text{NoRegret}_\theta(\xi^0,\ldots,\xi^{t+1}, \theta^{0},
\ldots, \theta^{t}, \lambda^{0},\ldots,\lambda^{t})$ on $\cL_2$ or $\tL_2$
\STATE $\lambda^{t+1} \,=\, \text{NoRegret}_\lambda(\xi^0,\ldots,\xi^{t+1}, \theta^{0},
\ldots, \theta^{t+1}, \lambda^{0},\ldots,\lambda^{t})$ on $\cL$ or $\tL_3$
\ENDFOR
\RETURN $\theta^1, \ldots, \theta^T$
\end{algorithmic}
\end{algorithm}
\end{figure}

We present Algorithm \ref{algo:lagrangian-all-ogd}, where all the three players play OGD. We also present a general algorithm template in Algorithm \ref{algo:lagrangian-template} for the algorithms presented in this paper.
\fi

\section{Proof of Theorem \ref{thm:convergence-ideal}}
\label{app:thm-convergence-ideal}
All our proofs are for the case where the domain of $\psi$ is $\R_+^K$. This covers all the metrics considered in this paper, including the log-rate.

We first state a lemma that relates the Lagrangian $\cL$ with the Fenchel conjugate of $\psi$. In particular, we will show that if $\psi$ is $L$-Lipschitz in $[0,1]^K$, it suffices to set the radius of the space of Lagrange multipliers over which we optimize $\cL$ to be at most $L$. This will later be helpful in choosing $\kappa$ in Algorithm \ref{algo:lagrangian-ideal}.
\begin{lem}
\label{lem:app-fenchel-conjugate}
Suppose $\psi$ is strictly convex, monotonically non-decreasing in each argument and $L$-Lipschitz w.r.t.\ the $\ell_\infty$-norm in $[0,1]^K$.
Setting the radius of the space of Lagrange multipliers $\Lambda$ to be at most $L$, i.e.\ $\Lambda \,=\, \{\lambda \in \R_+^K: \,|\, \|\lambda\|_1 \leq L\}$, we have for any $\mu \in \Delta_\Theta$:
\begin{enumerate}
    \item $
    \displaystyle
    \lambda^* \in \argmax{\lambda \in \R_+^K}\,\Big\{
    \min_{\xi \in \cR}\,\cL(\xi, \mu;\, \lambda)\Big\}
    ~\Rightarrow~ \lambda^* \in \Lambda
    $
    \item $
    \displaystyle
    \psi(\bR(\mu))\,=\, \max_{\lambda \in \Lambda}
    \Big\{
    \min_{\xi \in \cR}\cL(\xi, \mu;\, \lambda)
    \Big\}.
    $
    \item $
    \displaystyle
    \nabla\psi^*(\lambda^*)\,=\, \argmin{\xi \in \cR}\,\cL_1(\xi;\, \lambda^*),~\forall \lambda^*\in \Lambda.
    $
\end{enumerate}
\end{lem}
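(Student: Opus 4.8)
The plan is to reduce all three claims to standard Fenchel--duality facts, after first eliminating the inner minimization over $\xi$. Writing $\cL(\xi,\mu;\lambda) \,=\, \cL_1(\xi;\lambda) + \cL_2(\mu;\lambda)$ with $\cL_1(\xi;\lambda) = \psi(\xi) - \lambda^\top\xi$ and $\cL_2(\mu;\lambda) = \lambda^\top\bR(\mu)$, and using $\dom\,\psi = \R_+^K = \cR$, I would first observe
\[
\min_{\xi\in\cR}\,\cL_1(\xi;\lambda) \;=\; -\sup_{\xi\in\R_+^K}\big\{\lambda^\top\xi - \psi(\xi)\big\} \;=\; -\psi^*(\lambda),
\]
so that $\min_{\xi\in\cR}\,\cL(\xi,\mu;\lambda) = \lambda^\top\bR(\mu) - \psi^*(\lambda) \,=:\, g(\lambda)$, a concave function of $\lambda$.

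For claims 1 and 2, I would invoke Fenchel--Moreau: since $\psi$ is convex and continuous on the closed set $\R_+^K$, it is closed, hence $\psi^{**} = \psi$ on $\R_+^K$, and therefore $\sup_{\lambda\in\R^K} g(\lambda) = \psi^{**}(\bR(\mu)) = \psi(\bR(\mu))$ (note $\bR(\mu)\in[0,1]^K\subseteq\R_+^K$). The supremum is attained exactly on $\partial\psi(\bR(\mu))$ by the Fenchel--Young equality \eqref{eq:fenchel-young}--\eqref{eq:psi-grad}. Two observations finish the argument: (i) because $\psi$ is non-decreasing in each coordinate, every subgradient $v\in\partial\psi(z)$ satisfies $v\geq\0$ (test the subgradient inequality in the directions $-e_k$), so in particular every maximizer of $g$ already lies in $\R_+^K$ and $\sup_{\lambda\in\R^K}g = \sup_{\lambda\in\R_+^K}g$; and (ii) because $\psi$ is $L$-Lipschitz w.r.t.\ $\|\cdot\|_\infty$ on $[0,1]^K$, the subgradient inequality evaluated along the direction $\1$ gives $t\|v\|_1 = v^\top(t\1) \le \psi(\bR(\mu)+t\1)-\psi(\bR(\mu)) \le Lt$ for small $t>0$, i.e.\ $\|v\|_1\le L$. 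Combining (i) and (ii), every maximizer $\lambda^*$ of $g$ lies in $\Lambda = \{\lambda\in\R_+^K : \|\lambda\|_1\le L\}$, which is claim 1; and since restricting the feasible set to $\Lambda$ discards no maximizer, $\max_{\lambda\in\Lambda} g(\lambda) = \psi(\bR(\mu))$, which is claim 2.

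For claim 3, I would use the conjugate-subgradient correspondence: $\argmin_{\xi\in\cR}\,\cL_1(\xi;\lambda^*) = \argmax_{\xi\in\R_+^K}\{(\lambda^*)^\top\xi - \psi(\xi)\}$, and $\xi^*$ attains this maximum iff $\lambda^*\in\partial\psi(\xi^*)$ iff $\xi^*\in\partial\psi^*(\lambda^*)$ (by \eqref{eq:psi-grad} applied to the pair $\psi,\psi^*$). Since $\psi$ is strictly convex, $\psi^*$ is differentiable on the interior of its domain, so $\partial\psi^*(\lambda^*)$ is the singleton $\{\nabla\psi^*(\lambda^*)\}$, giving claim 3 (this is just Lemma~\ref{lem:xi-opt-closed-form} restated in the stochastic-model notation).

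The main obstacle I anticipate is a careful treatment of domain boundaries and conjugate finiteness: the Lipschitz bound is hypothesized only on $[0,1]^K$, whereas $\dom\,\psi = \R_+^K$, so the direction-$\1$ step in (ii) needs $\bR(\mu)+t\1$ to stay in $[0,1]^K$ (immediate when $\bR(\mu)$ is interior; at upper-boundary coordinates one should instead bound the one-sided directional derivatives, which for a convex function that is $L$-Lipschitz on $[0,1]^K$ are still $\le L$), and a symmetric remark applies to the argument for $v\ge\0$ when some coordinate of $\bR(\mu)$ equals $0$. Likewise, claim 3 implicitly requires $\lambda^*$ to lie in the interior of $\dom\,\psi^*$ for $\nabla\psi^*(\lambda^*)$ to exist; this is exactly where the radius restriction $\|\lambda\|_1\le L$ built into $\Lambda$ is needed, together with the $L$-Lipschitzness of $\psi$ (which guarantees $\psi^*$ is finite on the $\ell_1$-ball of radius $L$). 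I expect these to be routine once set up, but they are the points at which the three hypotheses on $\psi$ (strict convexity, coordinatewise monotonicity, $\ell_\infty$-Lipschitzness) are genuinely used.
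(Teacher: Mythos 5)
Your proposal is correct and follows essentially the same route as the paper's proof: eliminate $\xi$ via $\min_{\xi\in\cR}\cL_1(\xi;\lambda)=-\psi^*(\lambda)$, identify the maximizing $\lambda$ with $\partial\psi(\bR(\mu))$ through the Fenchel--Young equality, use monotonicity for $\lambda^*\geq\0$ and the $\ell_\infty$-Lipschitz bound for $\|\lambda^*\|_1\leq L$, obtain claim 2 from $\psi^{**}=\psi$, and claim 3 from the conjugate-subgradient correspondence with strict convexity giving differentiability of $\psi^*$. Your use of biconjugation in place of the paper's explicit strong-duality step, and of subgradients where the paper asserts differentiability of $\psi$ from strict convexity, are only cosmetic differences (and if anything slightly more careful about the boundary issues you flag).
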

\begin{proof}
1. By strong duality, we first have:
\begin{equation}
\psi(\bR(\mu))~=~\max_{\lambda \in \R_+^K}\,
    \min_{\xi \in \cR}\,\cL(\xi, \mu;\, \lambda)
\label{eq:strong-duality}
\end{equation}
Since $\psi$ is strictly convex, it's differentiable.
From the Fenchel-Young’s equality in \eqref{eq:fenchel-young}, we have:
\begin{eqnarray*}
\lambda^* = \nabla\psi(\bR(\mu))
&\iff&
\psi(\bR(\mu))\,=\,
-\psi^*(\lambda^*)
\,+\, 
\sum_{k=1}^K \lambda_k^*\, R_k(\mu)\\
&\iff &
\psi(\bR(\mu))\,=\,
\min_{\xi \in \cR}
\left\{
\psi(\xi)
\,-\,
\sum_{k=1}^K \lambda^*_k\, \xi_k
\right\}
\,+\, 
\sum_{k=1}^K \lambda^*_k\, R_k(\mu) 
\\
&\iff&
\psi(\bR(\mu))\,=\,
\min_{\xi \in \cR}\,\cL(\xi, \mu;\, \lambda^*)
\\
&\iff&
\max_{\lambda \in \R_+^K}\,
    \min_{\xi \in \cR}\,\cL(\xi, \mu;\, \lambda)
    \,=\,
\min_{\xi \in \cR}\,\cL(\xi, \mu;\, \lambda^*)
\\
&\iff &
\lambda^* \,\in\, \argmax{\lambda \in \R_+^K}\,
    \min_{\xi \in \cR}\,\cL(\xi, \mu;\, \lambda),
\end{eqnarray*}
where where the second step uses the definition of Fenchel dual $\psi^*$ (see \eqref{eq:fenchel-conjugate}), the fourth step follows from \eqref{eq:strong-duality} and the last step
uses monotonicity of $\psi$ and the fact the gradients of $\psi$ all have non-zero entries. This shows that  maximizer of $\min_{\xi} \cL(\xi, \mu; \lambda)$ over $\lambda \in \R_+^K$ is unique and equal to the gradient  $\nabla\psi(\bR(\mu))$. Because $\psi$ is $L$-Lipschitz w.r.t.\ the $\ell_\infty$-norm  in $[0,1]^K$, the gradient norm $\|\nabla\psi(\bR(\mu))\|_1\,\leq\, L$. Hence, the set $\Lambda$ always contains a maximizer of $\min_{\xi} \cL(\xi, \mu; \lambda)$ over $\lambda \in \R_+^K$.


2. For any $\mu$:
\begin{eqnarray*}
\max_{\lambda \in \Lambda}\min_{\xi \in \cR}\,
\cL(\xi, \mu;\,\lambda)&=&
\max_{\lambda \in \Lambda}\,
\left\{
\min_{\xi \in \cR}
\left\{\psi(\xi) \,-\, \sum_{k=1}^K \lambda_k \xi_k\right\}
\,+\, 
\sum_{k=1}^K \lambda_k\, R_k(\mu)
\right\}
\\
&=&
\max_{\lambda \in \Lambda}\left\{
-\psi^*(\lambda)
\,+\, 
\sum_{k=1}^K \lambda_k\, R_k(\mu)
\right\}\\
&=&
\max_{\lambda \in \R^K}\left\{
-\psi^*(\lambda)
\,+\, 
\sum_{k=1}^K \lambda_k\, R_k(\mu)
\right\}\\
&=& \psi^{**}(\bR(\mu))\\
&=& \psi(\bR(\mu)),
\end{eqnarray*}
where in the fourth step, we have used statement 1 to replace the max of $\Lambda$ with a max over $\R_+^K$, the fifth step uses the definition of second Fenchel conjugate (see \eqref{eq:second-fenchel-conjugate}); the last step follows from convexity of $\psi$.

3. By strict convexity and monotonicity of $\psi$, $\psi^*$ is defined for any $\lambda^* \in \Lambda$ and is differentiable in $\Lambda$. We have from the Fenchel-Young's equality in \eqref{eq:fenchel-young}:
\begin{eqnarray*}
\xi^* = \nabla\psi^*(\lambda^*)
&\iff&
\psi^*(\lambda^*)
\,=\, \psi(\xi^*) \,-\, \sum_{k=1}^K \lambda^*_k\xi^*_k\\
&\iff& \psi^*(\lambda^*)\,=\,
\cL_1(\xi^*; \lambda^*)\\
&\iff& \xi^* \,\in\,
\argmin{\xi \in \cR}\,\cL_1(\xi;\, \lambda^*).
\end{eqnarray*}
This completes the proof of parts 1-3.
\end{proof}
\subsection{Proof of Lemma \ref{lem:xi-opt-closed-form}}
The proof follows directly from statement 3 of Lemma \ref{lem:app-fenchel-conjugate}.

\subsection{General Convergence Result}
We present a convergence result for a general no-regret strategy for the $\lambda$-player, and then apply it to the case where the player runs OGD with specific step-sizes. The iterates  generated by Algorithm \ref{algo:lagrangian-ideal} for \eqref{eq:custom-opt} yield an approximate Nash equilibrium, i.e. the $\lambda$-player choosing the fixed strategy $\bar{\lambda} = \frac{1}{T}\sum_{t=1}^T \lambda^{t}$, 
the $\xi$-player choosing the fixed strategy $\bar{\xi} = \frac{1}{T}\sum_{t=1}^T \xi^t$, and the $\theta$-player choosing a uniform distribution $\bar{\mu}$ over $\theta^1, \ldots, \theta^T$, together comprise an approximate mixed-strategy Nash equilibrium of the zero-sum game in \eqref{eq:custom-opt-as-max-min-stochastic}.
\begin{thm}
    Let $\theta^1, \ldots, \theta^T, \xi^1, \ldots, \xi^T, \lambda^1, \ldots, \lambda^T$ be the iterates generated by Algorithm \ref{algo:lagrangian-ideal} for \eqref{eq:custom-opt}. Suppose the $\lambda$ iterates satisfy the following:
    \[
    \frac{1}{T}\sum_{t=1}^T\,\cL(\xi^t, \theta^t; \lambda^t) 
    \,\geq\, 
    \max_{\lambda \in \Lambda}\,\frac{1}{T}\sum_{t=1}^T\,\cL(\xi^t, \theta^t; \lambda) \,-\,
    \epsilon_\lambda,
    \]
    for some $\epsilon_\lambda>0$.  Suppose $\psi$ is strictly convex, monotonically non-decreasing in each argument and $L$-Lispchitz w.r.t.\ $\ell_\infty$ norm in $[0,1]^K$. 
    Let $\bar{\mu}$ be a stochastic model with a probability mass of $\frac{1}{T}$ on $\theta^t$. 
    Then setting $\kappa = L$:
    $$
    \psi\big(\bR(\bar{\mu})\big)
    \,\leq\, \min_{\mu \in \Delta_\Theta}\psi\big(\bR(\mu)\big) \,+\,
    \epsilon_\lambda + \rho.
    $$
\label{thm:convergence-ideal-general}
\end{thm}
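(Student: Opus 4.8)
The plan is to run the standard ``no-regret $\Rightarrow$ approximate equilibrium $\Rightarrow$ approximate optimality'' argument for the zero-sum game \eqref{eq:custom-opt-as-max-min-stochastic}, using the Fenchel-conjugate identities of Lemma~\ref{lem:app-fenchel-conjugate} to translate the game value back into $\psi(\bR(\cdot))$. Throughout I set $\kappa=L$, so $\Lambda=\{\lambda\in\R_+^K:\|\lambda\|_1\le L\}$; this radius is large enough that $\xi^t=\nabla\psi^*(\lambda^t)$ is well-defined (Lemma~\ref{lem:xi-opt-closed-form}) and that $\Lambda$ contains every maximizer needed when applying Lemma~\ref{lem:app-fenchel-conjugate}. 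I write $\bar\mu$ for the uniform mixture over $\theta^1,\dots,\theta^T$, and I use two elementary consequences of the linearity of $\cL_2(\cdot;\lambda)$ in the rates: $\cL_2(\bar\mu;\lambda)=\frac1T\sum_t\cL_2(\theta^t;\lambda)$, and $\min_{\theta\in\Theta}\cL_2(\theta;\lambda)=\min_{\mu\in\Delta_\Theta}\cL_2(\mu;\lambda)$, so that the $\rho$-approximate CSO guarantee at step $t$ upgrades to $\cL_2(\theta^t;\lambda^t)\le\cL_2(\mu;\lambda^t)+\rho$ for every $\mu\in\Delta_\Theta$.

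The first half of the chain bounds $\psi(\bR(\bar\mu))$ from above by the averaged Lagrangian along the run. I would pick $\hat\lambda\in\argmax{\lambda\in\Lambda}\min_{\xi\in\cR}\cL(\xi,\bar\mu;\lambda)$, whose optimal value equals $\psi(\bR(\bar\mu))$ by Lemma~\ref{lem:app-fenchel-conjugate}(2). Since $\cL_1(\xi^t;\hat\lambda)\ge\min_{\xi\in\cR}\cL_1(\xi;\hat\lambda)=-\psi^*(\hat\lambda)$ for every $t$, averaging $\cL(\xi^t,\theta^t;\hat\lambda)=\cL_1(\xi^t;\hat\lambda)+\cL_2(\theta^t;\hat\lambda)$ over $t$ gives $\frac1T\sum_t\cL(\xi^t,\theta^t;\hat\lambda)\ge-\psi^*(\hat\lambda)+\cL_2(\bar\mu;\hat\lambda)=\min_{\xi\in\cR}\cL(\xi,\bar\mu;\hat\lambda)=\psi(\bR(\bar\mu))$. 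Applying the $\lambda$-player regret hypothesis at $\hat\lambda\in\Lambda$ then yields
\[
\psi(\bR(\bar\mu))\;\le\;\frac1T\sum_t\cL(\xi^t,\theta^t;\hat\lambda)\;\le\;\max_{\lambda\in\Lambda}\frac1T\sum_t\cL(\xi^t,\theta^t;\lambda)\;\le\;\frac1T\sum_t\cL(\xi^t,\theta^t;\lambda^t)+\epsilon_\lambda .
\]

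For the second half I bound $\frac1T\sum_t\cL(\xi^t,\theta^t;\lambda^t)$ from above by $\min_{\mu\in\Delta_\Theta}\psi(\bR(\mu))+\rho$ using the two best responses. Fix $\mu\in\Delta_\Theta$; for each $t$ the $\xi$-best response gives $\cL_1(\xi^t;\lambda^t)=\min_{\xi\in\cR}\cL_1(\xi;\lambda^t)$, and the CSO guarantee gives $\cL_2(\theta^t;\lambda^t)\le\cL_2(\mu;\lambda^t)+\rho$, so $\cL(\xi^t,\theta^t;\lambda^t)\le\min_{\xi\in\cR}\cL(\xi,\mu;\lambda^t)+\rho\le\max_{\lambda\in\Lambda}\min_{\xi\in\cR}\cL(\xi,\mu;\lambda)+\rho=\psi(\bR(\mu))+\rho$, using $\lambda^t\in\Lambda$ in the middle step and Lemma~\ref{lem:app-fenchel-conjugate}(2) in the last. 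Averaging over $t$ and minimizing over $\mu$ gives $\frac1T\sum_t\cL(\xi^t,\theta^t;\lambda^t)\le\min_{\mu\in\Delta_\Theta}\psi(\bR(\mu))+\rho$; chaining with the previous display produces $\psi(\bR(\bar\mu))\le\min_{\mu\in\Delta_\Theta}\psi(\bR(\mu))+\epsilon_\lambda+\rho$.

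I do not expect a genuine obstacle: once Lemma~\ref{lem:app-fenchel-conjugate} is in hand the computation is routine. The points that need care are (i) checking that $\kappa=L$ indeed makes $\Lambda$ large enough for Lemma~\ref{lem:app-fenchel-conjugate}(2) to apply to every $\bar\mu$ and $\mu$ used above (it does, since $L$-Lipschitzness of $\psi$ in the $\ell_\infty$-norm on $[0,1]^K$ forces $\|\nabla\psi\|_1\le L$); (ii) transferring the oracle's per-iterate guarantee from $\Theta$ to all of $\Delta_\Theta$ through the linearity of $\cL_2$; and (iii) applying the $\lambda$-regret bound to the \emph{realized} pairs $(\xi^t,\theta^t)$ before making the best-response substitutions, since the hypothesis is stated against exactly those pairs. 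Theorem~\ref{thm:convergence-ideal} then follows by substituting the standard OGD regret $\epsilon_\lambda=\cO(B_\lambda L/\sqrt{T})$ and using a Hoeffding/Azuma argument to pass from the stochastic rate estimates $\hat R_k$ that drive the $\lambda$-updates to the true rates, which is where the high-probability $\sqrt{\log(1/\delta)/T}$ term appears.
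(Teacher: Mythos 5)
Your proposal is correct and follows essentially the same route as the paper's proof: combine the $\xi$- and $\theta$-players' best responses (with linearity of $\cL_2$ over $\Delta_\Theta$) to upper-bound the averaged Lagrangian by $\min_{\mu}\psi(\bR(\mu))+\rho$, and use the $\lambda$-player's regret bound together with statement 2 of Lemma \ref{lem:app-fenchel-conjugate} (with $\kappa=L$ justified by statement 1) to lower-bound it by $\psi(\bR(\bar\mu))-\epsilon_\lambda$. Your minor reorderings (evaluating the regret bound at an explicit maximizer $\hat\lambda$ rather than via the min-max/max-min chain, and bounding per fixed $\mu$ rather than passing through $\bar\lambda$) are cosmetic and equally valid.
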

\begin{proof}
%
The best-response strategies of the $\xi$-player and $\theta$-players give us: 
\begin{equation}
\cL_1(\xi^{t}; \lambda^t) \,=\, \min_{\xi \in \cR}\cL_1(\xi; \lambda^t)
\label{eq:app-xi}
\end{equation}
\begin{equation}
\cL_2(\theta^{t}; \lambda^t) \,\leq\, \min_{\theta \in \Theta}\cL_2(\theta; \lambda^t) \,+\, \rho
\label{eq:app-theta}
\end{equation}
From \eqref{eq:app-xi} and \eqref{eq:app-theta}, we further get:
\begin{eqnarray}
\frac{1}{T}\sum_{t=1}^{T}\,
\cL(\xi^{t}, \theta^{t}; \lambda^{t}) 
&=&
\frac{1}{T}\sum_{t=1}^{T}\,
\min_{\xi \in \cR}
\cL(\xi, \theta^{t}; \lambda^{t}) 
\nonumber
\\
&\leq&
\frac{1}{T}\sum_{t=1}^{T}\, 
\min_{\xi \in \cR, \theta \in \Theta}
\cL(\xi, \theta; \lambda^{t}) 
\,+\, \rho
\nonumber
\\
&=&
\frac{1}{T}\sum_{t=1}^{T}\, 
\min_{\xi \in \cR, \mu \in \Delta_\Theta}
\cL(\xi, \mu; \lambda^{t}) 
\,+\, \rho
~~~~~(\text{by linearity of $\cL$ in $\mu$})
\nonumber
\\
&\leq&
\min_{\xi \in \cR, \mu \in \Delta_\Theta}
\frac{1}{T}\sum_{t=1}^{T}\, 
\cL(\xi, \mu; \lambda^{t}) 
\,+\, \rho
\nonumber
\\
&=&
\min_{\xi \in \cR, \mu \in \Delta_\Theta}
\cL\left(\xi, \mu;\, \bar{\lambda}\right)
\,+\, \rho
~~~~~(\text{by linearity of $\cL$ in $\lambda^t$})
\nonumber\\
&\leq&
\max_{\lambda\in \Lambda}
\min_{\xi \in \cR, \mu \in \Delta_\Theta}
\cL(\xi, \mu; \lambda) 
\,+\, \rho
\nonumber\\
&=&
\min_{\mu \in \Delta_\Theta}\,
\psi(\bR(\mu))
\,+\, \rho,
\label{eq:app-ideal-theta}
\end{eqnarray}
where the last step follows from statement 2 of Lemma \ref{lem:app-fenchel-conjugate} (given that the radius of the space of Lagrange multipliers $\kappa $ is set to $L$).

Next, from the theorem statement, the OGD updates on $\lambda$ satisfy:
\begin{eqnarray}
\frac{1}{T}\sum_{t=1}^T\,
\cL(\xi^{t}, \theta^{t}; \lambda^t) &\geq&
\max_{\lambda \in \Lambda}\,
\frac{1}{T}\sum_{t=1}^T\,
\cL(\xi^{t}, \theta^{t}; \lambda)
\,-\, \epsilon_\lambda
\nonumber
\\
&\geq&
\min_{\xi \in \cR}\,
\max_{\lambda \in \Lambda}\,
\frac{1}{T}\sum_{t=1}^T\,
\cL(\xi, \theta^{t}; \lambda)
\,-\, \epsilon_\lambda
\nonumber
\\
&\geq&
\max_{\lambda \in \Lambda}\,
\min_{\xi \in \cR}\,\frac{1}{T}\sum_{t=1}^T\,
\cL(\xi, \theta^{t}; \lambda)
\,-\, \epsilon_\lambda
\nonumber
\\
&=&
\max_{\lambda \in \Lambda}\,
\min_{\xi \in \cR}\,
\cL(\xi, \bar{\mu}; \lambda)
\,-\, \epsilon_\lambda
\nonumber
\\
&=& \psi(\bR(\bar{\mu})) \,-\, \epsilon_\lambda,
\label{eq:app-ideal-lambda}
\end{eqnarray}
where the last step follows from statement 2 of Lemma \ref{lem:app-fenchel-conjugate}.

Combining \eqref{eq:app-ideal-theta} and \eqref{eq:app-ideal-lambda}, we have the desired result.
\end{proof}

\subsection{Corollary for OGD on $\lambda$}

\begin{proof}[Proof of Theorem \ref{thm:convergence-ideal}]
We apply standard OGD convergence analysis \cite{Zinkevich03} to the $\lambda$-player's gradient updates on $\lambda$ and standard online-to-batch conversion arguments \cite{Cesa+14} (see e.g.\ Theorem 7 in Cotter et al. (2019) \cite{Cotter+19}). For the sequence of losses $-\cL(\xi^1, \theta^1;\, \cdot), \ldots, -\cL(\xi^T, \theta^T;\, \cdot)$, with $\eta = \frac{\kappa}{B_{\blambda}\sqrt{2T}}$ in Algorithm \ref{algo:lagrangian-ideal}, we get the following regret bound. With probability at least $1 - \delta$ over draws of stochastic gradients of $\cL$:
\[
    \frac{1}{T}\sum_{t=1}^T\,\cL(\xi^t, \theta^t; \lambda^t) 
    \,\geq\, \max_{\lambda \in \Lambda}\,\frac{1}{T}\sum_{t=1}^T\,\cL(\xi^t, \theta^t; \lambda) \,-\, 2\kappa\,B_\lambda\sqrt{\frac{1 + 16\log(1/\delta)}{T}}.
\]
For the above, we use the fact that the gradients $\nabla_\lambda \cL( \theta^t, \xi^t; \blambda^t)$ are unbiased, i.e.\ 
$\E\left[\nabla_\lambda \cL( \theta^t, \xi^t; \blambda^t)\right] \,\in\, \partial_\lambda\,\cL( \theta^t, \xi^t; \blambda^t)$. 
Following statement 1 in Lemma \ref{lem:app-fenchel-conjugate}, we  set the radius of the space of Lagrange multipliers $\Lambda$ to $\kappa = L$ and apply Theorem \ref{thm:convergence-ideal-general} to complete the proof:
$$
    \psi\left(\bR(\bar{\mu})\right)
    \,\leq\, \min_{\mu \in \Delta_\Theta}\psi\left(\bR(\mu)\right) \,+\,2L\,B_\lambda\sqrt{\frac{1 + 16\log(1/\delta)}{T}} \,+\, \rho.
$$
\end{proof}


\section{Proof of Theorem \ref{thm:convergence-surrogate}}
\label{app:thm-convergence-surrogate}
We first state a lemma that relates the surrogate Lagrangian with the Fenchel conjugate of $\psi$:
\begin{lem}
\label{lem:app-fenchel-conjugate-proxy}
Let $\tL(\xi, \mu; \lambda) \,=\, \cL_1(\xi; \lambda) + \tL_2(\mu; \lambda)$. 
For any $\theta \in \Theta$, for which $\tilde{\bR}(\theta) \in \dom\, \psi$:
$$\psi(\tilde{\bR}(\theta))\,=\, \max_{\lambda \in \R_+^K}\min_{\xi \in \cR}\tL(\xi, \theta;\, \lambda).$$
\end{lem}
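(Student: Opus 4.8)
The plan is to observe that, for any fixed $\theta$, the map $\xi \mapsto \tL(\xi, \theta; \lambda)$ is precisely the Lagrangian of the convex program
\[
\min_{\xi \in \cR}\,\psi(\xi_1, \ldots, \xi_K)\quad\text{s.t.}\quad \tR_k(\theta) \,\leq\, \xi_k,~\forall k \in [K],
\]
so the identity $\psi(\tbR(\theta)) = \max_{\lambda}\min_{\xi}\tL(\xi,\theta;\lambda)$ is exactly the statement that this program has optimal value $\psi(\tbR(\theta))$ and that strong duality holds. I would prove it by repeating, essentially verbatim, the Fenchel-conjugate computation from the proof of Lemma~\ref{lem:app-fenchel-conjugate}(2), with $\bR(\mu)$ replaced by $\tbR(\theta)$ and the multiplier set $\Lambda$ replaced by all of $\R_+^K$ (no $\ell_1$ radius bound is needed here, since in the surrogate setting $\psi$ is not assumed Lipschitz and the relevant dual optimum may have large norm).

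Concretely, expanding the definitions of $\cL_1$ and $\tL_2$ gives $\tL(\xi,\theta;\lambda) = \big(\psi(\xi) - \sum_k \lambda_k\xi_k\big) + \sum_k \lambda_k\,\tR_k(\theta)$, so by the definition of the Fenchel conjugate \eqref{eq:fenchel-conjugate} (with $\dom\,\psi = \cR$) the inner minimization is $\min_{\xi\in\cR}\tL(\xi,\theta;\lambda) = -\psi^*(\lambda) + \sum_k \lambda_k\,\tR_k(\theta)$. Taking the maximum over $\lambda \in \R_+^K$, I would argue this equals $\max_{\lambda \in \R^K}\{-\psi^*(\lambda) + \tbR(\theta)^\top\lambda\} = \psi^{**}(\tbR(\theta))$: passing from $\R_+^K$ to $\R^K$ is harmless because $\psi$ is monotonically non-decreasing in each argument, so every subgradient of $\psi$ is non-negative, and a maximizer of $\tbR(\theta)^\top\lambda - \psi^*(\lambda)$ is such a subgradient by Fenchel--Young \eqref{eq:psi-grad}, hence already lies in $\R_+^K$ — the same reasoning as for statement 1 of Lemma~\ref{lem:app-fenchel-conjugate}. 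Finally, since $\psi$ is convex, $\psi^{**}(c) = \psi(c)$ for every $c \in \dom\,\psi$ (recalled in the preliminaries), and the hypothesis $\tbR(\theta) \in \dom\,\psi$ yields $\psi^{**}(\tbR(\theta)) = \psi(\tbR(\theta))$, completing the argument.

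The one point needing care — the main obstacle — is the treatment of the domain of $\psi$: the biconjugate identity $\psi^{**} = \psi$ on $\dom\,\psi$ and the non-negativity of subgradients must survive the case where $\tbR(\theta)$ sits on the relative boundary of $\dom\,\psi$ (e.g.\ a surrogate rate equal to $0$ when $\psi$ is the log-rate). This is handled by regarding $\psi$ as a closed proper convex function taking value $+\infty$ outside $\R_+^K$, and, if needed, recovering non-negativity of boundary subgradients by a limiting argument from the interior. A cleaner, conjugate-free alternative is to note that $\xi = \tbR(\theta) \in \cR$ is feasible for the inner program and, by monotonicity of $\psi$, is its minimizer over $\{\xi\in\cR : \xi \geq \tbR(\theta)\}$; Slater's condition holds (strictly enlarge each $\xi_k$), so strong duality gives the identity directly. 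I would present the Fenchel version for consistency with Lemma~\ref{lem:app-fenchel-conjugate}.
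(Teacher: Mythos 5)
Your proposal is correct and follows the same route as the paper: the paper's proof of this lemma simply states that it repeats the Fenchel-conjugate computation from statement 2 of Lemma \ref{lem:app-fenchel-conjugate} with $\bR(\mu)$ replaced by $\tilde{\bR}(\theta)$, which is exactly what you do (expand $\cL_1+\tL_2$, identify the inner minimum as $-\psi^*(\lambda)+\tbR(\theta)^\top\lambda$, pass from $\R_+^K$ to $\R^K$ via monotonicity, and invoke $\psi^{**}=\psi$ on $\dom\,\psi$). Your added remarks on boundary points of $\dom\,\psi$ and the Slater-based alternative are fine but not needed beyond what the paper's argument already uses.
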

\begin{proof}
The proof follows the same steps as statement 2 of Lemma \ref{lem:app-fenchel-conjugate}.
\end{proof}

\subsection{General Convergence Result}
We present a convergence result for general no-regret strategies for the $\theta$- and $\lambda$-player that reach an approximate coarse-correlated equilibrium, and then apply it to the case where the players run OGD with specific step-sizes.

\begin{thm}
    Let $\theta^1, \ldots, \theta^T, \xi^1, \ldots, \xi^T, \lambda^1, \ldots, \lambda^T$ be the iterates generated by Algorithm \ref{algo:lagrangian-surrogate} for \eqref{eq:custom-opt}. Let $\tilde{\Theta} \,=\, \big\{\theta \in {\Theta} \,|\, 
    \tilde{\bR}(\theta)\,\in\, \dom\, \psi
    \big\}$.
    Suppose the iterates comprise the following approximate coarse-correlated equilibrium:
    \begin{equation}
    \frac{1}{T}\sum_{t=1}^T\,\cL_1(\xi^{t}; \lambda^t) \,\leq\, \min_{\xi \in \cR}\frac{1}{T}\sum_{t=1}^T\,\cL_1(\xi; \lambda^t);
    \label{eq:xi-best-response}
    \end{equation}
    \begin{equation}
    \frac{1}{T}\sum_{t=1}^T\,\tL_2(\theta^t; \lambda^t) 
    \,\leq\, \min_{\theta \in {\Theta}}\,\frac{1}{T}\sum_{t=1}^T\,\tL_2(\theta; \lambda^t) \,+\, \epsilon_\theta;
    \label{eq:ogd-theta}
    \end{equation}
    \begin{equation}
    \frac{1}{T}\sum_{t=1}^T\,\cL(\xi^t, \theta^t; \lambda^t) 
    \,\geq\, \max_{\lambda \in \Lambda}\,\frac{1}{T}\sum_{t=1}^T\,\cL(\xi^t, \theta^t; \lambda) \,-\, \epsilon_\lambda,
    \label{eq:ogd-lambda}
    \end{equation}
    for some $\epsilon_\theta>0$ and $\epsilon_\lambda>0$. 
     Suppose $\psi$ is strictly convex, monotonically non-decreasing in each argument and $L$-Lispchitz w.r.t.\ $\ell_\infty$ norm. 
    Let $\bar{\mu}$ be a stochastic model with a probability mass of $\frac{1}{T}$ on $\theta^t$. Then setting $\kappa = L$:
    $$
    \psi\big({\bR}(\bar{\mu})\big)
    \,\leq\, \min_{\theta \in \tilde{\Theta}}\psi\big(\tilde{\bR}(\theta)\big) 
    \,+\,
    \epsilon_\theta + \epsilon_\lambda
    $$
\label{thm:convergence-surrogate-general}
\end{thm}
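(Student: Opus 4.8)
The plan is to sandwich the running-average Lagrangian value $\frac{1}{T}\sum_{t=1}^{T}\cL(\xi^t,\theta^t;\lambda^t)$ between two quantities: from above by $\min_{\theta\in\tilde{\Theta}}\psi(\tilde{\bR}(\theta))+\epsilon_\theta$, using the $\xi$- and $\theta$-player guarantees \eqref{eq:xi-best-response} and \eqref{eq:ogd-theta}, and from below by $\psi(\bR(\bar{\mu}))-\epsilon_\lambda$, using the $\lambda$-player guarantee \eqref{eq:ogd-lambda}; chaining the two inequalities gives the claim directly. Throughout I use $\cL(\xi,\theta;\lambda)=\cL_1(\xi;\lambda)+\cL_2(\theta;\lambda)$, linearity of $\cL_1,\cL_2,\tL_2$ in $\lambda$, linearity of $\cL_2(\cdot;\lambda)$ in the stochastic model (so $\frac{1}{T}\sum_t\cL_2(\theta^t;\lambda)=\cL_2(\bar{\mu};\lambda)=\sum_k\lambda_k R_k(\bar{\mu})$), and abbreviate $\bar{\lambda}=\frac{1}{T}\sum_t\lambda^t$, $\bar{\xi}=\frac{1}{T}\sum_t\xi^t$.

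For the upper bound: since $R_k(\theta)\le\tR_k(\theta)$ and $\lambda^t\ge\0$, we have $\cL_2(\theta^t;\lambda^t)\le\tL_2(\theta^t;\lambda^t)$, hence $\frac{1}{T}\sum_t\cL(\xi^t,\theta^t;\lambda^t)\le\frac{1}{T}\sum_t\cL_1(\xi^t;\lambda^t)+\frac{1}{T}\sum_t\tL_2(\theta^t;\lambda^t)$. Apply \eqref{eq:xi-best-response} to the first sum (giving $\min_\xi\cL_1(\xi;\bar{\lambda})$ after using linearity in $\lambda$) and \eqref{eq:ogd-theta} to the second; then for an arbitrary $\theta_0\in\tilde{\Theta}\subseteq\Theta$ bound $\min_{\theta\in\Theta}\frac{1}{T}\sum_t\tL_2(\theta;\lambda^t)\le\tL_2(\theta_0;\bar{\lambda})$. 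Since the remaining $\tL_2(\theta_0;\bar{\lambda})$ term is constant in $\xi$, the two pieces merge into $\min_{\xi\in\cR}\tL(\xi,\theta_0;\bar{\lambda})$, which, because $\bar{\lambda}\in\R_+^K$, is at most $\max_{\lambda\in\R_+^K}\min_{\xi\in\cR}\tL(\xi,\theta_0;\lambda)=\psi(\tilde{\bR}(\theta_0))$ by Lemma \ref{lem:app-fenchel-conjugate-proxy}. Taking the infimum over $\theta_0\in\tilde{\Theta}$ yields $\frac{1}{T}\sum_t\cL(\xi^t,\theta^t;\lambda^t)\le\min_{\theta\in\tilde{\Theta}}\psi(\tilde{\bR}(\theta))+\epsilon_\theta$.

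For the lower bound: by \eqref{eq:ogd-lambda}, $\frac{1}{T}\sum_t\cL(\xi^t,\theta^t;\lambda^t)\ge\max_{\lambda\in\Lambda}\frac{1}{T}\sum_t\cL(\xi^t,\theta^t;\lambda)-\epsilon_\lambda$. Fixing $\lambda$, convexity of $\psi$ gives $\frac{1}{T}\sum_t\psi(\xi^t)\ge\psi(\bar{\xi})$, so $\frac{1}{T}\sum_t\cL_1(\xi^t;\lambda)\ge\cL_1(\bar{\xi};\lambda)$, while $\frac{1}{T}\sum_t\cL_2(\theta^t;\lambda)=\sum_k\lambda_k R_k(\bar{\mu})$; hence $\frac{1}{T}\sum_t\cL(\xi^t,\theta^t;\lambda)\ge\psi(\bar{\xi})+\sum_k\lambda_k(R_k(\bar{\mu})-\bar{\xi}_k)$. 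Maximizing over $\Lambda=\{\lambda\ge\0:\|\lambda\|_1\le\kappa\}$ with $\kappa=L$ gives $\max_{\lambda\in\Lambda}\sum_k\lambda_k(R_k(\bar{\mu})-\bar{\xi}_k)=L\max_k(R_k(\bar{\mu})-\bar{\xi}_k)_+$. Now set $\hat{\xi}_k=\max(\bar{\xi}_k,R_k(\bar{\mu}))$, so $\hat{\xi}\ge\bar{\xi}$ componentwise and $\|\hat{\xi}-\bar{\xi}\|_\infty=\max_k(R_k(\bar{\mu})-\bar{\xi}_k)_+$; monotonicity of $\psi$ gives $\psi(\bR(\bar{\mu}))\le\psi(\hat{\xi})$, and the $\ell_\infty$-Lipschitz property gives $\psi(\hat{\xi})\le\psi(\bar{\xi})+L\|\hat{\xi}-\bar{\xi}\|_\infty$. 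Combining, $\max_{\lambda\in\Lambda}\frac{1}{T}\sum_t\cL(\xi^t,\theta^t;\lambda)\ge\psi(\bR(\bar{\mu}))$, i.e. $\frac{1}{T}\sum_t\cL(\xi^t,\theta^t;\lambda^t)\ge\psi(\bR(\bar{\mu}))-\epsilon_\lambda$. Putting the two bounds together gives $\psi(\bR(\bar{\mu}))\le\min_{\theta\in\tilde{\Theta}}\psi(\tilde{\bR}(\theta))+\epsilon_\theta+\epsilon_\lambda$.

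I expect the main obstacle to be this last ``rate-dominating truncation'' step of the lower bound, which converts $\max_{\lambda\in\Lambda}\sum_k\lambda_k(R_k(\bar{\mu})-\bar{\xi}_k)$ into the Lipschitz slack $L\|\hat{\xi}-\bar{\xi}\|_\infty$: this is exactly where the choice $\kappa=L$ and the use of the $\ell_\infty$- (rather than $\ell_2$-) Lipschitz constant matter, and one must check that the iterates $\xi^t$ (hence $\bar{\xi}$) lie in $[0,1]^K$ so that $\hat{\xi}$ stays in the region where $\psi$ is $L$-Lipschitz — this holds because the best response $\xi^t=\nabla\psi^*(\lambda^t)$ matches rate-valued quantities (equivalently, one may restrict the $\xi$-player to $[0,1]^K$ without loss). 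The remaining steps — passing from $\cL$ to $\tL$ via $R_k\le\tR_k$, merging the separable minimizations over $\xi$ and $\theta$, and the inequality $\min_\xi\tL(\xi,\theta_0;\bar{\lambda})\le\max_{\lambda}\min_\xi\tL(\xi,\theta_0;\lambda)$ followed by Lemma \ref{lem:app-fenchel-conjugate-proxy} — are routine.
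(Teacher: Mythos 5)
Your sandwich structure is the same as the paper's, and your upper bound is essentially the paper's argument: the paper applies the surrogate-domination $\cL\le\tL$ at the end rather than at the start, and keeps the minimum over $\theta\in\Theta$ before restricting to $\tilde{\Theta}$ instead of fixing an arbitrary $\theta_0\in\tilde{\Theta}$, but these differences are cosmetic. Where you genuinely diverge is the lower bound. The paper gets $\max_{\lambda\in\Lambda}\frac{1}{T}\sum_t\cL(\xi^t,\theta^t;\lambda)\ge\psi(\bR(\bar{\mu}))$ by passing (via Jensen and a max-min/min-max exchange) to $\max_{\lambda\in\Lambda}\min_{\xi}\cL(\xi,\bar{\mu};\lambda)$ and invoking statement 2 of Lemma \ref{lem:app-fenchel-conjugate}, which uses the fact that the unconstrained maximizer $\nabla\psi(\bR(\bar{\mu}))$ has $\ell_1$-norm at most $L$ because $\bR(\bar{\mu})\in[0,1]^K$; you instead evaluate the maximum of the linear term over the $\ell_1$-ball explicitly and trade it against a Lipschitz slack. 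Your version is more elementary and is in fact the paper's own technique for the constrained case (Lemma \ref{lem:helper-Lipchitz}).

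The one real issue is your justification of the Lipschitz step. The claim that the best responses $\xi^t=\nabla\psi^*(\lambda^t)$ (hence $\bar{\xi}$) lie in $[0,1]^K$ is false in general: for the KL-divergence metric the best response has coordinates $p/\lambda_1$ and $(1-p)/\lambda_2$, which exceed $1$ for small multipliers; and restricting the $\xi$-player to $[0,1]^K$ is not ``without loss,'' since it changes the best response and the conjugacy identities (Lemma \ref{lem:xi-opt-closed-form}, Lemma \ref{lem:app-fenchel-conjugate-proxy}) that your upper bound uses. So under the intended reading that $\psi$ is $L$-Lipschitz only on $[0,1]^K$, your upward truncation $\hat{\xi}_k=\max(\bar{\xi}_k,R_k(\bar{\mu}))$ invokes Lipschitzness at points that may lie outside that region. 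The gap is easily closed by truncating downward instead: with $\check{\xi}_k=\min(\bar{\xi}_k,R_k(\bar{\mu}))$, both $\check{\xi}$ and $\bR(\bar{\mu})$ lie in $[0,1]^K$, so Lipschitzness gives $\psi(\bR(\bar{\mu}))\le\psi(\check{\xi})+L\max_k(R_k(\bar{\mu})-\bar{\xi}_k)_+$ while monotonicity gives $\psi(\check{\xi})\le\psi(\bar{\xi})$; this is exactly Lemma \ref{lem:helper-Lipchitz} applied to $\psi$, after which your chain closes as written and yields the stated bound.
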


\begin{proof}[Proof of Theorem \ref{thm:convergence-surrogate-general}]
%
We have:
\begin{eqnarray}
\frac{1}{T}\sum_{t=1}^{T}\,
\tL(\xi^{t}, \theta^{t}; \lambda^{t})
&\leq&
\min_{\xi \in \cR}
\frac{1}{T}\sum_{t=1}^{T}\, 
\cL_1(\xi; \lambda^{t})
\,+\,
\frac{1}{T}\sum_{t=1}^{T}\, 
\tL_2(\theta^{t}; \lambda^{t}) 
~~~~~(\text{from \eqref{eq:xi-best-response}})
\nonumber
\\
&\leq&
\min_{\xi \in \cR}
\frac{1}{T}\sum_{t=1}^{T}\, 
\cL_1(\xi; \lambda^{t})
\,+\,
\min_{\theta \in \Theta}\,
\frac{1}{T}\sum_{t=1}^{T}\, 
\tL_2(\theta; \lambda^{t}) 
\,+\, \epsilon_\theta
~~~~~(\text{from \eqref{eq:ogd-theta}})
\nonumber
\\
&\leq&
\min_{\xi \in \cR, \theta \in \Theta}
\frac{1}{T}\sum_{t=1}^{T}\, 
\tL(\xi, \theta; \lambda^{t})
\,+\, \epsilon_\theta
\nonumber
\\
&=&
\min_{\xi \in \cR,\, \theta \in \Theta}\, 
\tL(\xi, \theta; \bar{\lambda})
\,+\, \epsilon_\theta
~~~~~(\text{by linearity of $\tL$ in $\lambda^t$})
\nonumber
\\
&\leq&
\max_{\lambda \in \R_+^K}
\min_{\xi \in \cR,\, \theta \in \Theta}\, 
\tL(\xi, \theta; {\lambda})
\,+\, \epsilon_\theta
\nonumber
\\
&\leq&
\max_{\lambda \in \R_+^K}
\min_{\xi \in \cR,\, \theta \in \tilde{\Theta}}\, 
\tL(\xi, \theta; {\lambda})
\,+\, \epsilon_\theta
~~~~~(\text{as $\tilde{\Theta} \subseteq \Theta$})
\nonumber
\\
&=&
\min_{\theta\in \tilde{\Theta}}\psi(\tilde{\bR}(\theta))
\,+\,
\epsilon_\theta,
\label{eq:app-surrogate-theta}
\end{eqnarray}
where the last step follows from Lemma \ref{lem:app-fenchel-conjugate-proxy}.

Next, the OGD updates on $\lambda$ satisfy:
\begin{eqnarray}
\frac{1}{T}\sum_{t=1}^T\,
\cL(\xi^{t}, \theta^{t}; \lambda^t) &\geq&
\max_{\lambda \in \Lambda}\,
\frac{1}{T}\sum_{t=1}^T\,
\cL(\xi^{t}, \theta^{t}; \lambda)
\,-\, \epsilon_\lambda
~~~~~(\text{from \eqref{eq:ogd-lambda}})
\nonumber
\\
&\geq&
\min_{\xi \in \cR}\,
\max_{\lambda \in \Lambda}\,
\frac{1}{T}\sum_{t=1}^T\,
\cL(\xi, \theta^{t}; \lambda)
\,-\, \epsilon_\lambda
\nonumber
\\
&\geq&
\max_{\lambda \in \Lambda}\,
\min_{\xi \in \cR}\,\frac{1}{T}\sum_{t=1}^T\,
\cL(\xi, \theta^{t}; \lambda)
\,-\, \epsilon_\lambda
\nonumber
\\
&=&
\max_{\lambda \in \Lambda}\,
\min_{\xi \in \cR}\,
\cL(\xi, \bar{\mu}; \lambda)
\,-\, \epsilon_\lambda
\nonumber
\\
&=& \psi(\bR(\bar{\mu})) \,-\, \epsilon_\lambda,
\label{eq:app-surrogate-lambda}
\end{eqnarray}
%
where the last step follows from statement 2 in Lemma \ref{lem:app-fenchel-conjugate}.

Combining \eqref{eq:app-surrogate-theta} and \eqref{eq:app-surrogate-lambda} using the surrogate upper-bounding property, i.e.\ using $\tL(\xi^{t}, \theta^{t}; \lambda^{t}) \geq \cL(\xi^{t}, \theta^{t}; \lambda^{t})$, gives us the desired result.
\end{proof}

\subsection{Corollary for OGD on $\theta$ and $\lambda$}
\begin{proof}[Proof of Theorem \ref{thm:convergence-surrogate}]
We show that  the iterates of Algorithm \ref{algo:lagrangian-surrogate} form an approximate coarse-correlated equilibrium and apply Theorem \ref{thm:convergence-surrogate-general}. 

The best-response strategy of the $\xi$-player gives us:
\[
\frac{1}{T}\sum_{t=1}^{T}\,
\cL_1(\xi^{t}; \lambda^{t})
~=~
\frac{1}{T}\sum_{t=1}^{T}
\min_{\xi \in \cR}
\, 
\cL_1(\xi; \lambda^{t})
~\leq~
\min_{\xi \in \cR}
\frac{1}{T}\sum_{t=1}^{T}\, 
\cL_1(\xi; \lambda^{t}).
\]
We then derive no-regret guarantees for the updates of the $\theta$- and $\lambda$-player using standard convergence analysis for OGD \cite{Zinkevich03} and standard online-to-batch conversion arguments \cite{Cesa+14} (see e.g.\ Theorem 7 in Cotter et al. (2019) \cite{Cotter+19}). 
For the sequence of losses $-\cL(\xi^1, \cdot;\, \lambda^1), \ldots, -\cL(\xi^T, \cdot;\, \lambda^T)$ optimized by the $\theta$-player, setting $\eta = \frac{B_\Theta}{B_{\theta}\sqrt{2T}}$ in Algorithm \ref{algo:lagrangian-surrogate}, we get the following regret bound for the $\theta$-player (see Corollary 3 in \cite{Cotter+19} for the complete derivation). With probability $\geq 1 - \delta/2$ over draws of  stochastic gradients of $\tL_2$:
\begin{eqnarray*}
    \frac{1}{T}\sum_{t=1}^T\,\tL(\xi^t, \theta^t; \lambda^t) 
    &\leq& \min_{\theta \in \Theta}\,\frac{1}{T}\sum_{t=1}^T\,\tL(\xi^t, \theta; \lambda^t) \,+\, 2B_\Theta\,B_\theta\sqrt{\frac{1 + 16\log(2/\delta)}{T}}.
\end{eqnarray*}
The regret guarantee uses the fact that the gradients $\nabla_\theta \tL_2( \theta^t; \blambda^t)$ are unbiased, i.e.\ 
$\E\left[\nabla_\theta \tL_2( \theta^t; \blambda^t)\right] \,\in\, \partial_\theta\,\tL_2( \theta^t; \blambda^t)$. 

Similarly, for the sequence of losses $-\cL(\xi^1, \theta^1;\, \cdot), \ldots, -\cL(\xi^T, \theta^T;\, \cdot)$ optimized by the $\lambda$-player, setting $\eta = \frac{\kappa}{B_{\blambda}\sqrt{2T}}$ in Algorithm \ref{algo:lagrangian-surrogate}, we get the following regret bound. With probability $\geq 1 - \delta/2$ over draws of  stochastic gradients  of $\cL$:
\[
    \frac{1}{T}\sum_{t=1}^T\,\cL(\xi^t, \theta^t; \lambda^t) 
    \,\geq\, \max_{\lambda \in \Lambda}\,\frac{1}{T}\sum_{t=1}^T\,\cL(\xi^t, \theta^t; \lambda) \,-\, 2\kappa\,B_\lambda\sqrt{\frac{1 + 16\log(2/\delta)}{T}}.
\]
This again uses the fact that the gradients $\nabla_\lambda \cL( \theta^t, \xi^t; \blambda^t)$ are unbiased, i.e.\ 
$\E\left[\nabla_\lambda \cL( \theta^t, \xi^t; \blambda^t)\right] \,\in\, \partial_\lambda\,\cL( \theta^t, \xi^t; \blambda^t)$. 

Following statement 1 of Lemma \ref{lem:app-fenchel-conjugate}, we set the radius of the space of Lagrange multipliers $\Lambda$ to $\kappa = L$ and apply Theorem \ref{thm:convergence-surrogate-general} to complete the proof, getting with probability $\geq 1 - \delta$ over draws of  stochastic gradients of $\cL$ and $\tL_2$:
$$
    \psi\big({\bR}(\bar{\mu})\big)
    \,\leq\, \min_{\theta \in \tilde{\Theta}}\psi\big(\tilde{\bR}(\theta)\big) \,+\,
    2B_\Theta\,B_\theta\sqrt{\frac{1 + 16\log(2/\delta)}{T}} + 2L\,B_\lambda\sqrt{\frac{1 + 16\log(2/\delta)}{T}},
    $$
as desired.
\end{proof}

\section{Proof of Theorem \ref{thm:convergence-ideal-constrained}}
\label{app:thm-convergence-ideal-constrained}
The proof adapts ideas from previous results on constrained optimization and game equilibrium \cite{Narasimhan18, Cotter+19b}. 
We will find it useful to first prove a couple of lemmas. 
\begin{lem}
\label{lem:app-Lagrange-opt-constrained}
Suppose each $\phi^j$ is strictly convex and monotonically non-decreasing in each argument and $g$ is convex.  
Let $\cL$ be as defined in \eqref{eq:L-min-expand-constrained}. 
Let $\mu^* \in \Delta_\Theta$ be such that 
$\mu^*$ is feasible, i.e.\ $\phi^j(\bR(\mu^*)) \leq 0,\,\forall j \in [J]$, and
$\E_{\theta \sim \mu^*}\left[g(\theta)\right] \leq \E_{\theta \sim \mu}\left[g(\theta)\right]$ for every $\mu \in \Delta_{\Theta}$ that is feasible. 
Further, for $\lambda \in \R_+^{J+K}$, denote $a = [\lambda_1,\ldots,\lambda_J]^\top$ and $b = [\lambda_{J+1}, \ldots, \lambda_{J+K}]^\top$, and for $a\in \R_+^J$, let $\Phi_a(\xi) \,=\, \sum_{j=1}^J a_j\, \phi^j(\xi)$. Then for any $\mu \in \Delta_\Theta$ and $a \in \R_+^J$:
\begin{enumerate}
    \item 
    $\displaystyle \E_{\theta\sim \mu^*}\left[g(\theta)\right]\,=\, \max_{\lambda \in \R_+^{J+K}}\min_{\xi \in \cR,\, \mu \in \Delta_\Theta}\cL(\xi, \mu;\, \lambda).$
    \item $\displaystyle\nabla\Phi_a(\bR(\mu))\,=\, \argmax{b \in \R_+^K}\min_{\xi \in \cR}\cL(\xi, \mu;\, a, b)$
    \item $\displaystyle \nabla\Phi^*_a(b)
    \,=\, \argmin{\xi \in \cR}\,\cL_1(\xi;\, b), ~\forall b \in \R_+^K$.
\end{enumerate}
\end{lem}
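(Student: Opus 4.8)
The plan is to establish the three identities in the same spirit as Lemma~\ref{lem:app-fenchel-conjugate}, now accounting for the objective $g$ and the outer constraints $\phi^j$, and treating $\Phi_a = \sum_{j=1}^J a_j\phi^j$ as the object that plays the role of $\psi$. Throughout I write $g(\mu):=\E_{\theta\sim\mu}[g(\theta)]$.

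\emph{Statement 1 (strong duality).} First I would observe that, after introducing the slacks, the problem $\min_{\xi\in\cR,\,\mu\in\Delta_\Theta} g(\mu)$ subject to $\phi^j(\xi)\le 0$ for all $j\in[J]$ and $R_k(\mu)\le\xi_k$ for all $k\in[K]$ is a convex program: $g(\mu)$ and $R_k(\mu)$ are linear in $\mu$, each $\phi^j$ is convex in $\xi$, and the coupling constraints are linear in $(\xi,\mu)$; moreover $\cL$ from \eqref{eq:L-min-expand-constrained} is precisely its Lagrangian, with multipliers $\lambda_1,\dots,\lambda_J\ge 0$ for the $\phi^j$ constraints and $\lambda_{J+1},\dots,\lambda_{J+K}\ge 0$ for the coupling constraints. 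Next I would identify the primal optimal value: using that each $\phi^j$ is monotonically non-decreasing, a pair $(\xi,\mu)$ is feasible iff $\phi^j(\bR(\mu))\le 0$ for all $j$ — the ``if'' direction follows by taking $\xi=\bR(\mu)$, and the ``only if'' direction since $R_k(\mu)\le\xi_k$ and monotonicity give $\phi^j(\bR(\mu))\le\phi^j(\xi)\le 0$ — so the projection of the feasible set onto $\mu$ equals the feasible set of \eqref{eq:non-linear-opt} over $\Delta_\Theta$, whose minimizer is $\mu^*$ by hypothesis. Hence the primal value is $g(\mu^*)$. Finally I would invoke strong duality for convex programs satisfying Slater's condition — which holds because the margin point $\mu'$ with $\phi^j(\bR(\mu'))\le-\gamma$ from Theorem~\ref{thm:convergence-ideal-constrained}, together with $\xi_k := R_k(\mu')+\epsilon$ for a small $\epsilon>0$, is strictly feasible — to conclude $g(\mu^*)=\max_{\lambda\in\R_+^{J+K}}\min_{\xi\in\cR,\,\mu\in\Delta_\Theta}\cL(\xi,\mu;\lambda)$.

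\emph{Statements 2 and 3 (Fenchel calculus).} Fix $\mu$ and $a\in\R_+^J$ with at least one positive entry (the degenerate case $a=0$ being immediate). Writing $\cL(\xi,\mu;a,b)=\Phi_a(\xi)-b^\top\xi+g(\mu)+b^\top\bR(\mu)$, the inner minimization over $\xi\in\cR$ equals $g(\mu)+b^\top\bR(\mu)-\Phi_a^*(b)$ by the definition \eqref{eq:fenchel-conjugate} of the Fenchel conjugate of $\Phi_a$ on $\cR=\R_+^K$. Since $\Phi_a$ is strictly convex and monotone non-decreasing, $\Phi_a^*$ is differentiable and $\Phi_a^{**}=\Phi_a$, and the Fenchel--Young argument in statement~1 of Lemma~\ref{lem:app-fenchel-conjugate} (applied with $\Phi_a$ in place of $\psi$, using that $\nabla\Phi_a\ge 0$ so the unconstrained maximizer over $\R^K$ already lies in $\R_+^K$) shows the maximizer over $b\in\R_+^K$ is unique and equals $\nabla\Phi_a(\bR(\mu))$, giving statement~2. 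For statement~3, since $\cL_1(\xi;\lambda)=\Phi_a(\xi)-b^\top\xi$ with $b=(\lambda_{J+1},\dots,\lambda_{J+K})$, the Fenchel--Young equality \eqref{eq:fenchel-young} gives $\xi^*=\nabla\Phi_a^*(b)\iff \Phi_a(\xi^*)-b^\top\xi^*=-\Phi_a^*(b)=\min_{\xi\in\cR}\cL_1(\xi;\lambda)\iff \xi^*\in\argmin{\xi\in\cR}\,\cL_1(\xi;\lambda)$; this is the constrained analog of statement~3 of Lemma~\ref{lem:app-fenchel-conjugate}, i.e.\ essentially Lemma~\ref{lem:xi-opt-closed-form-constrained}.

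I expect statement~1 to be the only delicate step — specifically the constraint-qualification and attainment bookkeeping needed to invoke strong duality for the slack-augmented convex program, and the verification that its primal value reduces to $g(\mu^*)$ — whereas statements~2 and~3 are routine Fenchel-conjugate manipulations once the corresponding parts of Lemma~\ref{lem:app-fenchel-conjugate} are in hand.
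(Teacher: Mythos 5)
Your proposal is correct and follows essentially the same route as the paper's proof: part 1 is Lagrangian duality for the slack-augmented program, with monotonicity of the $\phi^j$'s used to collapse the constraints $R_k(\mu)\le\xi_k$, $\phi^j(\xi)\le 0$ down to $\phi^j(\bR(\mu))\le 0$ so that the primal value is $\E_{\theta\sim\mu^*}[g(\theta)]$; parts 2--3 are the Fenchel-conjugate computation with $\Phi_a$ in the role of $\psi$, exactly as in Lemma~\ref{lem:app-fenchel-conjugate} (the paper writes $\max_{b}\min_{\xi}\cL(\xi,\mu;a,b)=\E_{\theta\sim\mu}[g(\theta)]+\Phi_a^{**}(\bR(\mu))=\E_{\theta\sim\mu}[g(\theta)]+\Phi_a(\bR(\mu))$ and then cites statements 1 and 3 of that lemma, noting that strict convexity is what makes $\Phi_a^*$ differentiable for part 3 — the same point you make, including the restriction to $a\neq 0$). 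The one place you genuinely diverge is the justification of strong duality in part 1: you invoke Slater's condition via a margin point $\mu'$ with $\phi^j(\bR(\mu'))\le-\gamma$, but that point is a hypothesis of Theorem~\ref{thm:convergence-ideal-constrained} and Lemma~\ref{lem:app-Lagrange-radius-constrained}, not of the present lemma, so strictly speaking your argument establishes the statement only under that additional assumption. The paper instead asserts strong duality directly from the structure of $\cL$ (linear in $\mu$, convex in $\xi$, linear in $\lambda$) with no constraint qualification — arguably less careful than your version, since attainment of the maximum over the unbounded set $\R_+^{J+K}$ is precisely what a Slater-type condition buys. In context this is harmless, because every place the lemma is applied does assume such a $\mu'$; but if you want your proof to match the lemma as stated you should either import that hypothesis into the statement or note that without it the ``max'' should be read as a supremum.
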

\begin{proof}
1. We re-state $\cL$ from \eqref{eq:L-min-expand-constrained} for a stochastic classifier $\mu$:
\begin{eqnarray*}
\cL(\mu, \xi; \lambda)
&=&
\E_{\theta \sim \mu}\left[g(\theta)\right]
\,+\,
\sum_{j=1}^J
\lambda_{j}\,
\phi^j(\xi) \,+\,
\sum_{k=1}^K \lambda_{J+k}\, (R_k(\mu) \,-\, \xi_{k}).
\end{eqnarray*}
Since $\cL$ is linear in $\mu(\theta)$, convex in $\xi$ and linear in $\lambda$, strong duality holds, and we have:
\begin{eqnarray*}
\max_{\lambda \in \R_+^{J+K}}\min_{\xi \in \cR,\, \mu \in \Delta_\Theta}\cL(\xi, \mu;\, \lambda)
&=&
\min_{\mu, \xi:\,\xi \geq \bR(\mu),\,\phi^j(\xi) \leq 0, ~\forall j}\, \E_{\theta \sim \mu}\left[g(\theta)\right]\\
&=&
\min_{\mu:\,\phi^j(\bR(\mu)) \leq 0, ~\forall j}\, \E_{\theta \sim \mu}\left[g(\theta)\right]
~~~~~(\text{by monotonicity of $\phi^j$'s})
\\
&=&
\E_{\theta\sim \mu^*}\left[g(\theta)\right].
\end{eqnarray*}

2--3. We note that for fixed $a\in \R_+^J$:
\begin{eqnarray*}
\max_{b \in \R_+^K}\,\min_{\xi \in  [0,1]^K}\,\cL(\xi, \mu; a, b)&=&
\E_{\theta \sim \mu}\left[g(\theta)\right]
\,+\,
\max_{ b \in \R_+^K}
\left\{
\min_{\xi \in \cR}\,
\left\{
\sum_{j=1}^J a_j\, \phi^j(\xi) \,-\, \sum_{k=1}^K b_k\, \xi_{k}
\right\}
\,+\,
\sum_{k=1}^K b_k\,R_k(\mu)
\right\}
\\
&=&
\E_{\theta \sim \mu}\left[g(\theta)\right]
\,+\,
\max_{ b \in \R_+^K}
\left\{
\min_{\xi \in \cR}
\left\{
\Phi_a(\xi) \,-\, \sum_{k=1}^K b_k\, \xi_{k}
\right\}
\,+\,
\sum_{k=1}^Kb_k\,R_k(\mu)
\right\}
\\
&=&
\E_{\theta \sim \mu}\left[g(\theta)\right]
\,+\,
\max_{b \in \R_+^K}
\left\{
-\Phi_a^*(b)
\,+\,
\sum_{k=1}^K b_k\,R_k(\mu)
\right\}\\
&=&
\E_{\theta \sim \mu}\left[g(\theta)\right]
\,+\,
\Phi_{a}^{**}(\bR(\mu))\\
&=&
\E_{\theta \sim \mu}\left[g(\theta)\right]
\,+\,\Phi_a(\bR(\mu))
\end{eqnarray*}
The second and third parts then follow by convexity of $\phi^j$'s and the Fenchel-Young's equality. 
See statements 1 and 3 in  Lemma \ref{lem:app-fenchel-conjugate} for the detailed proof steps. The third part additionally needs strict convexity to ensure that $\Phi^*_a(b)$ is defined for all $b \in \R_+^K$.
\end{proof}

\begin{lem}
 Let $\phi: \cR \> \R$  be monotonically non-decreasing in each argument and be $L$-Lipschitz w.r.t.\ $\ell_\infty$ norm  in $[0,1]^K$. 
Then for any $\bar{\mu} \in \Delta_\Theta$
and $\bar{\xi} \in \R_+^K$:
\[
\phi(\bR(\bar{\mu})) \,\leq\,
\phi(\bar{
\xi}) \,+\, L\,\max_{k\in [K]}(R_k(\bar{\mu}) \,-\, \bar{\xi}_k)_+,
\]
\label{lem:helper-Lipchitz}
where $(z)_+ = \max\{0, z\}$.
\end{lem}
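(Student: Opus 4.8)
The plan is to reduce the inequality to the two standing hypotheses on $\phi$ — monotonicity and $\ell_\infty$-Lipschitzness — by first ``rounding up'' the slack vector $\bar{\xi}$ so that it dominates the rate vector $\bR(\bar{\mu})$ coordinatewise, and then comparing that rounded-up vector back to $\bar{\xi}$. Concretely, I would define $\bar{\xi}' \in \R_+^K$ by
$$\bar{\xi}'_k \,=\, \max\{\bar{\xi}_k,\, R_k(\bar{\mu})\} \,=\, \bar{\xi}_k + (R_k(\bar{\mu}) - \bar{\xi}_k)_+, \qquad k \in [K].$$
By construction $\bar{\xi}' \succeq \bR(\bar{\mu})$ componentwise, and $\|\bar{\xi}' - \bar{\xi}\|_\infty = \max_{k\in[K]}(R_k(\bar{\mu}) - \bar{\xi}_k)_+$, so $\bar{\xi}'$ is exactly the vector that separates the two quantities appearing in the claim.

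From here the proof is two short steps. First I would invoke monotonicity: since $\phi$ is non-decreasing in each argument and $\bar{\xi}' \succeq \bR(\bar{\mu})$, we get $\phi(\bR(\bar{\mu})) \le \phi(\bar{\xi}')$. Then I would invoke the Lipschitz bound: since $\phi$ is $L$-Lipschitz w.r.t.\ $\|\cdot\|_\infty$, $\phi(\bar{\xi}') - \phi(\bar{\xi}) \le L\,\|\bar{\xi}' - \bar{\xi}\|_\infty = L\max_{k\in[K]}(R_k(\bar{\mu}) - \bar{\xi}_k)_+$. Chaining these gives precisely $\phi(\bR(\bar{\mu})) \le \phi(\bar{\xi}) + L\max_{k}(R_k(\bar{\mu})-\bar{\xi}_k)_+$.

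The only point requiring care — and essentially the only obstacle — is that Lipschitzness is assumed only on $[0,1]^K$, so the Lipschitz step is legitimate only if both $\bar{\xi}$ and $\bar{\xi}'$ lie there. This is harmless: $R_k(\bar{\mu}) = \E_{\theta\sim\bar{\mu}}[R_k(\theta)] \in [0,1]$ for every $k$, so $\bR(\bar{\mu}) \in [0,1]^K$ always, and $\bar{\xi}'_k$ equals $R_k(\bar{\mu}) \in [0,1]$ on the coordinates where $\bar{\xi}_k < R_k(\bar{\mu})$ and equals $\bar{\xi}_k$ on the rest. Hence whenever $\bar{\xi} \in [0,1]^K$ — which is the case in the settings where the lemma is used, $\bar{\xi}$ being an average of best-response slack vectors that are only ever compared against rate vectors in $[0,1]^K$ — we also have $\bar{\xi}' \in [0,1]^K$ and the argument goes through verbatim. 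For a fully general $\bar{\xi}\in\R_+^K$ one can first replace $\bar{\xi}$ by its coordinatewise clip to $[0,1]^K$: monotonicity makes $\phi$ no larger at the clipped point (clipping only decreases coordinates), and clipping leaves the right-hand side unchanged (on any coordinate that gets clipped we have $R_k(\bar{\mu})\le 1<\bar{\xi}_k$, so the positive part is $0$ both before and after), after which the two-step argument above applies with the clipped vector in place of $\bar{\xi}$.
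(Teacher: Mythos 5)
Your proof is correct and is essentially the paper's argument in mirror image: the paper splits the move from $\bar{\xi}$ to $\bR(\bar{\mu})$ through the componentwise minimum $\bar{\xi} - \Delta^-$ (one Lipschitz step for the positive part $\Delta^+$, one monotonicity step for the negative part), while you pass through the componentwise maximum $\max\{\bar{\xi}, \bR(\bar{\mu})\}$, which is the same two-ingredient decomposition. Your clipping argument to keep both points inside $[0,1]^K$ is an extra bit of care that the paper's proof in fact omits (it invokes the Lipschitz bound at points that may lie outside $[0,1]^K$ without comment), so nothing is missing in your version.
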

\begin{proof}
We first show that 
for any $\xi, \Delta \in \R_+^K$,
$
\phi(\xi + \Delta) \,-\, \phi(\xi) \,\leq\, L\max_{k\in [K]} (\Delta_k)_+,
$
and the lemma directly follows from this.
Define $\Delta^+_k \,=\, (\Delta_k)_+$ and $\Delta^-_k \,=\, (-\Delta_k)_+$. Then
\begin{eqnarray*}
\phi(\xi + \Delta) \,-\, \phi(\xi)
&=& \phi(\xi + \Delta^+ - \Delta^-) \,-\, \phi(\xi)
\\
&=& \phi(\xi + \Delta^+ - \Delta^-) \,-\, \phi(\xi - \Delta^-) \,+\,
\phi(\xi - \Delta^-)
\,-\, \phi(\xi)
\\
&\leq&
L\max_{k\in [K]}\,|\Delta^+_k|
\,+\,
\phi(\xi - \Delta^-)
\,-\, \phi(\xi)
~~~~~(\text{from the Lipchitz property of $\phi$})
\\
&\leq& L\,\max_{k\in [K]}\,|\Delta^+_k| \,+\, 0 
~~~~~(\text{from monotonicity of $\phi$})
\\
&=&
L\,\max_{k\in [K]}\,(\Delta_k)_+,
\end{eqnarray*}
as desired.
\end{proof}

\subsection{Proof of Lemma \ref{lem:xi-opt-closed-form-constrained}}
The proof follows directly from statement 3 of Lemma \ref{lem:app-Lagrange-opt-constrained}.

\subsection{General Convergence Result}
We present a  convergence result for a general no-regret strategy for the $\lambda$-player, and then apply it to the case where the player runs OGD with specific step-sizes. In this case,  the iterates  generated by Algorithm \ref{algo:lagrangian-ideal} for \eqref{eq:non-linear-opt} yield an approximate Nash equilibrium, i.e. the $\lambda$-player choosing the fixed strategy $\bar{\lambda} = \frac{1}{T}\sum_{t=1}^T \lambda^{t}$, 
the $\xi$-player choosing the fixed strategy $\bar{\xi} = \frac{1}{T}\sum_{t=1}^T \xi^t$, and the $\theta$-player choosing a uniform distribution $\bar{\mu}$ over $\theta^1, \ldots, \theta^T$, together form an approximate mixed-strategy Nash equilibrium of the zero-sum game in \eqref{eq:custom-opt-as-max-min-1}.
\begin{thm}
    Let $\theta^1, \ldots, \theta^T, \xi^1, \ldots, \xi^T, \lambda^1, \ldots, \lambda^T$ be the iterates generated by Algorithm \ref{algo:lagrangian-ideal} for \eqref{eq:non-linear-opt} when run with a $\rho$-approximate CSO oracle.  Suppose the $\lambda$ iterates  satisfy the following:
    \[
    \frac{1}{T}\sum_{t=1}^T\,\cL(\xi^t, \theta^t; \lambda^t) 
    \,\geq\, \max_{\lambda \in \Lambda}\,\frac{1}{T}\sum_{t=1}^T\,\cL(\xi^t, \theta^t; \lambda) \,-\, \epsilon_\lambda,
    \]
    for some $\epsilon_\lambda>0$.  
    Suppose each $\phi^j$ is strictly convex, monotonically non-decreasing in each argument and $L$-Lispchitz w.r.t.\ $\ell_\infty$ norm  in $[0,1]^K$. Suppose there exists a $\mu' \in \Delta_\Theta$ such that $\phi^j(\bR(\mu')) \leq -\gamma, \, \forall j \in [J]$, for some $\gamma > 0$. Let $\bar{\mu}$ be a stochastic model with a probability mass of $\frac{1}{T}$ on $\theta^t$.
 Let $\mu^* \in \Delta_\Theta$ be such that
 $\mu^*$ is feasible, i.e.\ $\phi^j({\bR}(\mu^*)) \leq 0,\,\forall j \in [J]$, and
 $\E_{\theta \sim \mu^*}\left[g(\theta)\right] \leq \E_{\theta \sim \mu}\left[g(\theta)\right]$ for every $\mu \in \Delta_{\Theta}$ that is feasible. Let $\lambda^* \in \amax{\lambda \in \Lambda}\min_{\xi, \mu}\,\cL(\xi, \mu; \lambda)$.
    Then setting $\kappa \geq 2\|\lambda^*\|_1$:
    $$
    \E_{\theta \sim \bar{\mu}}\left[g\big(\theta)\right]
    \,\leq\, \E_{\theta \sim \mu^*}\left[g\big(\theta)\right] \,+\, \rho \,+\,\epsilon_\lambda
    $$ 
    and 
    $$
    \phi^j(\bR(\bar{\mu})) \,\leq\, 
    2\,(L+1)\,(\rho + \epsilon_\lambda)/\kappa,~~\forall j \in [J].
    $$
\label{thm:convergence-ideal-general-constrained}
\end{thm}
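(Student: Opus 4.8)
The plan is to sandwich the averaged Lagrangian value $\frac1T\sum_{t=1}^T\cL(\xi^t,\theta^t;\lambda^t)$ between two quantities — an upper bound in terms of the optimal feasible value $\E_{\theta\sim\mu^*}\!\left[g(\theta)\right]$, and a lower bound in terms of the reported iterate $(\bar\xi,\bar\mu)$ with $\bar\xi=\frac1T\sum_t\xi^t$ — and then extract both conclusions from the resulting master inequality. Here the three building blocks are: the $\xi$-player's exact best response (statement~3 of Lemma~\ref{lem:app-Lagrange-opt-constrained}, i.e.\ Lemma~\ref{lem:xi-opt-closed-form-constrained}: $\cL_1(\xi^t;\lambda^t)=\min_\xi\cL_1(\xi;\lambda^t)$), the $\rho$-approximate oracle bound $\cL_2(\theta^t;\lambda^t)\le\min_\theta\cL_2(\theta;\lambda^t)+\rho$, and the hypothesized $\lambda$-regret bound.

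For the \emph{upper bound}: since $\cL=\cL_1+\cL_2$ separates across $\xi$ and $\theta$, combining the $\xi$-best-response and oracle guarantees gives $\cL(\xi^t,\theta^t;\lambda^t)\le\min_{\xi,\mu}\cL(\xi,\mu;\lambda^t)+\rho$ (using linearity of $\cL_2$ in $\mu$ to pass from deterministic to stochastic models). Averaging over $t$, then using $\frac1T\sum_t\min_{\xi,\mu}\cL(\cdot;\lambda^t)\le\min_{\xi,\mu}\frac1T\sum_t\cL(\cdot;\lambda^t)=\min_{\xi,\mu}\cL(\xi,\mu;\bar\lambda)$ (linearity in $\lambda$, with $\bar\lambda=\frac1T\sum_t\lambda^t$), and finally statement~1 of Lemma~\ref{lem:app-Lagrange-opt-constrained} which gives $\min_{\xi,\mu}\cL(\xi,\mu;\bar\lambda)\le\max_\lambda\min_{\xi,\mu}\cL(\xi,\mu;\lambda)=\E_{\theta\sim\mu^*}\!\left[g(\theta)\right]$, I obtain $\frac1T\sum_t\cL(\xi^t,\theta^t;\lambda^t)\le\E_{\theta\sim\mu^*}\!\left[g(\theta)\right]+\rho$.

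For the \emph{lower bound}, I invoke the $\lambda$-regret hypothesis and then, for each fixed $\lambda$, use Jensen (convexity of $\cL_1$ in $\xi$) together with linearity of $\cL_2$ in $\mu$ to get $\frac1T\sum_t\cL(\xi^t,\theta^t;\lambda)\ge\cL(\bar\xi,\bar\mu;\lambda)$; hence $\frac1T\sum_t\cL(\xi^t,\theta^t;\lambda^t)\ge\max_{\lambda\in\Lambda}\cL(\bar\xi,\bar\mu;\lambda)-\epsilon_\lambda$. Chaining the two bounds yields the master inequality $\max_{\lambda\in\Lambda}\cL(\bar\xi,\bar\mu;\lambda)\le\E_{\theta\sim\mu^*}\!\left[g(\theta)\right]+\rho+\epsilon_\lambda$. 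Taking $\lambda=0\in\Lambda$ gives $\cL(\bar\xi,\bar\mu;0)=\E_{\theta\sim\bar\mu}\!\left[g(\theta)\right]$, which is the stated objective bound.

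For the \emph{constraint bound}, note $\cL(\bar\xi,\bar\mu;\lambda)-\E_{\theta\sim\bar\mu}\!\left[g(\theta)\right]=\sum_j\lambda_j\phi^j(\bar\xi)+\sum_k\lambda_{J+k}(R_k(\bar\mu)-\bar\xi_k)$ is linear in $\lambda$, so its maximum over $\Lambda=\{\lambda\ge0:\|\lambda\|_1\le\kappa\}$ equals $\kappa\,\Delta$ with $\Delta:=\max\big\{\max_j(\phi^j(\bar\xi))_+,\ \max_k(R_k(\bar\mu)-\bar\xi_k)_+\big\}$; thus the master inequality reads $\E_{\theta\sim\bar\mu}\!\left[g(\theta)\right]+\kappa\Delta\le\E_{\theta\sim\mu^*}\!\left[g(\theta)\right]+\rho+\epsilon_\lambda$. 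To turn this into a bound on $\Delta$ I lower-bound $\E_{\theta\sim\bar\mu}\!\left[g(\theta)\right]$ using the saddle-point property of the dual optimum $\lambda^*$: since $\kappa\ge\|\lambda^*\|_1$ puts $\lambda^*\in\Lambda$, statement~1 of Lemma~\ref{lem:app-Lagrange-opt-constrained} gives $\cL(\bar\xi,\bar\mu;\lambda^*)\ge\min_{\xi,\mu}\cL(\xi,\mu;\lambda^*)=\E_{\theta\sim\mu^*}\!\left[g(\theta)\right]$, and bounding the multiplier terms by $\|\lambda^*\|_1\Delta$ yields $\E_{\theta\sim\bar\mu}\!\left[g(\theta)\right]\ge\E_{\theta\sim\mu^*}\!\left[g(\theta)\right]-\|\lambda^*\|_1\Delta$. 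Substituting, $(\kappa-\|\lambda^*\|_1)\Delta\le\rho+\epsilon_\lambda$, and $\kappa\ge2\|\lambda^*\|_1$ gives $\Delta\le 2(\rho+\epsilon_\lambda)/\kappa$. Finally Lemma~\ref{lem:helper-Lipchitz} applied with $\bar\mu,\bar\xi$ gives $\phi^j(\bR(\bar\mu))\le\phi^j(\bar\xi)+L\max_k(R_k(\bar\mu)-\bar\xi_k)_+\le(\phi^j(\bar\xi))_++L\Delta\le(L+1)\Delta\le 2(L+1)(\rho+\epsilon_\lambda)/\kappa$.

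The main obstacle is precisely this constraint step: the output slacks $\bar\xi$ need not satisfy $\phi^j(\bar\xi)\le0$ or $\bar\xi_k\ge R_k(\bar\mu)$ exactly, so both sources of slack violation must be tracked through the single scalar $\Delta$; the lower bound on $\E_{\theta\sim\bar\mu}\!\left[g(\theta)\right]$ has to come from strong duality (not from any crude bound on $g$); and the radius $\kappa$ must be threaded through so that $\lambda^*\in\Lambda$ and $\kappa-\|\lambda^*\|_1\ge\kappa/2$. One also has to dispose of the degenerate iterations flagged in the footnote — where $\lambda^t_j=0$ for all $j\in[J]$ makes the $\xi$-best-response ill-posed — via the fix described there (restricting the first $J$ coordinates of $\Lambda$ away from $0$).
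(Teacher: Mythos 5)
Your proposal is correct and takes essentially the same route as the paper's own proof: the same sandwich of the averaged Lagrangian between the best-response/oracle upper bound and the $\lambda$-regret lower bound, the same use of strong duality (statement 1 of Lemma \ref{lem:app-Lagrange-opt-constrained}) and of Lemma \ref{lem:helper-Lipchitz}, the same choice $\lambda'=0$ for optimality, and the same reliance on $\kappa \geq 2\|\lambda^*\|_1$ for feasibility. The only (cosmetic) difference is in the feasibility bookkeeping: the paper perturbs $\lambda^*$ by adding $\kappa/2$ to a single coordinate, separately for the $\phi^j$-constraints and the slack constraints, whereas you maximize over all of $\Lambda$ at once, collect both violation types into the single scalar $\Delta$, and absorb the cross terms via $\|\lambda^*\|_1\Delta$ — an equivalent argument yielding the same constants.
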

\begin{proof}
Let 
$\bar{\xi} = \frac{1}{T}\sum_{t=1}^T \xi^t$.
Let $\lambda^*$  be as defined in the theorem statement. 

\textbf{Optimality.} The best-response strategies of the $\xi$-player and $\theta$-players give us: 
\begin{equation}
\cL_1(\xi^{t}; \lambda^t) \,=\, \min_{\xi \in \cR}\cL_1(\xi; \lambda^t).
\label{eq:app-xi-1}
\end{equation}
\begin{equation}
\cL_2(\theta^{t}; \lambda^t) \,\leq\, \min_{\theta \in \Theta}\cL_2(\theta; \lambda^t) \,+\, \rho.
\label{eq:app-theta-1}
\end{equation}
From \eqref{eq:app-xi-1} and \eqref{eq:app-theta-1}, we further get:
\begin{eqnarray}
\frac{1}{T}\sum_{t=1}^{T}\,
\cL(\xi^{t}, \theta^{t}; \lambda^{t}) 
&=&
\frac{1}{T}\sum_{t=1}^{T}\,
\min_{\xi \in \cR}
\cL(\xi, \theta^{t}; \lambda^{t}) 
\nonumber
\\
&\leq&
\frac{1}{T}\sum_{t=1}^{T}\, 
\min_{\xi \in \cR, \theta \in \Theta}
\cL(\xi, \theta; \lambda^{t}) 
\,+\, \rho
\nonumber
\\
&=&
\frac{1}{T}\sum_{t=1}^{T}\, 
\min_{\xi \in \cR, \mu \in \Delta_\Theta}
\cL(\xi, \mu; \lambda^{t}) 
\,+\, \rho
~~~~~(\text{by linearity of $\cL$ in $\mu$})
\nonumber
\\
&\leq&
\min_{\xi \in \cR, \mu \in \Delta_\Theta}
\frac{1}{T}\sum_{t=1}^{T}\, 
\cL(\xi, \mu; \lambda^{t}) 
\,+\, \rho
\nonumber
\\
&=&
\min_{\xi \in \cR, \mu \in \Delta_\Theta}
\cL\left(\xi, \mu;\, \bar{\lambda}\right)
\,+\, \rho
~~~~~(\text{by linearity of $\cL$ in $\lambda^t$})
\nonumber\\
&\leq&
\max_{\lambda\in \R_+^K}
\min_{\xi \in \cR, \mu \in \Delta_\Theta}
\cL(\xi, \mu; \lambda) 
\,+\, \rho
\nonumber\\
&=&
\E_{\theta\sim \mu^*}\left[g(\theta)\right]
\,+\, \rho,
\label{eq:app-ideal-theta-constrained}
\end{eqnarray}
where the last step follows from statement 1 of Lemma \ref{lem:app-Lagrange-opt-constrained}.

Next, the OGD updates on $\lambda$ satisfy for any $\lambda' \in \Lambda$:
\begin{equation}
\frac{1}{T}\sum_{t=1}^T\,
\cL(\xi^{t}, \theta^{t}; \lambda^t) \,\geq\,
\frac{1}{T}\sum_{t=1}^T\,
\cL(\xi^{t}, \theta^{t}; \lambda')
\,-\, \epsilon_\lambda
\label{eq:app-ideal-lambda-constrained}
\end{equation}
Combining \eqref{eq:app-ideal-theta-constrained} and \eqref{eq:app-ideal-lambda-constrained}, 
we have for any $\lambda' \in \Lambda$:
\begin{equation}
\frac{1}{T}\sum_{t=1}^T\,
\cL(\xi^{t}, \theta^{t}; \lambda')
\,\leq\,
\E_{\theta\sim \mu^*}\left[g(\theta)\right]
\,+\, \rho \,+\, \epsilon_\lambda.
\label{eq:app-lambda-star-constrained}
\end{equation}
Setting $\lambda' = 0$ in \eqref{eq:app-lambda-star-constrained} gives us:
\[
\frac{1}{T}\sum_{t=1}^T g(\theta^{t}) \,\leq\, 
\E_{\theta\sim \mu^*}\left[g(\theta)\right]
\,+\, \rho \,+\, \epsilon_\lambda
\]
or
\begin{equation*}
\E_{\theta \sim \bar{\mu}}\left[g(\theta)\right] \,\leq\,
\E_{\theta\sim \mu^*}\left[g(\theta)\right]
\,+\, \rho \,+\, \epsilon_\lambda.
\label{eq:app-mu-bound-constrained}
\end{equation*}
This proves the optimality result.

\textbf{Feasibility.} 
Recall that there are two sets of constraints
$\phi^j(\mu) \leq 0, \forall j \in [J]$ and $R_k(\mu) \leq \xi_k, k \in [K]$. We first look at the first set of constraints. Let $j' \in \text{argmax}_{j\in [J]}\,\phi^j(\bar{\xi})$. If we set $\lambda'_{j'} = \lambda_{j'}^* + \kappa/2$ 
and $\lambda'_{j} = \lambda^*, \,\forall j \ne j', j \in [J+K]$ in \eqref{eq:app-lambda-star-constrained} 
(note that $\lambda' \in \Lambda$). This
gives us:
\[
\frac{1}{T}\sum_{t=1}^T\,
\cL(\xi^{t}, \theta^{t}; \lambda^*)
\,+\, \frac{\kappa}{2T}\sum_{t=1}^T\,
\phi^{j'}(\xi^t)
\,\leq\, 
\E_{\theta\sim \mu^*}\left[g(\theta)\right]
\,+\, \rho \,+\, \epsilon_\lambda.
\]
Since $\cL$ is linear in $\bR(\theta)$ and convex in $\xi$, using Jensen's inequality, we have $\frac{1}{T}\sum_{t=1}^T\,
\cL(\xi^{t}, \theta^{t}; \lambda^*) \geq\,
\cL_1(\bar{\xi}; \lambda^{*})
\,+\,
\cL_2(\bar{\mu};
\lambda^*)\,\geq\,
\min_{\xi, \mu} \cL(\xi, \mu; \lambda^{*})
\,=\, 
\E_{\theta\sim \mu^*}\left[g(\theta)\right]$ (by statement 1 of Lemma \ref{lem:app-Lagrange-opt-constrained}). Further by convexity of $\phi^j$, we have:
\[
\E_{\theta\sim \mu^*}\left[g(\theta)\right]
\,+\,
\frac{\kappa}{2}\,\phi^{j'}(\bar{\xi})
\,\leq\, 
\E_{\theta\sim \mu^*}\left[g(\theta)\right]
\,+\, \rho \,+\, \epsilon_\lambda,
\]
which implies:
\begin{equation*}
    \max_{j\in [J]}\,\phi^j(\bar{\xi}) \,\leq\, 2(\rho \,+\, \epsilon_\lambda)/\kappa.
\end{equation*}
Applying Lemma \ref{lem:helper-Lipchitz} to each $\phi^j$, we further get
\begin{equation}
    \max_{j\in [J]}\,\phi^j(\bR(\bar{\mu})) \,\leq\, 
    L\,\max_{k\in [K]}\,(R_{k}(\bar{\mu}) \,-\,
    \bar{\xi}_{k})_+
    \,+\,
    2(\rho \,+\,  \epsilon_\lambda)/\kappa.
    \label{eq:app-phi-bound-constrained}
\end{equation}

For the second set of constraints, let $k' \in \text{argmax}_{k\in [K]}(R_k(\bar{\mu}) \,-\, \bar{\xi}_k)$.
If 
$R_{k'}(\bar{\mu}) \,-\, \bar{\xi}_{k'} \,\leq\, 0$, then 
the feasibility condition is already satisfied. Otherwise,
set $\lambda'_{J+k'} = \lambda_{k'}^* + \kappa/2$ 
and $\lambda'_{j} = \lambda^*, \,\forall j \ne J + k'$
in \eqref{eq:app-lambda-star-constrained} in \eqref{eq:app-lambda-star-constrained}, giving us:
\[
\frac{1}{T}\sum_{t=1}^T\,
\cL(\xi^{t}, \theta^{t}; \lambda^*)
\,+\,
\frac{\kappa}{2T} \sum_{t=1}^T(R_{k'}(\theta^t)
\,-\,
\xi^t_{k'}
)
\,\leq\, 
\E_{\theta\sim \mu^*}\left[g(\theta)\right]
\,+\, \rho \,+\, \epsilon_\lambda.
\]
Following the same steps as above:
\begin{equation}
\max_{k\in [K]}\,(R_{k}(\bar{\mu}) \,-\,
    \bar{\xi}_{k}) \,\leq\, 2(\rho \,+\, \epsilon_\lambda)/\kappa.
    \label{eq:app-xi-bound-constrained}
\end{equation}
Substituting 
\eqref{eq:app-xi-bound-constrained}
back in \eqref{eq:app-phi-bound-constrained}, we have:
\begin{equation*}
\max_{j\in [J]}\,\phi^j(\bR(\bar{\mu}))
\,\leq\,
2\,(L+1)\,(\rho + \epsilon_\lambda)/\kappa,
\end{equation*}
as desired.
\end{proof}

\subsection{Corollary for OGD on $\lambda$}
\begin{lem}
Suppose each $\phi^j$ is monotonically non-decreasing in each argument and $L$-Lipschitz w.r.t.\ the $\ell_\infty$ norm  in $[0,1]^K$. 
Suppose there exists a $\mu' \in \Delta_\Theta$ such that $\phi^j(\bR(\mu')) \leq -\gamma, \, \forall j \in [J]$, for some $\gamma > 0$. Let $B_g \,=\, \max_{\theta \in \Theta}\,g(\theta)$. 
Fix $\mu \in \Delta_\Theta$. Then for any
$$\lambda^* \,\in\, \argmax{\lambda\in \R_+^{J+K}}
\left\{
\min_{\xi \in  \cR, \mu \in \Delta_\Theta}\,
\cL(\xi, \mu;\,\lambda)
\right\},
$$
the following holds:
$\|\lambda^*\|_1 \,\leq\, (L+1)B_g/\gamma$.
\label{lem:app-Lagrange-radius-constrained}
\end{lem}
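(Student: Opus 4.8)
**Proof plan for Lemma (bounding $\|\lambda^*\|_1$).**

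The plan is to exploit the existence of the strictly-feasible point $\mu'$ (Slater-type margin $\gamma$) to derive an upper bound on the optimal Lagrange multipliers, following a standard argument that goes back to Nesterov and is used e.g.\ in \cite{Narasimhan18, Cotter+19b}. First I would fix the $\mu \in \Delta_\Theta$ in the statement (it plays no essential role — the max-min value $\min_{\xi,\mu}\cL(\xi,\mu;\lambda)$ does not depend on the particular $\mu$ that achieves it), and write $V \,=\, \max_{\lambda \in \R_+^{J+K}} \min_{\xi \in \cR,\, \mu \in \Delta_\Theta}\cL(\xi,\mu;\lambda)$. By statement 1 of Lemma \ref{lem:app-Lagrange-opt-constrained}, $V \,=\, \E_{\theta\sim\mu^*}[g(\theta)] \,\leq\, B_g$, where $\mu^*$ is the optimal feasible solution; also $V \geq 0$ trivially if $g \geq 0$, but more usefully $V \geq \min_{\theta} g(\theta) \geq -B_g$ or simply $V \leq B_g$ suffices below.

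Next I would evaluate the inner minimization at the strictly feasible point $\mu'$ together with the choice $\xi \,=\, \bR(\mu')$ (which satisfies $R_k(\mu') \leq \xi_k$ with equality, and $\phi^j(\xi) = \phi^j(\bR(\mu')) \leq -\gamma$). For any $\lambda = (a,b) \in \R_+^{J+K}$ this gives
\begin{equation*}
V \,=\, \min_{\xi,\mu}\cL(\xi,\mu;\lambda^*) \,\leq\, \cL(\bR(\mu'),\mu';\lambda^*) \,=\, \E_{\theta\sim\mu'}[g(\theta)] \,+\, \sum_{j=1}^J \lambda^*_j\,\phi^j(\bR(\mu')) \,+\, \sum_{k=1}^K\lambda^*_{J+k}(R_k(\mu')-\xi_k).
\end{equation*}
The last sum vanishes by the choice of $\xi$, and $\phi^j(\bR(\mu')) \leq -\gamma$, so $V \,\leq\, B_g \,-\, \gamma\sum_{j=1}^J\lambda^*_j$, whence $\sum_{j=1}^J\lambda^*_j \,\leq\, (B_g - V)/\gamma \,\leq\, 2B_g/\gamma$ — this already controls the first $J$ coordinates. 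The remaining work is to bound $\sum_{k=1}^K \lambda^*_{J+k}$. Here I would use statements 2--3 of Lemma \ref{lem:app-Lagrange-opt-constrained}: writing $a^* = (\lambda^*_1,\dots,\lambda^*_J)$, the optimal $b^* = (\lambda^*_{J+1},\dots,\lambda^*_{J+K})$ equals $\nabla\Phi_{a^*}(\bR(\mu))$ where $\Phi_{a^*}(\xi) = \sum_j a^*_j\phi^j(\xi)$. Since each $\phi^j$ is $L$-Lipschitz w.r.t.\ $\ell_\infty$ on $[0,1]^K$, the function $\Phi_{a^*}$ is $L\|a^*\|_1$-Lipschitz w.r.t.\ $\ell_\infty$ on $[0,1]^K$, so $\|b^*\|_1 = \|\nabla\Phi_{a^*}(\bR(\mu))\|_1 \leq L\|a^*\|_1 \leq 2LB_g/\gamma$ (using $\bR(\mu) \in [0,1]^K$). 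Adding the two bounds gives $\|\lambda^*\|_1 = \|a^*\|_1 + \|b^*\|_1 \leq 2(L+1)B_g/\gamma$.

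One technical gap between this and the claimed bound $(L+1)B_g/\gamma$ is the factor of $2$; I would tighten it by using the sharper estimate $V \geq 0$ is too weak, but in fact the relevant quantity is $g(\mu^*) - g(\mu')$ where both lie in $[\,\min_\theta g,\, B_g\,]$, and if we only need $V = g(\mu^*) \leq g(\mu') + \gamma\sum_j\lambda_j^*$ rearranged as $\sum_j \lambda_j^* \leq (g(\mu^*) - g(\mu'))/\gamma$; since $g(\mu^*) \leq g(\mu')$ would make this $\leq 0$ (contradiction unless equality), one actually gets $\sum_j\lambda_j^* \leq B_g/\gamma$ after noting $g(\mu^*) - g(\mu') \leq B_g - \min_\theta g(\theta)$, and if the paper normalizes $g \geq 0$ this is $\leq B_g/\gamma$. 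The cleanest route matching the stated constant is: $\sum_{j=1}^J \lambda_j^* \leq (g(\mu^*)-g(\mu'))/\gamma \leq B_g/\gamma$ under $g \geq 0$, then $\|b^*\|_1 \leq L\|a^*\|_1 \leq LB_g/\gamma$, so $\|\lambda^*\|_1 \leq (L+1)B_g/\gamma$. The main obstacle is not conceptual but bookkeeping: getting the constant exactly as stated depends on the normalization convention for $g$ (nonnegativity, or $\min g = 0$), and on correctly chaining statements 1--3 of Lemma \ref{lem:app-Lagrange-opt-constrained} to pass from the multipliers on the $\phi$-constraints to those on the slack constraints via the Lipschitz constant of the weighted sum $\Phi_{a^*}$.
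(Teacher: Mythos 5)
Your proposal follows essentially the same route as the paper's proof: evaluate $\min_{\xi,\mu}\cL(\cdot;\lambda^*)$ at the Slater point $\mu'$ with $\xi=\bR(\mu')$ to get $\|a^*\|_1 \leq (B_g - \E_{\theta\sim\mu^*}[g(\theta)])/\gamma$, then use the gradient characterization $b^* = \nabla\Phi_{a^*}(\bR(\mu^*))$ from Lemma \ref{lem:app-Lagrange-opt-constrained} together with the $\ell_\infty$-Lipschitz property of the $\phi^j$'s to get $\|b^*\|_1 \leq L\|a^*\|_1$. Your bookkeeping concern about the factor of $2$ is resolved exactly as you suggest: the paper drops the term $\E_{\theta\sim\mu^*}[g(\theta)]$ (implicitly taking $g \geq 0$) to obtain $\|a^*\|_1 \leq B_g/\gamma$ and hence the stated constant $(L+1)B_g/\gamma$.
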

\begin{proof}
We separate $\lambda^*$ into $a^* = [\lambda^*_1, \ldots, \lambda^*_J]^\top$ and $b^* = [\lambda^*_{J+1}, \ldots, \lambda^*_{J+K}]^\top$. We first bound the norm of $a^*$. Let $\mu^*$ be as defined in Lemma \ref{lem:app-Lagrange-opt-constrained}. We then have:
\begin{eqnarray*}
\E_{\theta \sim \mu^*}\left[g(\theta)\right] 
&=& \min_{\xi \in \R_+^K,\, \mu\in \Delta_\Theta}\,
\cL(\xi, \mu; \lambda^*)\\
&=&
\min_{\xi \in \R_+^K,\, \mu\in \Delta_\Theta}\,
\E_{\theta \sim \mu}\left[g(\theta)\right] 
\,+\,
\sum_{j=1}^J
a^*_{j}\,
\phi^j(\xi) 
\,+\,
\sum_{k=1}^K b^*_k\, (R_k(\mu) - \xi_k)
\\
&\leq&
\E_{\theta \sim \mu'}\left[g(\mu)\right]\,+\,
\sum_{j=1}^J a^*_j\, \phi^j(\bR(\mu'))
~~~~(\text{setting $\mu = \mu'$ and $\xi_k = R_k(\mu')$})
\\
&\leq&
B_g \,-\, \gamma\sum_{j=1}^J a^*_j
\,=\, B_g \,-\, \gamma\|a^*\|_1,
\end{eqnarray*}
which gives us:
\[
\|a^*\|_1 \,\leq\, 
(B_g - \E_{\theta \sim \mu^*}\left[g(\theta)\right] ) / \gamma \,\leq\, B_g/\gamma.
\]

We next bound the norm of $b^*$. Let $\Phi_{a^*}$ be as defined in Lemma \ref{lem:app-Lagrange-opt-constrained}. We then have from statement 2 of Lemma \ref{lem:app-Lagrange-opt-constrained} that $b^* = \nabla\Phi_{a^*}(\bR(\mu^*))$. We further have:
\begin{eqnarray*}
\|b^*\|_1  &\leq&
\max_{\mu\in \Delta_\Theta}\,\Big\|\nabla\Phi_{a^*}(\bR(\mu))\Big\|_1\\
\\
&\leq&
\max_{\xi\in \R_+^K}\,\big\|\nabla\Phi_{a^*}(\xi)\big\|_1\\
\\
&=&
\max_{\xi\in \R_+^K}\,\Big\|\sum_{j=1}^J a^*_j\nabla \phi^j(\xi)\Big\|_1\\
&\leq&
\max_{\xi\in \R_+^K}\,\sum_{j=1}^J |a^*_j|\big\|\nabla \phi^j(\xi)\big\|_1
\\
&\leq&
\sum_{j=1}^J|a^*_j|\max_{\xi\in \R_+^K}\big\|\nabla \phi^j(\xi)\big\|_1\\
&\leq& L\|a^*\|_1 ~\leq~ LB_g/\gamma,
\end{eqnarray*}
where in the last step, we use the Lipschitz property of $\phi^j$.

Thus $\|\lambda^*\|_1 \,=\, \|a^*\|_1 + \|b^*\|_1 \,\leq\, (L+1)B_g/\gamma$, as desired.
\end{proof}

\begin{proof}[Proof of Theorem \ref{thm:convergence-ideal-constrained}]
We apply standard OGD convergence analysis \cite{Zinkevich03} and online-to-batch arguments \cite{Cesa+14} to the $\lambda$-player's gradient updates on $\lambda$. For the sequence $-\cL(\xi^1, \theta^1;\, \cdot), \ldots, -\cL(\xi^T, \theta^T;\, \cdot)$, with $\eta = \frac{\kappa}{B_{\blambda}\sqrt{2T}}$ in Algorithm \ref{algo:lagrangian-ideal}, we get the following regret bound. With probability at least $1-\delta$ over draws of stochastic gradients of $\cL$:
\[
    \frac{1}{T}\sum_{t=1}^T\,\cL(\xi^t, \theta^t; \lambda^t) 
    \,\geq\, \max_{\lambda \in \Lambda}\,\frac{1}{T}\sum_{t=1}^T\,\cL(\xi^t, \theta^t; \lambda) \,-\, 2\kappa\,B_\lambda\sqrt{\frac{1+16\log(1/\delta)}{T}}.
\]
Following Lemma \ref{lem:app-Lagrange-radius-constrained}, we set the radius of the space of Lagrange multipliers $\Lambda$ to $\kappa = 2(L+1)B_g/\gamma$ and apply Theorem \ref{thm:convergence-ideal-general} to complete the proof:
\begin{equation*}
\E_{\theta \sim \bar{\mu}}\left[g(\theta)\right] \,\leq\,
g(\theta^*)
\,+\, \rho \,+\, \frac{4(L+1)B_gB_\lambda}{\gamma}\sqrt{\frac{1+16\log(1/\delta)}{T}}.
\end{equation*}
and
\begin{equation*}
\max_{j\in [J]}\,\phi^j(\bR(\bar{\mu}))
\,\leq\,
\frac{\gamma\rho}{B_g}
\,+\,
4(L+1)B_\lambda\sqrt{\frac{1+16\log(1/\delta)}{T}}.
\end{equation*}
\end{proof}

\section{Proof of Theorem \ref{thm:convergence-surrogate-constrained}}
\label{app:thm-convergence-surrogate-constrained}
The proof adapts ideas from previous results on constrained optimization and game equilibrium \cite{Cotter+19b, Agarwal+18}. We will find the following lemma useful.
\begin{lem}
\label{lem:app-lagrangian-opt-proxy-constrained}
Let $\tL(\xi, \mu; \lambda) \,=\, \cL_1(\xi; \lambda) + \tL_2(\mu; \lambda)$. Suppose each $\phi^j$ is convex and monotonically non-decreasing in its arguments and $g$ is convex. 
Let $\tilde{\Theta} \,=\, \big\{\theta \in \Theta \,|\, 
   \tilde{\bR}(\theta) \,\in\, \dom\, \phi^j,~\forall j
    \big\}$.
Let $\tilde{\theta}^* \in \tilde{\Theta}$ be such that $\phi^j(\tilde{\bR}(\tilde{\theta}^*)) \leq 0,\,\forall j \in [J]$ and
$g(\tilde{\theta}^*) \leq g({\theta})$ for all $\theta \in \tilde{\Theta}$ that satisfies the same constraints. 
Then
\[
\max_{\lambda \in \R_+^K}\min_{\xi \in  \cR, \theta \in {\Theta}}\,
\tL(\xi, \theta;\,\lambda) \,\leq\, g(\tilde{\theta}^*)
\]
\end{lem}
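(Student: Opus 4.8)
The plan is to follow exactly the template used for statement 1 of Lemma \ref{lem:app-Lagrange-opt-constrained}, but now with the \emph{surrogate} rates $\tilde R_k$ playing the role of the true rates and with no expansion to stochastic models needed (the $\tilde R_k$ are already continuous/convex in $\theta$). First I would write out $\tL$ explicitly: $\tL(\xi,\theta;\lambda) = \sum_{j=1}^J \lambda_j \phi^j(\xi) - \sum_{k=1}^K \lambda_{J+k}\xi_k + g(\theta) + \sum_{k=1}^K \lambda_{J+k}\tilde R_k(\theta)$. Since each $\phi^j$ is convex in $\xi$, $g$ is convex in $\theta$, each $\tilde R_k$ is convex in $\theta$, and $\tL$ is linear in $\lambda$, the function $\tL$ is convex in $(\xi,\theta)$ and concave (linear) in $\lambda$ over the appropriate convex sets.

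Next I would invoke weak duality, which is all that is needed here (the statement is an inequality, not an equality, so I do not even need a Slater-type condition): for any convex-concave function, $\max_\lambda \min_{\xi,\theta} \tL \leq \min_{\xi,\theta}\max_\lambda \tL$. Then I would evaluate the inner $\max$ over $\lambda\in\R_+^K$: for a fixed $(\xi,\theta)$, $\max_{\lambda \geq 0} \tL(\xi,\theta;\lambda) = g(\theta)$ if $\phi^j(\xi)\leq 0$ for all $j$ and $\tilde R_k(\theta) \leq \xi_k$ for all $k$, and $=+\infty$ otherwise. Hence
\[
\min_{\xi\in\cR,\,\theta\in\Theta}\max_{\lambda\in\R_+^K}\tL(\xi,\theta;\lambda)
\;=\;
\min_{\substack{\theta\in\Theta,\ \xi\in\cR:\\ \tilde R_k(\theta)\leq \xi_k\ \forall k,\ \phi^j(\xi)\leq 0\ \forall j}} g(\theta).
\]
By monotonicity of each $\phi^j$ in its arguments, the feasibility constraints $\tilde R_k(\theta)\leq \xi_k$ together with $\phi^j(\xi)\leq 0$ can be collapsed: the infimum over feasible $\xi$ is attained (in the limit) at $\xi_k = \tilde R_k(\theta)$, so the feasible set reduces to $\{\theta\in\Theta : \phi^j(\tilde{\bR}(\theta))\leq 0\ \forall j\}$, which (given that $\phi^j$ must be defined there) is exactly $\{\theta\in\tilde\Theta : \phi^j(\tilde{\bR}(\theta))\leq 0\ \forall j\}$. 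The minimum of $g$ over this set is $g(\tilde\theta^*)$ by the definition of $\tilde\theta^*$. Chaining the two displays gives $\max_\lambda\min_{\xi,\theta}\tL \leq g(\tilde\theta^*)$, as claimed.

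The only genuinely delicate point — and the step I expect to be the main obstacle to state cleanly — is the collapse of the $\xi$ variable using monotonicity: one must check that driving $\xi_k$ down to $\tilde R_k(\theta)$ keeps $\phi^j(\xi)\leq 0$ (true since $\phi^j$ is non-decreasing in each coordinate, so decreasing $\xi$ only decreases $\phi^j$), and that $\xi=\tilde{\bR}(\theta)$ lies in $\dom\phi^j$, which is precisely why the comparator $\tilde\theta^*$ is restricted to $\tilde\Theta$. Everything else is the standard weak-duality / Lagrangian-saddle bookkeeping already carried out in Lemma \ref{lem:app-Lagrange-opt-constrained}, so I would simply cite that proof for the routine parts rather than rederive them.
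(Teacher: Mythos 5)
Your proposal is correct and takes essentially the same route as the paper: Lagrangian duality reduces the max-min to the decoupled constrained problem $\min\{g(\theta) : \theta\in\Theta,\ \xi\in\cR,\ \tilde{\bR}(\theta)\le\xi,\ \phi^j(\xi)\le 0\ \forall j\}$, which is then bounded by $g(\tilde{\theta}^*)$ by restricting to $\tilde{\Theta}$ and collapsing $\xi$ via monotonicity; your use of weak duality plus an explicit evaluation of the inner maximum is all that the one-sided bound requires (the paper instead asserts strong duality, which is not needed for this direction). One small correction of phrasing: the set of $\theta$'s admitting a feasible $\xi$ is in general only a superset of $\{\theta\in\tilde{\Theta} : \phi^j(\tilde{\bR}(\theta))\le 0\ \forall j\}$, not "exactly" that set (a $\theta\notin\tilde{\Theta}$ can still admit some feasible $\xi\ge\tilde{\bR}(\theta)$), but since shrinking the feasible set can only increase the minimum, this works in your favor and the claimed inequality stands.
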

\begin{proof}
Since $\tL$ is linear in $\mu(\theta)$, convex in $\xi$ and linear in $\lambda$, strong duality holds and we have:
\begin{eqnarray*}
\max_{\lambda \in \R_+^K}\min_{\xi \in  \cR, \theta \in {\Theta}}\,
\tL(\xi, \mu;\,\lambda) &=& \min_{\theta \in {\Theta},\, \xi~|~ \tbR(\theta) \leq \xi,~\phi^j(\xi) \leq 0,\, \forall j}\, g(\theta)\\
&\leq& \min_{\theta \in \tilde{\Theta},\, \xi~|~ \tbR(\theta) \leq \xi,~\phi^j(\xi) \leq 0,\, \forall j}\, g(\theta)
~~~~~(\text{from $\tilde{\Theta} \subseteq \Theta$})\\
&=& \min_{\theta \in \tilde{\Theta}~|~ \phi^j(\tbR(\theta)) \leq 0,\, \forall j}\, g(\theta),
\end{eqnarray*}
where the last step follows from monotonicity of each $\phi^j$.
\end{proof}

\subsection{General Convergence Result}
We present a convergence result for general no-regret strategies for the $\theta$- and $\lambda$-player that find an approximate coarse-correlated equilibrium, and then specialize it to the case where the players run OGD with specific step-sizes.

\begin{thm}
    Let $\theta^1, \ldots, \theta^T, \xi^1, \ldots, \xi^T, \lambda^1, \ldots, \lambda^T$ be the iterates generated by Algorithm \ref{algo:lagrangian-surrogate} for \eqref{eq:non-linear-opt}. Let $\tilde{\Theta} \,=\, \big\{\theta \in \Theta \,|\, 
   \tilde{\bR}(\theta) \,\in\, \dom\, \phi^j,~\forall j
    \big\}$. 
    Suppose the iterates comprise an approximate coarse-correlated equilibrium, i.e.\ satisfy:
    \begin{equation}
    \frac{1}{T}\sum_{t=1}^T\,\cL_1(\xi^{t}; \lambda^t) \,\leq\, \min_{\xi \in \cR}\frac{1}{T}\sum_{t=1}^T\,\cL_1(\xi; \lambda^t);
    \label{eq:xi-best-response-1}
    \end{equation}
    \begin{equation}
    \frac{1}{T}\sum_{t=1}^T\,\tL_2( \theta^t; \lambda^t) 
    \,\leq\, \min_{\theta \in {\Theta}}\,\frac{1}{T}\sum_{t=1}^T\,\tL_2(\theta; \lambda^t) \,+\, \epsilon_\theta;
    \label{eq:theta-ogd-1}
    \end{equation}
    \begin{equation}
    \frac{1}{T}\sum_{t=1}^T\,\cL(\xi^t, \theta^t; \lambda^t) 
    \,\geq\, \max_{\lambda \in \Lambda}\,\frac{1}{T}\sum_{t=1}^T\,\cL(\xi^t, \theta^t; \lambda) \,-\, \epsilon_\lambda,
    \label{eq:lambda-ogd-1}
    \end{equation}
    for some $\epsilon_\theta>0$ and $\epsilon_\lambda>0$. 
     Suppose each $\phi^j$ is strictly convex, monotonically non-decreasing in each argument and $L$-Lispchitz w.r.t.\ $\ell_\infty$ norm  in $[0,1]^K$. 
    Let $B_g \,=\, \max_{\theta \in \Theta}\,g(\theta)$. 
    Let $\tilde{\theta}^* \in \tilde{\Theta}$ be such that
    $\phi^j(\tilde{\bR}(\tilde{\theta}^*)) \leq 0,\,\forall j \in [J]$
    and $g(\tilde{\theta}^*) \leq g({\theta})$ for all $\theta \in \tilde{\Theta}$ that satisfies the same constraints. Let $\bar{\mu}$ be a stochastic model with a probability mass of $\frac{1}{T}$ on $\theta^t$. Then:
    $$
    \E_{\theta \sim \bar{\mu}}\left[g\big(\theta)\right]
    \,\leq\, g(\tilde{\theta}^*) \,+\, \epsilon_\theta + \epsilon_\lambda
    $$ 
    and 
    $$
    \phi^j(\bR(\bar{\mu})) \,\leq\, 
    (L+1)\,(B_g \,+\,
    \epsilon_\theta \,+\, \epsilon_\lambda)/\kappa,~~\forall j \in [J].
    $$
\label{thm:convergence-surrogate-general-constrained}
\end{thm}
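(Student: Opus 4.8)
The plan is to mimic the proof of Theorem~\ref{thm:convergence-surrogate-general} (the unconstrained surrogate case), but now carrying along the two families of constraints in the reformulation — the $J$ constraints $\phi^j(\xi)\le 0$ and the $K$ decoupling constraints $R_k(\theta)\le\xi_k$ — and extracting both the optimality and the feasibility statements from a single ``master inequality''. Throughout write $\tL(\xi,\theta;\lambda)=\cL_1(\xi;\lambda)+\tL_2(\theta;\lambda)$, and note the pointwise \emph{surrogate domination} $\tL(\xi,\theta;\lambda)\ge\cL(\xi,\theta;\lambda)$: this holds because $\tilde R_k(\theta)\ge R_k(\theta)$ for all $k$ and the multipliers $\lambda_{J+1},\dots,\lambda_{J+K}$ on the decoupling constraints are non-negative. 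Let $\bar\xi=\tfrac1T\sum_t\xi^t$, $\bar\lambda=\tfrac1T\sum_t\lambda^t$, and let $\bar\mu$ be the uniform mixture over $\theta^1,\dots,\theta^T$, so that $\bR(\bar\mu)=\tfrac1T\sum_t\bR(\theta^t)$.

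First I would bound the average surrogate Lagrangian along the trajectory. Chaining the $\xi$-best-response bound \eqref{eq:xi-best-response-1} with the $\theta$-regret bound \eqref{eq:theta-ogd-1}, then replacing the separable $\min_\xi\!+\!\min_\theta$ by a joint $\min_{\xi,\theta}$, using linearity of $\tL$ in $\lambda$ to swap $\tfrac1T\sum_t\tL(\cdot;\lambda^t)$ for $\tL(\cdot;\bar\lambda)$, extending the inner minimization over $\theta\in\Theta$ to $\mu\in\Delta_\Theta$ (by linearity of $\tL$ in $\mu(\theta)$), bounding by the max over $\lambda\in\R_+^K$, and finally invoking Lemma~\ref{lem:app-lagrangian-opt-proxy-constrained}, gives
\[
\frac1T\sum_{t=1}^T\tL(\xi^t,\theta^t;\lambda^t)\;\le\;\max_{\lambda\in\R_+^K}\min_{\xi\in\cR,\,\theta\in\Theta}\tL(\xi,\theta;\lambda)+\epsilon_\theta\;\le\;g(\tilde\theta^*)+\epsilon_\theta.
\]
Combining this with the surrogate domination $\tL(\xi^t,\theta^t;\lambda^t)\ge\cL(\xi^t,\theta^t;\lambda^t)$ and the $\lambda$-regret bound \eqref{eq:lambda-ogd-1} yields the master inequality
\[
\frac1T\sum_{t=1}^T\cL(\xi^t,\theta^t;\lambda')\;\le\;g(\tilde\theta^*)+\epsilon_\theta+\epsilon_\lambda\qquad\text{for every }\lambda'\in\Lambda.
\]

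Everything now follows by choosing test multipliers. Taking $\lambda'=\0$ gives $\E_{\theta\sim\bar\mu}[g(\theta)]=\tfrac1T\sum_t g(\theta^t)\le g(\tilde\theta^*)+\epsilon_\theta+\epsilon_\lambda$, the optimality claim. For the $\phi$-constraints, pick $j'\in\argmax{j\in[J]}\phi^j(\bar\xi)$ and put all the mass $\kappa$ on the $j'$-th coordinate of $\lambda'$; since $\cL(\xi^t,\theta^t;\kappa e_{j'})=g(\theta^t)+\kappa\phi^{j'}(\xi^t)$, using $g(\theta^t)\ge 0$, $g(\tilde\theta^*)\le B_g$, and Jensen's inequality with convexity of $\phi^{j'}$ gives $\max_{j}\phi^j(\bar\xi)\le(B_g+\epsilon_\theta+\epsilon_\lambda)/\kappa$. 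For the decoupling constraints, pick $k'\in\argmax{k\in[K]}\bigl(R_k(\bar\mu)-\bar\xi_k\bigr)$ (the bound is trivial if this is $\le 0$), put mass $\kappa$ on coordinate $J+k'$, and argue identically — using linearity of $\cL$ in both $\bR(\theta)$ and $\xi$ — to get $\max_k\bigl(R_k(\bar\mu)-\bar\xi_k\bigr)\le(B_g+\epsilon_\theta+\epsilon_\lambda)/\kappa$. Finally, applying Lemma~\ref{lem:helper-Lipchitz} to each $\phi^j$ with the pair $(\bR(\bar\mu),\bar\xi)$ and the two displayed bounds gives
\[
\phi^j(\bR(\bar\mu))\;\le\;\phi^j(\bar\xi)+L\max_k\bigl(R_k(\bar\mu)-\bar\xi_k\bigr)_+\;\le\;(L+1)\bigl(B_g+\epsilon_\theta+\epsilon_\lambda\bigr)/\kappa,
\]
as claimed.

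The main obstacle — and the only place the surrogate/true mismatch truly bites — is the step that glues the $\theta$-player's surrogate world to the $\lambda$-player's true world, i.e.\ producing the master inequality: it relies essentially on surrogate domination $\tL\ge\cL$, which in turn needs the monotonicity of the $\phi^j$'s so that the decoupling multipliers stay non-negative. A secondary point, and the reason the statement carries a $B_g$ rather than a sharper dual-norm factor, is that (unlike the oracle-based Theorem~\ref{thm:convergence-ideal-general-constrained}) one cannot set $\kappa$ proportional to $\|\lambda^*\|_1$ for an optimal dual solution: the saddle value $\max_\lambda\min_{\xi,\theta}\tL$ is governed by the \emph{surrogate} rates while the constraint slack is measured with the \emph{true} rates, so no clean complementary-slackness argument is available. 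This is precisely why, in the corollary Theorem~\ref{thm:convergence-surrogate-constrained}, $\kappa$ must be taken to grow like $T^\omega$ to drive the violation to zero, at the cost of the slower $T^{-(1/2-\omega)}$ optimality rate. One should also keep the iterates $\xi^t$ (hence $\bar\xi$) bounded so that Lemma~\ref{lem:helper-Lipchitz} applies; as noted in the footnote, this is arranged by bounding the first $J$ coordinates of $\Lambda$ away from zero.
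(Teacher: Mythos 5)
Your proposal is correct and follows essentially the same route as the paper's own proof: the same chaining of the three equilibrium conditions with surrogate domination $\tL\ge\cL$ and Lemma \ref{lem:app-lagrangian-opt-proxy-constrained} to obtain the master inequality, the same choices of test multipliers ($\lambda'=\0$, mass $\kappa$ on a single $\phi^{j'}$ or decoupling coordinate), and the same use of Jensen's inequality and Lemma \ref{lem:helper-Lipchitz} to convert the slack bound into the feasibility bound. The only cosmetic difference (extending the inner minimization to $\Delta_\Theta$ by linearity) is harmless and equivalent to the paper's argument.
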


\begin{proof}[Proof of Theorem \ref{thm:convergence-surrogate-general-constrained}]
Let $\bar{\xi} = \frac{1}{T}\sum_{t=1}^T \xi^t$.

\textbf{Optimality.}
We have:
\begin{eqnarray}
\frac{1}{T}\sum_{t=1}^{T}\,
\tL(\xi^{t}, \theta^{t}; \lambda^{t})
&\leq&
\min_{\xi \in \cR}
\frac{1}{T}\sum_{t=1}^{T}\, 
\cL_1(\xi; \lambda^{t})
\,+\,
\frac{1}{T}\sum_{t=1}^{T}\, 
\tL_2(\theta^{t}; \lambda^{t}) 
~~~~~(\text{from \eqref{eq:xi-best-response-1}})
\nonumber
\\
&\leq&
\min_{\xi \in \cR}
\frac{1}{T}\sum_{t=1}^{T}\, 
\cL_1(\xi; \lambda^{t})
\,+\,
\min_{\theta \in \Theta}\,
\frac{1}{T}\sum_{t=1}^{T}\, 
\tL_2(\theta; \lambda^{t}) 
\,+\, \epsilon_\theta
~~~~~(\text{from \eqref{eq:theta-ogd-1}})
\nonumber
\\
&\leq&
\min_{\xi \in \cR, \theta \in \Theta}
\frac{1}{T}\sum_{t=1}^{T}\, 
\tL(\xi, \theta; \lambda^{t})
\,+\, \epsilon_\theta
\nonumber
\\
&=&
\min_{\xi \in \cR,\, \theta \in \Theta}\, 
\tL(\xi, \theta; \bar{\lambda})
\,+\, \epsilon_\theta
\nonumber
\\
&\leq&
\max_{\lambda \in \R_+^K}
\min_{\xi \in \cR,\, \theta \in \Theta}\, 
\tL(\xi, \theta; {\lambda})
\,+\, \epsilon_\theta
\nonumber
\\
&\leq&
g(\tilde{\theta}^*) \,+\, \epsilon_\theta.
\label{eq:app-surrogate-theta-constrained}
\end{eqnarray}
where the last step follows from Lemma \ref{lem:app-lagrangian-opt-proxy-constrained}.

We also have from \eqref{eq:lambda-ogd-1}:
\begin{equation}
\frac{1}{T}\sum_{t=1}^T\,
\cL(\xi^{t}, \theta^{t}; \lambda^t) \,\geq\,
\frac{1}{T}\sum_{t=1}^T\,
\cL(\xi^{t}, \theta^{t}; \lambda')
\,-\, \epsilon_\lambda.
\label{eq:app-surrogate-lambda-constrained}
\end{equation}
Combining \eqref{eq:app-surrogate-theta-constrained} and \eqref{eq:app-surrogate-lambda-constrained}, we have for any $\lambda' \in \Lambda$:
\begin{equation}
\frac{1}{T}\sum_{t=1}^T\,
\cL(\xi^{t}, \theta^{t}; \lambda')
\,\leq\,
g(\tilde{\theta}^*)
\,+\, \epsilon_\theta \,+\, \epsilon_\lambda.
\label{eq:app-lambda-star-surrogate-constrained}
\end{equation}
Setting $\lambda' = 0$ in \eqref{eq:app-lambda-star-surrogate-constrained} gives us:
\[
\E_{\theta \sim \bar{\mu}}[g(\theta)] \,\leq\, 
g(\tilde{\theta}^*)
\,+\, \epsilon_\theta \,+\, \epsilon_\lambda.
\]
This completes the proof of optimality.

\textbf{Feasibility.}
Recall that there are two sets of constraints
$\phi^j(R_k(\mu)) \leq 0, \forall j \in [J]$ and $R_k(\mu) \leq \xi_k, k \in [K]$ separately. We first look at the first set of constraints. Let $j' \in \text{argmax}_{j\in [J]}\,\phi^j(\bar{\xi})$ and set $\lambda'_{j'} = \kappa$ 
and $\lambda'_{j} = 0, \,\forall j \ne j', j \in [J+K]$ in \eqref{eq:app-lambda-star-surrogate-constrained} 
(note that $\lambda' \in \Lambda$). This
gives us:
\[
\E_{\theta \sim \bar{\mu}}[g(\theta)]
\,+\, \frac{\kappa}{T}\sum_{t=1}^T\,
\phi^{j'}(\xi^t)
\,\leq\, 
g(\theta^*)
\,+\, \epsilon_\theta \,+\, \epsilon_\lambda.
\]
By convexity of $\phi^j$ and using $g(\tilde{\theta}^*) \leq B_g$, we have:
\[
\phi^{j'}(\bar{\xi})
~\leq~
(B_g 
\,-\,
\E_{\theta \sim \bar{\mu}}[g(\theta)]
\,+\,
    \epsilon_\theta \,+\, \epsilon_\lambda)/\kappa
~\leq~
(B_g
\,+\,
    \epsilon_\theta \,+\, \epsilon_\lambda)/\kappa
\]
which implies:
\begin{equation*}
    \max_{j\in [J]}\,\phi^j(\bar{\xi}) \,\leq\, (B_g \,+\,
    \epsilon_\theta \,+\, \epsilon_\lambda)/\kappa.
\end{equation*}
Applying Lemma \ref{lem:helper-Lipchitz} to each $\phi^j$, we further get
\begin{equation}
    \max_{j\in [J]}\,\phi^j(\bR(\bar{\mu})) \,\leq\, 
    L\,\max_{k\in [K]}\,(R_{k}(\bar{\mu}) \,-\,
    \bar{\xi}_{k})_+
    \,+\,
    (B_g \,+\,
    \epsilon_\theta \,+\, \epsilon_\lambda)/\kappa.
    \label{eq:app-phi-bound-surrogate-constrained}
\end{equation}

For the second set of constraints, let $k' \in \text{argmax}_{k\in [K]}(R_k(\bar{\mu}) \,-\, \bar{\xi}_k)$.
If 
$R_k(\bar{\mu}) \,-\, \bar{\xi}_k \,\leq\, 0$,
then the feasibility result already holds. Otherwise, 
set $\lambda'_{J+k'} = \kappa$ 
and $\lambda'_{j} = 0, \,\forall j \ne J + k'$
in \eqref{eq:app-lambda-star-surrogate-constrained}, and following the same steps as above:
\begin{equation}
\max_{k\in [K]}\,(R_{k}(\bar{\mu}) \,-\,
    \bar{\xi}_{k})_+ \,\leq\, (B_g \,+\,
    \epsilon_\theta \,+\, \epsilon_\lambda)/\kappa.
    \label{eq:app-xi-bound-surrogate-constrained}
\end{equation}
Substituting 
\eqref{eq:app-xi-bound-surrogate-constrained}
back in \eqref{eq:app-phi-bound-surrogate-constrained}, we have:
\begin{equation*}
\max_{j\in [J]}\,\phi^j(\bR(\bar{\mu}))
\,\leq\,
(L+1)\,(B_g \,+\,
    \epsilon_\theta \,+\, \epsilon_\lambda)/\kappa,
\end{equation*}
which completes the feasibility proof.
\end{proof}

\subsection{Corollary for OGD on $\theta$ and $\lambda$}
\begin{proof}[Proof of Theorem \ref{thm:convergence-surrogate-constrained}]
We show that  the iterates of Algorithm \ref{algo:lagrangian-surrogate} form an approximate coarse-correlated equilibrium and apply Theorem \ref{thm:convergence-surrogate-general-constrained}. 

The best-response strategy of the $\xi$-player gives us:
\[
\frac{1}{T}\sum_{t=1}^{T}\,
\cL_1(\xi^{t}; \lambda^{t})
~=~
\frac{1}{T}\sum_{t=1}^{T}
\min_{\xi \in \cR}
\, 
\cL_1(\xi; \lambda^{t})
~\leq~
\min_{\xi \in \cR}
\frac{1}{T}\sum_{t=1}^{T}\, 
\cL_1(\xi; \lambda^{t}).
\]

We then derive no-regret guarantees for the updates of the $\theta$- and $\lambda$-player using standard convergence analysis for OGD \cite{Zinkevich03} and standard online-to-batch conversion arguments \cite{Cesa+14}. 
For the sequence of losses $-\cL(\xi^1, \cdot;\, \lambda^1), \ldots, -\cL(\xi^T, \cdot;\, \lambda^T)$ optimized by the $\theta$-player, setting $\eta = \frac{B_\Theta}{B_{\theta}\sqrt{2T}}$ in Algorithm \ref{algo:lagrangian-surrogate}, we get the following regret bound for the $\theta$-player (see Corollary 3 in \cite{Cotter+19} for the complete derivation). With probability $\geq 1 - \delta/2$ over draws of  stochastic gradients of $\tL_1$:
\begin{eqnarray*}
    \frac{1}{T}\sum_{t=1}^T\,\tL(\xi^t, \theta^t; \lambda^t) 
    &\leq& \min_{\theta \in \Theta}\,\frac{1}{T}\sum_{t=1}^T\,\tL(\xi^t, \theta; \lambda^t) \,+\, 2B_\Theta\,B_\theta\sqrt{\frac{1 + 16\log(2/\delta)}{T}}, 
\end{eqnarray*}
Similarly, for the sequence of losses $-\cL(\xi^1, \theta^1;\, \cdot), \ldots, -\cL(\xi^T, \theta^T;\, \cdot)$ optimized by the $\lambda$-player, setting $\eta = \frac{\kappa}{B_{\blambda}\sqrt{2T}}$ in Algorithm \ref{algo:lagrangian-surrogate}, we get the following regret bound. With probability $\geq 1 - \delta/2$ over draws of  stochastic gradients of $\cL$:
\[
    \frac{1}{T}\sum_{t=1}^T\,\cL(\xi^t, \theta^t; \lambda^t) 
    \,\geq\, \max_{\lambda \in \Lambda}\,\frac{1}{T}\sum_{t=1}^T\,\cL(\xi^t, \theta^t; \lambda) \,-\, 2\kappa\,B_\lambda\sqrt{\frac{1 + 16\log(2/\delta)}{T}}.
\]

An application of Theorem \ref{thm:convergence-surrogate-general-constrained} with $\kappa  =  (L+1)T^{\omega}$, for $\omega \in (0, 0.5)$ then gives us:
\begin{equation*}
\E_{\theta \sim \bar{\mu}}\left[g(\theta)\right] \,\leq\,
g(\tilde{\theta}^*)
\,+\, 2B_\Theta\,B_\theta\sqrt{\frac{1 + 16\log(2/\delta)}{T}}
\,+\, 2(L+1)B_\lambda\frac{\sqrt{1 + 16\log(2/\delta)}}{T^{1/2-\omega}}
\end{equation*}
and
\begin{equation*}
\max_{j\in [J]}\,\phi^j(\bR(\bar{\mu}))
\,\leq\,
\frac{B_g}{T^{\omega}}
\,+\,
 2B_\Theta\,B_\theta\frac{\sqrt{1 + 16\log(2/\delta)}}{T^{1/2 + \omega}}
 \,+\,
 2(L+1)B_\lambda\sqrt{\frac{1 + 16\log(2/\delta)}{T}},
\end{equation*}
which completes the proof.
\end{proof}
\end{document}